\documentclass{article}

\PassOptionsToPackage{numbers}{natbib}

\usepackage{arxiv}
\usepackage{researchpack}

\makeatletter
\renewcommand{\@maketitle}{%
  \vbox{%
    \hsize\textwidth
    \linewidth\hsize
    \vskip 0.1in
    \@toptitlebar
    \centering
    {\LARGE\sc \@title\par}
    \@bottomtitlebar
    \textsc{\undertitle}\\
    \vskip 0.2in
    \@author
    \vskip 0.4in \@minus 0.1in \center{\@date} \vskip 0.2in
  }
}
\makeatother

\usepackage[american]{babel}
\usepackage[utf8]{inputenc} %
\usepackage[T1]{fontenc}    %
\usepackage{hyperref}       %
\usepackage{url}            %
\usepackage{booktabs}       %
\usepackage{amsfonts}       %
\usepackage{nicefrac}       %
\usepackage{microtype}      %
\usepackage{subcaption}     %
\usepackage[table]{xcolor}         %
\usepackage{natbib}

\usepackage{amsmath}
\usepackage{amssymb}
\usepackage{mathtools}
\usepackage{amsthm}

\newtheorem{theorem}{Theorem}[section]
\newtheorem{lemma}[theorem]{Lemma}
\newtheorem{proposition}[theorem]{Proposition}
\newtheorem*{proposition*}{Proposition}
\newtheorem{corollary}[theorem]{Corollary}

\theoremstyle{definition}
\newtheorem{definition}[theorem]{Definition}

\usepackage{microtype}
\usepackage{graphicx}
\usepackage[dvipsnames]{xcolor}
\usepackage{booktabs}
\usepackage{hyperref}
\usepackage{url}
\usepackage{multirow,rotating,booktabs,arydshln}

\usepackage{thm-restate}
\usepackage{thmtools}
\usepackage[capitalise,nameinlink]{cleveref}
\usepackage{xspace}
\usepackage{enumitem}

\definecolor{acoolcolor}{HTML}{815b9b}

\hypersetup{
    colorlinks=true,
    citecolor=acoolcolor,
    linkcolor=acoolcolor,
    urlcolor=acoolcolor,
}

\newcommand{\Yset}{\Upsilon_\alpha(\vx)\xspace}
\newcommand{\Yseti}{\Upsilon_\alpha(\vx_i)\xspace}
\newcommand{\Yfwset}{\Upsilon^{\sf de}(\vx)\xspace}
\newcommand{\Yrev}{\Upsilon^{{\sf rev}}(\vx)\xspace}

\newcommand{\Cset}{\Gamma_\beta(\vx)\xspace}
\newcommand{\Cind}{\Gamma^{j}_{\beta_j}(\vx)\xspace}
\newcommand{\Cprod}{\vGamma^\times(\vx)\xspace}
\newcommand{\Cab}{\Gamma^{{\sf ab}}(\vx)\xspace}
\newcommand{\Crev}{\Gamma^{{\sf rev}}(\vx)\xspace}

\newcommand{\acronym}{COnformality, COnsistency, COnciseness\xspace}
\newcommand{\methodName}{\texttt{COCOCO}}
\newcommand{\method}{\methodName\xspace}
\newcommand{\EMethod}{\methodName$^{*}$\xspace}

\newcommand{\bastani}{{\tt RPB}\xspace}
\newcommand{\TaskOnly}{{\tt TO}\xspace}
\newcommand{\TaskPlusAbduction}{{\tt TAb}\xspace}
\newcommand{\ConceptsOnly}{{\tt CO}\xspace}
\newcommand{\ConceptsPlusDeduction}{{\tt CDe}\xspace}

\usepackage{pifont}
\newcommand{\cmark}{\ding{51}}
\newcommand{\xmark}{\ding{55}}
\newcommand{\CMARK}{\textcolor{teal}{\cmark}\xspace}
\newcommand{\XMARK}{\textcolor{WildStrawberry}{\xmark}\xspace}

\theoremstyle{plain}

\title{Concise and Logically Consistent Conformal Sets for Neuro-Symbolic Concept-Based Models}

\author{%
  {\bf Samuele Bortolotti$^{1}$ \quad
  Emanuele Marconato$^{1}$ \quad
  Andrea Pugnana$^{1}$} \\[0.4em]
  {\bf Andrea Passerini$^{1}$ \quad
  Stefano Teso$^{1,2}$} \\[0.6em]
  {\normalsize $^{1}$Department of Information Engineering and Computer Science, University of Trento, Italy} \\
  {\normalsize $^{2}$CIMeC, University of Trento, Rovereto, Italy} \\[0.3em]
  {\normalsize \texttt{\{name.surname\}@unitn.it}}
}

\hypersetup{
    pdftitle={Concise and Logically Consistent Conformal Sets for Neuro-Symbolic Concept-Based Models},
    pdfauthor={Samuele Bortolotti, Emanuele Marconato, Andrea Pugnana, Andrea Passerini, Stefano Teso},
    pdfkeywords={Neuro-Symbolic AI, Conformal Prediction, Concept-Based Models, Reasoning Shortcuts, Uncertainty Quantification, Logical Consistency, Trustworthy Machine Learning},
}

\everypar=\expandafter{\the\everypar\looseness=-1 }  %
\newcommand{\changed}[1]{\textcolor{black}{#1}}

\begin{document}
\maketitle

\begin{abstract}
    Neuro-Symbolic Concept-based Models (NeSy-CBMs) are a family of architectures that integrate neural networks with symbolic reasoning for enhanced reliability in high-stakes applications.  They work by first extracting high-level concepts from the input and then inferring a task label from these compatibly with given logical constraints.
    Yet, their label and concept predictions can be overconfident, making it difficult for stakeholders to gauge when the model's decisions can be trusted.
    We address this issue by integrating ideas from Conformal Prediction (CP), a framework providing rigorous, distribution-free coverage guarantees.
    We formalize three desiderata -- consistency, coverage, and conciseness -- that any conformal method for NeSy-CBMs should satisfy, and show that existing approaches fall short of at least one. We then introduce \method, a post-hoc framework that conformalizes concepts and labels jointly and reconciles them via \emph{a single deduction–abduction revision step}. \method satisfies all three desiderata, retains distribution-free coverage, is robust to imperfect knowledge and supports user-specified size budgets.
    Our experiments on $8$ data sets highlight how \method compares favorably against competitors and natural baselines in terms of performance and set size.
\end{abstract}

\section{Introduction}

Neuro-Symbolic AI (NeSy) architectures integrate neural networks with symbolic reasoning, and as such are prime candidates for designing reliable-by-construction AI systems \citep{de2021statistical, garcez2022neural}.
We focus on Neuro-Symbolic Concept-based Models (\textit{\textbf{NeSy-CBMs}}), a general family of NeSy classifiers that distinguish between the neural and reasoning steps at the architectural level~\citep{giunchiglia2022deep, dash2022review, knabs2026bottle, marconato2026symbol}.
Given a low-level input, a NeSy-CBM first distills it into a set of high-level \textit{\textbf{concepts}} using a neural backbone, and then applies differentiable logic reasoning to infer a \textit{\textbf{task prediction}} that complies to constraints we wish to hold.  E.g., if tasked with computing the sum of two MNIST digits \citep{lecun1998mnist}, it would first predict the corresponding digits (the concepts) and then apply the rule of addition (the constraint) \citep{manhaeve2018deepproblog}.  

Despite being geared for reliability, NeSy-CBMs can still be overconfident \citep{marconato2024bears, vankrieken2025neurosymbolic}, making their predictive uncertainty unreliable, see \cref{fig:overview} (Left).
This hinders their application to high-stakes tasks, where one often requires machine learning models to be aware of their own mistakes~\citep{DBLP:conf/aaai/RuggieriP25}. Moreover, even if a few NeSy approaches correct overconfidence in a principled manner (e.g., \citep{marconato2024bears,vankrieken2025neurosymbolic}), their predicted probabilities can be difficult to grasp for practitioners and end-users~\citep{rastogi2022deciding}. 

We address these limitations with \textit{\textbf{conformal prediction}} (CP), a general framework for constructing predictors that, rather than producing a point-wise probability estimate, output a set of plausible alternatives with finite-sample and distribution-free \textit{\textbf{coverage guarantees}} \citep{shafer2008conformal}.
Specifically, we distill three desiderata %
-- \textit{\textbf{consistency}}, \textit{\textbf{coverage}}, and \textit{\textbf{conciseness}} -- that conformal NeSy methods ought to satisfy, and introduce \method (\acronym), a novel and general conformal approach that can turn any NeSy-CBM into a full-fledged conformal NeSy predictor satisfying these desiderata.
In a nutshell, \method implements conformal prediction at the concept and task level and combines them through a post-processing step that ensures the resulting prediction sets are logically consistent with each other.  
Moreover, we show how to leverage \textit{e-values}~\citep{shafer2019game, vovk2021values, grunwald2020safe, ramdas2025hypothesis} to ensure the prediction sets~\citep{vovk2025conformal, gauthier2025values, gauthier2025Conformal, gauthier2026Conformal} attain maximal coverage for any given size budget \changed{while remaining consistent with prior knowledge}.
Our extensive experiments -- on eight image and text datasets and two NeSy CBMs -- show that \method achieves high coverage while keeping the size of the prediction sets under control, even in the presence of noisy, incorrect, or underspecified knowledge.

\textbf{Our Contributions}. %
We \textit{i}) formulate three natural desiderata -- consistency, coverage and conciseness -- that any conformal method for NeSy-CBMs ought to satisfy; %
\textit{ii}) design \method, a post-hoc framework that applies CP jointly at the concept and task levels and reconciles the two via a \emph{single deduction–abduction revision step}; %
\textit{iii}) prove \method satisfies all three desiderata: consistency by construction, distribution-free coverage whose lower bound degrades linearly under imperfect knowledge, and optimal conciseness; 
\textit{iv}) show that \method enables user-specified size budgets with guarantees;
\textit{v}) evaluate \method on $8$ NeSy data sets with different characteristics, highlighting the promise of \method in terms of consistency, coverage and predictive set size \changed{compared to natural competitors}.

\begin{figure*}[!t]
    \centering
    \includegraphics[width=0.8\linewidth]{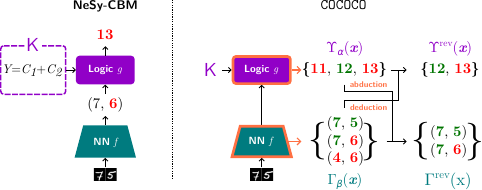}
    \caption{\textbf{Left}: the task and concept predictions of NeSy-CBMs can make confident mistakes (\eg $6$ and $13$, in \textbf{\textcolor{red}{red}}), making it difficult to gauge the (un)reliability of their predictions.
    \textbf{Right}: \method enables any NeSy-CBM to output task- and concept-level prediction sets with coverage guarantees for any desired size, while ensuring their mutual consistency via an abduction-deduction refinement step.  E.g., in the picture the task label $11$ (and the concept label $(4, 6)$) is removed because it is not logically valid, leaving the correct outputs (in \textbf{\textcolor{ForestGreen}{green}}) untouched with high probability.}
    \label{fig:overview}
\end{figure*}

\section{Preliminaries}
\label{sec:preliminaries}

\textbf{Notation}. We write scalar constants $x$ in lower-case, random variables $X$ in upper case, vector constants and variables in bold typeface ($\vx$ and $\vX$), %
and sets $\calX$, $\Upsilon$ in upper-case italics/greek. Given a function %
$f: \calX \to \calY$, we indicate the image of a set $\calX' \subseteq \calX$ as $f(\calX')$ and the pre-image of a set $\calY' \subseteq \calY$ as $f^{-1}(\calY')$. We write $\Delta_\calS$ for the simplex of distributions over $\calS$ and $[n]$ for $\{1, \ldots n\}$.

\textbf{Neuro-Symbolic AI}. NeSy approaches differ wildly in terms of aim and architecture \citep{de2021statistical, garcez2022neural}.
We focus on Neuro-Symbolic Concept-based Models (\textit{\textbf{NeSy-CBMs}} for short), a family of classifiers designed for reliability \citep{giunchiglia2022deep, dash2022review, marconato2026symbol}.  All NeSy-CBMs extract intermediate, high-level \textit{concepts}\footnote{Also known as \textit{neural predicates} \citep{manhaeve2021neural}.} from the input and infer predictions by \textit{reasoning} over the concepts compatibly with user-specified prior knowledge.  The latter is encoded as a logic formula $\BK$.
A NeSy-CBM~\citep{bortolotti2025shortcuts, marconato2023not, marconato2026symbol} maps inputs $\vx \in \calX$ to $k$ concepts $\vc = (c_1, \ldots, c_k) \in \calC$ and then to labels $\vy \in \calY$ using two components.
The \textit{\textbf{concept extractor}} $f: \calX \rightarrow \Delta_\calC$ is a neural network mapping the input $\vx \in \calX$ to a distribution over concepts $p_\theta(\vC \mid \vx)$.
The \textit{\textbf{inference layer}} $g: \calC \to \Delta_\calY$ maps concept vectors to a distribution over labels $p(\vY \mid \vc, \BK)$ according to given and fixed prior knowledge \BK.
With slight abuse of notation, we write $(g \circ f)$ to indicate the conditional distribution $p(\vY \mid \vx; \BK)$ obtained by passing the concept probabilities output by $f$ to the inference layer $g$.
In practice, how these are combined depends on the specific
NeSy-CBM~\citep{marconato2026symbol}; we describe the two we use experimentally below, after a running example.

\begin{example}
    \label{ex:mnist-add}
    In \MNISTAdd \citep{manhaeve2018deepproblog}, the goal is to predict the sum of two \MNIST \citep{lecun1998mnist} digits.
    Here, the concepts $\vc \in \calC := \{0, \dots, 9\}^2$ represent the two digits and the label $y \in \calY := \{0, \dots, 18\}$ is their sum.  The knowledge $\BK = (y = c_1 + c_2)$ specifies\footnote{This constraint can be naturally written in propositional and first-order logic \citep{manhaeve2018deepproblog}.} that $y$ should be the arithmetic sum of the two concepts. E.g., given an input $\vx = \MSeven\MFive$, a NeSy-CBM applies $f$ to infer a distribution over digits and then $g$ to obtain one over sums, typically assigning sums that violate $\BK$ (\eg $\vc = (7, 5)$, $y = 11$) lower probability.
\end{example}

We focus on two representative NeSy-CBMs. \emph{\textbf{DeepProbLog}} (\DPL)~\citep{manhaeve2018deepproblog} models the inference layer $g$ using probabilistic logic semantics~\citep{de2015probabilistic}. This guarantees that the predictive distribution $p_\theta(\vY \mid \vc)$ always allocates zero mass to labels that violate the prior knowledge $\BK$, \eg incorrect sums. \DPL infers hard predictions by solving the MAP problem $\argmax_{\vy} p_\theta(\vy \mid \vc)$, using knowledge compilation \citep{darwiche2002knowledge, vergari2021compositional} for efficiency. \emph{\textbf{LogicTensorNetworks}} (\LTN) \citep{badreddine2022logic} implement the inference layer using fuzzy logic~\citep{donadello2017logic, van2020analyzing}, scoring labels by their fuzzy degree of satisfaction. 
Both architectures learn the parameters $\theta$ via end-to-end training: 
\changed{given a training set $\calD_\textrm{tr} = \{ (\vx_i, \vy_i^*) \}$ sampled from a ground-truth data distribution $p^*(\vX, \vC, \vY)$, \DPL maximizes the likelihood of the training data, while \LTN maximizes the fuzzy degree of satisfaction of the prior knowledge \BK evaluated over the ground-truth training data $\vy_i^*$.}  \changed{When concept annotations are available, they can be leveraged during training by combining the original loss with an average cross-entropy term over the concepts.}

\textbf{\changed{Concept independence and reasoning shortcuts}}.  We highlight two key points.
First, it is common practice to implement the concept extractor $f$ by stacking per-concept neural classifiers~\citep{manhaeve2018deepproblog, badreddine2022logic, marconato2026symbol}. This means the distribution $p_\theta(\vC \mid \vx)$ factorizes as $\prod_i p_\theta(C_i \mid \vx)$, that is, the predicted concepts are \textit{\textbf{conditionally independent}}: $C_i \indep C_j \mid \vX$ for all $i \ne j$ \citep{vankrieken2025neurosymbolic, van2024independence}.
Second, typically ground-truth concept annotations $\vc^*_i$ are not used during training:  as a result, NeSy-CBMs often solve the task by learning concepts that do not match human semantics, a phenomenon known as \textit{\textbf{reasoning shortcuts}} (RSs) \citep{marconato2023not, marconato2026symbol}. Models affected by such RSs output confident but wrong concept predictions even when their label accuracy is high, \changed{and thus suffer from impaired} interpretability and reliability~\citep{marconato2024bears}.

\textbf{Conformal Prediction} (CP) is a general framework for constructing set-valued predictions with \emph{finite-sample} and \emph{distribution-free coverage guarantees}~\citep{shafer2008conformal}. Such guarantees are especially relevant in decision support systems~\citep{detoni2024towards, straitouri2024designing}, as well as in high-stakes applications where human judgment is needed~\citep{arnaiz2025towards, straitouri2024controlling} or where statistical reliability assurances are required~\citep{boger2025functional}.

Given a predictor $p_\theta(\vY \mid \vX)$ and a calibration dataset $\calD_{\mathrm{cal}} = \{(\vx_i, \vy_i^*)\}_{i=1}^n$ of exchangeable examples, CP maps a test input $\vx_i \in \calX$ to a prediction set $\Yseti \subseteq \calY$ such that:
\[
    \Pr( \vy^*_i \in \Yseti) \geq 1 - \alpha .
    \label{eq:basic-conformal-guarantee}
\]
CP relies on a nonconformity score $s(\vx, \vy)$ that quantifies how atypical a candidate output $\vy$ is for an input $\vx$, \eg a typical choice being $1 - p(\vy^* \mid \vx)$~\citep{DBLP:conf/ecml/PapadopoulosPVG02}.  
Let $s_i = s(\vx_i, \vy_i^*)$ for $i \in [n]$ be the nonconformity score of each calibration example, and let $q_{1-\alpha}$ be the $\lceil (n+1)(1-\alpha) \rceil / n$ empirical quantile of $\{s_i\}_{i=1}^n$. The factor $(n+1)/n$ is the standard finite-sample correction that treats the test point as exchangeable with the calibration set, and it ensures the exact finite-sample guarantee~\cref{eq:basic-conformal-guarantee}~\citep{DBLP:conf/ecml/PapadopoulosPVG02, DBLP:journals/ftml/AngelopoulosB23}.
Then, the conformal prediction set for a test input $\vx_i$ %
can be built as:
\[
    \Yseti = \{ \vy \in \calY : s(\vx_i, \vy) \leq q_{1-\alpha} \} .
    \label{eq:basic-conformal-set}
\]

\section{Building Blocks of Conformal NeSy-CBMs}
\label{sec:conformal-nesy}

NeSy-CBMs are geared for reliability in high-stakes applications~\citep{marconato2026symbol, giunchiglia2022learning, giunchiglia2023road}, where it is essential that predictions satisfy domain-specific constraints $\BK$~\citep{manhaeve2018deepproblog, ahmed2022semantic, kurscheidt2025pal, kurscheidt2026theory}. 
Yet, their predictive uncertainty is both unreliable and difficult to understand.  In fact, they are often miscalibrated~\citep{marconato2024bears, vankrieken2025neurosymbolic, DBLP:conf/aaai/RuggieriP25}, meaning stakeholders cannot use their uncertainty to assess whether to trust their predictions \citep{marconato2024bears, vankrieken2025neurosymbolic}.  Moreover, predictive uncertainty is difficult for users to understand in the first place \citep{rastogi2022deciding}.  Both issues are problematic, especially for NeSy-CBMs affected by reasoning shortcuts, in which case users ought to be made aware that the learned concepts are not human aligned, yet these are often predicted wrong with confidence \citep{marconato2026symbol, marconato2023not}.

This calls for a different strategy that can simultaneously handle uncertainty while providing a user-friendly interface for reliability assessment.  This is where CP comes into play, as it offers a principled, distribution-free remedy that does not involve communicating probabilities to users.  We outline the benefits we wish to obtain from such a combination by formalizing three desiderata that \emph{any} NeSy conformal approach ought to satisfy:
\begin{itemize}[leftmargin=2em,itemsep=2pt,topsep=2pt]

    \item[\textbf{D1}] \emph{Consistency}: predicted concepts and labels must be logically consistent with the knowledge $\BK$. %

    \item[\textbf{D2}] \emph{Coverage}: prediction sets must enjoy distribution-free, finite-sample marginal coverage guarantees on \emph{both} labels and concepts -- so as to ensure stakeholders can trust the prediction sets.

    \item[\textbf{D3}] \emph{Conciseness}: prediction sets should be as small as possible, retaining only relevant alternatives -- large sets are uninterpretable and operationally useless~\citep{straitouri2024designing}. \end{itemize}

In~\cref{sec:method}, we will introduce \method, a novel NeSy CP strategies that \textit{does} complies with \textbf{D1}--\textbf{D3}.
Yet, integrating CP into NeSy-CBMs without compromising any desideratum is non-trivial.  In fact, as we will show in \cref{sec:one-sided-prior-work}, any ``\textit{\textbf{one-sided}}'' CP strategy that independently targets either the concept extractor or the inference layer is insufficient to satisfy all desiderata.

\textbf{Inference layer operators}. Before proceeding, we introduce two operators
that all methods we will discuss build on, together with their coverage properties.  We denote with $g^\dagger$ the $\argmax$ of the inference layer $g$ and its preimage as $g^{-\dagger}$, that is:
\[
    \textstyle
    g^\dagger(\vc) := \argmax_{\vy \in \calY} g(\vc) \subseteq \calY, \qquad
    g^{-\dagger}(\vy) := \{ \vc \in \calC: \vy \in g^\dagger(\vc) \}.
    \label{eq:inverse-g-inference-layer-def}
\]
Intuitively, $g^{\dagger}$ \textit{\textbf{deduces}} a (subset of) label(s) from given concepts, while $g^{-\dagger}$ \textit{\textbf{abduces}} a (subset of) concepts from given label(s).
Two properties of $g^\dagger$ play a central role in our coverage guarantees:

\begin{definition}[Deductive \& Abductive Soundness]
\label{def:soundness}
Let $p^*(\vx, \vc, \vy)$ be the joint data distribution. A NeSy-CBM $(f, g)$ satisfies \emph{deductive soundness} if, for every $(\vc^*, \vy^*) \in \mathrm{supp}(p^*(\vx, \vc, \vy))$, $g^\dagger(\vc^*) = \vy^*$. It satisfies \emph{abductive soundness} if, for every $(\vc^*, \vy^*) \in \mathrm{supp}(p^*(\vx, \vc, \vy))$, $\vc^* \in g^{-\dagger}(\vy^*)$.
\end{definition}

Deductive and abductive soundness depend on both the prior knowledge $\BK$ and on how the inference layer $g$ implements it.
In particular, they hold when: (\textit{i}) the knowledge is \textbf{\emph{correct and complete}}, meaning that it allows the ground-truth labels and concepts to be retrieved from one another without errors (as in \MNISTAdd, see \cref{ex:mnist-add}), and (\textit{ii}) the inference layer $g$ \textbf{\emph{preserves this property}}. %
The latter holds for both %
\DPL
and 
\LTN%
\footnote{We specifically refer to versions of \LTN for which this property holds, namely, those that use t-(co)norms~\citep{van2020analyzing}.} -- and in general for most NeSy-CBMs~\citep{ahmed2022semantic, giunchiglia2024ccn, kurscheidt2025pal}.
However, %
these properties may not hold when the knowledge is incomplete or erroneous, as for some \changed{of the datasets used in our experiments (\cref{sec:experiments})}, or when the inference layer is learned from data~\citep{koh2020concept, daniele2023deep, shindo2023thinking, hindo2024learning, wust2024pix2code, tang2023perception, debot2024interpretable}.

\textbf{Label and concept sets via standard CP}.  Given a calibration set $\calD_{\vY} = \{(\vx_i, \vy_i^*)\}$ of exchangeable examples and a miscoverage level $0 < \alpha < 1$, \changed{we can obtain a}
\textbf{\textit{label set}} $\Yset \subseteq \calY$ satisfying:
\[
    \Pr( \vy^* \in \Yset ) \ge 1 - \alpha \, .
    \label{eq:task-only-guarantee}
\]
\changed{simply by applying CP (\cref{eq:basic-conformal-set}) to the NeSy-CBM's label distribution $p_\theta(\vY \mid \vx, \BK)$.}
\changed{Doing the same for the concept extractor $f$ is not as straightforward, in that doing so requires} concept annotations $\vc^*$ on the entire concept vector, %
which are seldom available~\citep{najjar2025scaling, ramalingam2024uncertainty}. \changed{We work around this issue by}
\changed{exploiting} the conditional independence of concepts.
\changed{Specifically,}
for each concept $C_j$ we collect a \textit{per-concept} calibration set $\calD_{C_j} = \{(\vx_i, c_{ij}^*)\}$, build a conformal set $\Cind$ with miscoverage level $\beta_j$%
, and use the latter to construct a \textit{\textbf{product concept set}} as 
\[
    \Cprod = \bigtimes_{j=1}^{k} \Cind = \Gamma^1_{\beta_1}(\vx) \times \cdots \times \Gamma^k_{\beta_k}(\vx).
    \label{eq:product-concept-set}
\]
Notably, the product set inherits coverage guarantees, as shown \changed{by} the following Proposition:
\begin{proposition}[Coverage of $\Cprod$]
    \label{propr:marginal-coverage-cprod}
    For any NeSy-CBM $(f,g)$ and $0 < \beta_j < 1$, let $\beta = \sum_{j=1}^{k} \beta_j$. If each $\Cind$ satisfies $\Pr(\vc_j^* \in \Cind) \ge 1-\beta_j$, then for every $(\vx, \vc^*, \vy^*) \in \mathrm{supp}(p^*(\vx, \vc, \vy))$:
    \[
        \Pr(\vc^* \in \Cprod) \ge 1 - \beta.
        \label{eq:concept-only-guarantee}
    \]
\end{proposition}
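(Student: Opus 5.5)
The plan is a union bound over the $k$ coordinates, with no use of concept independence. The key observation is that membership in a Cartesian product is coordinatewise: $\vc^* \in \Cprod$ holds if and only if $c_j^* \in \Cind$ for every $j \in [k]$, by the definition in \cref{eq:product-concept-set}. Hence the miscoverage event of the product set is the union of the per-concept miscoverage events,
\[
    \{\vc^* \notin \Cprod\} = \bigcup_{j=1}^{k} \{c^*_j \notin \Cind\}.
\]

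First, I fix the probability space. Every probability is taken jointly over the calibration sets $\calD_{C_1}, \dots, \calD_{C_k}$ and the test triple $(\vx, \vc^*, \vy^*) \sim p^*$. This is the same space in which each per-concept guarantee $\Pr(c_j^* \in \Cind) \ge 1-\beta_j$ is stated. That guarantee follows from standard split CP applied to $p_\theta(C_j \mid \vx)$ with exchangeable $\calD_{C_j}$, but here it is simply taken as a hypothesis. I would read the phrase ``for every $(\vx,\vc^*,\vy^*)$ in the support'' as saying the bound holds marginally for test points drawn from $p^*$, not conditionally on a fixed point.

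Second, I apply Boole's inequality:
\[
    \Pr(\vc^* \notin \Cprod) \le \sum_{j=1}^{k} \Pr(c_j^* \notin \Cind) \le \sum_{j=1}^{k} \beta_j = \beta.
\]
Taking complements gives $\Pr(\vc^* \in \Cprod) \ge 1-\beta$.

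The only delicate step is making sure all $k$ marginal statements live on one common probability space so that the union bound applies. The calibration sets may overlap or be dependent, but the union bound requires no independence, so any coupling is fine. The conditional independence of concepts matters only for building each $\Cind$ from per-concept scores, not for the coverage argument itself. If $\beta \ge 1$ the bound is vacuous but still true.
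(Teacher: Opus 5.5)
Your proposal is correct and matches the paper's proof: both note that membership in the Cartesian product $\Cprod$ is coordinatewise, so the miscoverage event is the union of the per-concept miscoverage events, and both apply Boole's inequality to get $\Pr(\vc^* \notin \Cprod) \le \sum_j \beta_j = \beta$. Your remarks about working on one common probability space and not needing independence are accurate clarifications that the paper leaves implicit.
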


All proofs are reported in \cref{sec:proofs} due to space constraints.  \changed{\cref{eq:concept-only-guarantee} implies} that, in order to ensure $\Cprod$ meets joint coverage $1 - \beta$, it is sufficient to choose $\beta_j = \beta/k$ for all $j$.  One downside is that, as $k$ grows, \changed{this allocates vanishing miscoverage to each per-concept set $\Cind$, inflating them and $\Cprod$ as a result}. We will see in \cref{sec:method} that e-values avoid this issue entirely.

In what follows, we use $\Cset$ to denote any concept set satisfying $\Pr(\vc^* \in \Cset) \ge 1 - \beta$. In our default construction, this is $\Cprod$, but our method applies to any concept set with this guarantee, including those built using a calibration set over the entire concept vector $\vc^*$.

\textbf{Abductive and deductive operators}. With these notions in place, we can map between $\Yset$ and $\Cset$ using the \changed{inference layer operators introduced in \cref{eq:inverse-g-inference-layer-def}, that is}:
\[
    \Cab := g^{-\dagger}(\Yset),
    \qquad
    \Yfwset := g^{\dagger}(\Cset).
    \label{eq:abduction-deduction-ops}
\]
$\Cab$ collects all concept vectors that logically entail at least one label in $\Yset$ via \textit{abduction}~\citep{zhou2019abductive, jang2021training, van2022anesi}; $\Yfwset$ collects all labels derivable from concepts in $\Cset$ via \textit{deduction}~\citep{ramalingam2024uncertainty}. These operators transfer coverage guarantees across levels, as shown \changed{by} the following propositions:

\begin{proposition}[Coverage of $\Cab$]
\label{prop:nesy-abduction-coverage}

For any NeSy CBM $(f,g)$, let $0<\alpha <1$. If $\Yset$ is a label conformal set \cref{eq:task-only-guarantee} and $\Cab$ is the set constructed as~\cref{eq:abduction-deduction-ops}, then for every $(\vx, \vc^*, \vy^*) \in \mathrm{supp}(p^*(\vx, \vc, \vy))$: %
\[
    (1 - \alpha) \delta_\mathrm{ab} \le \Pr(\vc^* \in \Cab) \le \alpha + \delta_\mathrm{ab}
\]
where $\delta_\mathrm{ab} := \Pr(\vc^* \in \Cab \mid \vy^* \in \Yset)$ is the probability that abduction %
recovers the ground-truth concepts $\vc^*$ provided the label set $\Yset$ contains the correct label $\vy^*$.  Moreover, if the NeSy-CBM satisfies abductive soundness, the lower bound tightens to $\Pr(\vc^* \in \Cab) \ge 1 - \alpha$.
\end{proposition}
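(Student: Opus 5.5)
We condition on whether the label set covers the true label. Let $A$ denote the event $\{\vc^* \in \Cab\}$ and $B$ the event $\{\vy^* \in \Yset\}$. By the law of total probability,
\[
    \Pr(A) = \Pr(A \mid B)\Pr(B) + \Pr(A \cap B^c) = \delta_\mathrm{ab}\Pr(B) + \Pr(A \cap B^c).
\]
The two bounds then follow from elementary estimates on each term, using the conformal guarantee \cref{eq:task-only-guarantee}, i.e.\ $\Pr(B) \ge 1-\alpha$.

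\textbf{Lower bound.} Drop the nonnegative term $\Pr(A \cap B^c)$. Then apply $\Pr(B) \ge 1-\alpha$ to obtain $\Pr(A) \ge (1-\alpha)\delta_\mathrm{ab}$.

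\textbf{Upper bound.} Bound $\Pr(A \cap B^c) \le \Pr(B^c)$ and $\Pr(B) \le 1$. This gives $\Pr(A) \le \delta_\mathrm{ab} + \Pr(B^c)$. The main subtlety is that standard split CP gives only a lower bound on coverage, so $\Pr(B^c) \le \alpha$ is exactly what \cref{eq:task-only-guarantee} supplies. This yields $\Pr(A) \le \alpha + \delta_\mathrm{ab}$.

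\textbf{Tightened bound under abductive soundness.} Here I would show that $B \subseteq A$ pointwise. Suppose $\vy^* \in \Yset$. Abductive soundness (\cref{def:soundness}) gives $\vc^* \in g^{-\dagger}(\vy^*)$ for every pair $(\vc^*, \vy^*)$ in the support. Since the preimage of a set is the union of the preimages of its elements, $g^{-\dagger}(\vy^*) \subseteq g^{-\dagger}(\Yset) = \Cab$. Hence $\delta_\mathrm{ab} = 1$, and $\Pr(A) \ge \Pr(B) \ge 1-\alpha$.

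The only care needed is the well-definedness of the conditional probability. If $\Pr(B) = 0$, then $\alpha \ge 1$, which is excluded by the assumption $\alpha < 1$, so the conditioning is legitimate.
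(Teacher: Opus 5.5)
Your proposal is correct and follows the paper's proof essentially step for step. You condition on $\vy^* \in \Yset$ via the law of total probability, drop the nonnegative complementary term and use $\Pr(\vy^* \in \Yset) \ge 1-\alpha$ for the lower bound, bound the complementary term by $\alpha$ and $\Pr(\vy^* \in \Yset)$ by $1$ for the upper bound, and get $\delta_\mathrm{ab}=1$ under abductive soundness from $\vc^* \in g^{-\dagger}(\vy^*) \subseteq g^{-\dagger}(\Yset)$, exactly as in the paper's auxiliary proposition (your remark that $1-\alpha>0$ makes the conditioning well defined is a small detail the paper leaves implicit).
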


\begin{proposition}[Coverage of $\Yfwset$]
\label{prop:nesy-forward-coverage}
For any NeSy CBM $(f,g)$, let $0 < \beta < 1$. If $\Cset$ is a concept conformal set~\cref{eq:concept-only-guarantee} and $\Yfwset$ is the set constructed as~\cref{eq:abduction-deduction-ops}. Then, for every $(\vx, \vc^*, \vy^*) \in \mathrm{supp}(p^*(\vx, \vc, \vy))$:
\[
    (1 - \beta) \delta_\mathrm{de} \le \Pr(\vy^* \in \Yfwset) \le \beta + \delta_\mathrm{de}
\]
where $\delta_\mathrm{de} := \Pr(\vy^* \in \Yfwset \mid \vc^* \in \Cset)$ measures how likely it is that deduction recovers the true label $\vy^*$ given that the concept set $\Cset$ contains the correct concepts $\vc^*$.  Moreover, under deductive soundness, the bound tightens to $ \Pr(\vy^* \in \Yfwset) \ge 1 - \beta$.
\end{proposition}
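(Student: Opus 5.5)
The proof conditions on the event that the concept set covers the truth. Write $A := \{\vc^* \in \Cset\}$ and $B := \{\vy^* \in \Yfwset\}$, so that $\delta_\mathrm{de} = \Pr(B \mid A)$. By the law of total probability,
\[
    \Pr(B) = \Pr(B \mid A)\Pr(A) + \Pr(B \cap A^c) = \delta_\mathrm{de}\Pr(A) + \Pr(B \cap A^c).
\]
We use \cref{eq:concept-only-guarantee} in the form $\Pr(A) \ge 1-\beta$, equivalently $\Pr(A^c) \le \beta$. This is assumed of $\Cset$; for the product construction it is supplied by \cref{propr:marginal-coverage-cprod}.

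For the lower bound, drop the nonnegative term $\Pr(B \cap A^c)$. Since $\Pr(A) \ge 1-\beta$, we get $\Pr(B) \ge \delta_\mathrm{de}(1-\beta)$.

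For the upper bound, use $\Pr(A) \le 1$ and $\Pr(B \cap A^c) \le \Pr(A^c) \le \beta$. This gives $\Pr(B) \le \delta_\mathrm{de} + \beta$.

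Both bounds are a direct replay of the argument for \cref{prop:nesy-abduction-coverage}, with the roles of labels and concepts swapped. The conditioning is well defined because $\Pr(A) \ge 1-\beta > 0$.

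For the deductive-soundness refinement, it suffices to show $\delta_\mathrm{de} = 1$, i.e., $A \subseteq B$ pointwise. Suppose $\vc^* \in \Cset$. Then, by monotonicity of images,
\[
    g^\dagger(\vc^*) \subseteq g^\dagger(\Cset) = \Yfwset.
\]
Deductive soundness (\cref{def:soundness}) says $g^\dagger(\vc^*) = \vy^*$ for every $(\vc^*, \vy^*)$ in the support. We read this as $g^\dagger(\vc^*) = \{\vy^*\}$, or at least $\vy^* \in g^\dagger(\vc^*)$, since $g^\dagger$ is set-valued. Hence $\vy^* \in \Yfwset$. Since the test triple lies in the support almost surely, $\Pr(B \mid A) = 1$, and therefore $\Pr(B) \ge \Pr(A) \ge 1-\beta$.

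The main obstacle is bookkeeping rather than mathematics. One point is interpreting $g^\dagger(\Cset)$ as the union of images $\bigcup_{\vc \in \Cset} g^\dagger(\vc)$, which makes the monotonicity step valid. The other is noting that the probability is taken over the joint randomness of calibration and test data, as in standard CP, so that the conditional $\delta_\mathrm{de}$ is defined with respect to that same joint law.
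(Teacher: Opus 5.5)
Your proposal is correct and follows essentially the same route as the paper. The shared argument applies the law of total probability on $A=\{\vc^*\in\Cset\}$, drops the second term for the lower bound, bounds it by $\beta$ for the upper bound, and shows $\delta_\mathrm{de}=1$ under deductive soundness exactly as the paper's auxiliary proposition does. Writing the second term as $\Pr(B\cap A^c)$ instead of $\Pr(B\mid\neg A)\Pr(\neg A)$, and reading soundness as $\vy^*\in g^\dagger(\vc^*)$ for the set-valued $g^\dagger$, are small but sensible clean-ups of the paper's argument.
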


\changed{Next, we show that $\Cab$ or $\Yfwset$, while enjoying coverage guarantees, are insufficient to construct a NeSy CP strategy satisfying \textbf{D1}--\textbf{D3} when used in isolation, while combining them through a revision step, as done by \method, does.}

\section{Consistency, Conformality and Conciseness}
\label{sec:method}

We \changed{begin by introducing}
\method (\acronym), a post-hoc framework for turning any NeSy-CBM into a conformal predictor satisfying all three desiderata \textbf{D1}--\textbf{D3}. The idea is as follows:  starting from a label set $\Yset$ and a concept set $\Cset$, we make them mutually consistent by intersecting \changed{the label set (resp. concept set) with $g^\dagger$ applied to the concept set (resp. $g^{-\dagger}$ to the label set).
We show that this requires only one revision step to converge.
} 

\textbf{Joint Revision Step}. Given $\Yset$ satisfying \cref{eq:task-only-guarantee} and $\Cset$ satisfying~\cref{eq:concept-only-guarantee}, \method computes:
\[
    \Yrev := \Yset \cap g^{\dagger}(\Cset),
    \qquad
    \Crev := \Cset \cap g^{-\dagger}(\Yset).
    \label{eq:y-c-rev}
\]

In other words, $\Crev$ retains only the concepts in $\Cset$ that \changed{logically entail} some label in $\Yset$, and $\Yrev$ retains only the labels in $\Yset$ that are entailed by some concept in $\Cset$. We highlight two key structural properties: a \emph{single} application of \cref{eq:y-c-rev} is enough to reconcile the two sets, and the resulting pair is \emph{maximally informative} among all consistent revisions of $(\Cset, \Yset)$. Both properties are made formal below.

\begin{proposition}[One-step consistency and optimality of \method]
\label{prop:one-step}
Let $R(\Gamma, \Upsilon) := \bigl(\Gamma \cap g^{-\dagger}(\Upsilon),\, \Upsilon \cap g^{\dagger}(\Gamma)\bigr)$ denote the joint revision operator. Then, for any $(\Cset, \Yset)$:
\begin{enumerate}[leftmargin=2em,itemsep=1pt,topsep=1pt]
    \item \emph{(One-step fixed point)} $R\bigl(R(\Cset, \Yset)\bigr) = R(\Cset, \Yset)$, so iterating the revision is unnecessary.
    \item \emph{(Optimality)} $(\Crev, \Yrev) = R(\Cset, \Yset)$ is the \emph{largest} pair $(\Gamma', \Upsilon') \subseteq \Cset \times \Yset$ such that $\Gamma' \subseteq g^{-\dagger}(\Upsilon')$ and $\Upsilon' \subseteq g^{\dagger}(\Gamma')$ -- \ie each label $\vy \in \Upsilon'$ is represented by at least a concept $\vc \in \Gamma'$, and vice-versa, according to the inference layer $g$, making it the largest mutually consistent subset of $(\Cset, \Yset)$. 
\end{enumerate}
\end{proposition}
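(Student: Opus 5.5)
The plan is to treat the inference layer purely as a binary relation and reduce everything to elementary set manipulations. Let $E := \{(\vc,\vy) \in \calC \times \calY : \vy \in g^\dagger(\vc)\}$. By \cref{eq:inverse-g-inference-layer-def}, for any $\Gamma \subseteq \calC$ and $\Upsilon \subseteq \calY$ we can write
\[
    g^\dagger(\Gamma) = \{\vy : \exists \vc \in \Gamma,\ (\vc,\vy)\in E\}, \qquad g^{-\dagger}(\Upsilon) = \{\vc : \exists \vy \in \Upsilon,\ (\vc,\vy)\in E\}.
\]
Two facts follow at once and will be used throughout. First, both maps are monotone with respect to inclusion. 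Second, they satisfy the usual image/preimage duality along $E$. Because $g^\dagger$ may return several labels when the $\argmax$ has ties, I deliberately avoid treating $g^\dagger$ as a function. This relational view is the one point where care is needed, and I expect it to be the only real obstacle; the rest is bookkeeping.

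Write $(\Gamma',\Upsilon') := R(\Cset,\Yset)$. I first show that this pair is mutually consistent, i.e.\ $\Gamma' \subseteq g^{-\dagger}(\Upsilon')$ and $\Upsilon' \subseteq g^\dagger(\Gamma')$.
\begin{itemize}
    \item Take $\vc \in \Gamma'$. Then $\vc \in \Cset$, and there is some $\vy \in \Yset$ with $(\vc,\vy) \in E$.
    \item Since $\vc \in \Cset$, the same pair witnesses $\vy \in g^\dagger(\Cset)$. Hence $\vy \in \Yset \cap g^\dagger(\Cset) = \Upsilon'$.
    \item The pair $(\vc,\vy)$ now witnesses $\vc \in g^{-\dagger}(\Upsilon')$.
    \item The symmetric argument, swapping the roles of concepts and labels, gives $\Upsilon' \subseteq g^\dagger(\Gamma')$.
\end{itemize}
The one-step fixed point then follows directly. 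We have
\[
    R(\Gamma',\Upsilon') = \bigl(\Gamma' \cap g^{-\dagger}(\Upsilon'),\ \Upsilon' \cap g^\dagger(\Gamma')\bigr) = (\Gamma',\Upsilon'),
\]
because each intersection is with a superset.

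For optimality, I read ``largest'' in the componentwise inclusion order. Let $(A,B)$ be any pair with $A \subseteq \Cset$ and $B \subseteq \Yset$ that satisfies $A \subseteq g^{-\dagger}(B)$ and $B \subseteq g^\dagger(A)$.
\begin{itemize}
    \item By monotonicity, $A \subseteq g^{-\dagger}(B) \subseteq g^{-\dagger}(\Yset)$. Together with $A \subseteq \Cset$, this gives $A \subseteq \Gamma'$.
    \item Likewise, $B \subseteq g^\dagger(A) \subseteq g^\dagger(\Cset)$. Together with $B \subseteq \Yset$, this gives $B \subseteq \Upsilon'$.
\end{itemize}
Since the first step showed that $(\Gamma',\Upsilon')$ is itself feasible, it is the greatest feasible pair, not merely a maximal one. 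Note also that feasible pairs are closed under componentwise union, because both operators distribute over unions. This gives an alternative route: take the union of all feasible pairs to obtain the greatest one, then identify it with $(\Gamma',\Upsilon')$ via the two inclusions above. I would mention this only as a remark.
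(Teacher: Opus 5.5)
Your proposal is correct and follows essentially the paper's route. The paper likewise proves the two inclusions $\Gamma_1 \subseteq g^{-\dagger}(\Upsilon_1)$ and $\Upsilon_1 \subseteq g^{\dagger}(\Gamma_1)$ elementwise and obtains the fixed point by absorption; it then shows maximality by checking that any feasible pair lies componentwise inside $R(\Gamma,\Upsilon)$, and your monotonicity step is that same elementwise check written compactly.

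One small caution concerns the ``image/preimage duality'' you invoke. It is never actually used, and if read as the adjunction $g^{\dagger}(A) \subseteq B \iff A \subseteq g^{-\dagger}(B)$ it fails when the $\argmax$ has ties. All your argument needs is monotonicity and the fact that $g^{-\dagger}$ is the existential image along the transposed relation.
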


\Cref{prop:one-step} is the operational core of \method: a single deduction--abduction pass simultaneously achieves consistency~(\textbf{D1}) and optimal conciseness~(\textbf{D3}) given $(\Cset, \Yset)$ by construction.
Next, we show that this joint revision preserves coverage guarantees (\textbf{D2}). %

\begin{proposition}[Marginal coverage of \method]
\label{prop:joint-coverage}
For any NeSy CBM $(f,g)$, let $0 < \alpha < 1$ and $0 < \beta < 1$. If $\Cset$ is a concept conformal set~\cref{eq:concept-only-guarantee} and $\Yset$ is a label conformal set~\cref{eq:task-only-guarantee}, then, for every $(\vx, \vc^*, \vy^*) \in \mathrm{supp}(p^*(\vx, \vc, \vy))$, we have
\begin{align}
    &\Pr(\vc^* \in \Crev) \ge (1-\alpha)\delta_\mathrm{ab} - \beta + \Pr(\vc^* \notin \Cset \land \vc^* \notin \Cab),
    \\
    &\Pr(\vy^* \in \Yrev) \ge (1-\beta)\delta_\mathrm{de} - \alpha + \Pr(\vy^* \notin \Yset \land \vy^* \notin \Yfwset) \, ,
\end{align}
where $\Pr(\vc^* \notin \Cset \land \vc^* \notin \Cab)$ and $\Pr(\vy^* \notin \Yset \land \vy^* \notin \Yfwset)$ are the probabilities that the ground-truths are not covered by \emph{neither} the concept/label conformal set \emph{nor} \changed{the} abductive/deductive conformal set. Moreover, if the NeSy predictor satisfies both deductive soundness and abductive soundness (\cref{def:soundness}), the bounds tighten, %
with $\delta_\mathrm{ab} = 1 = \delta_\mathrm{de}$.%
\end{proposition}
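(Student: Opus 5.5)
The plan is an inclusion–exclusion (Fréchet-type) bound on the intersection, combined with the abduction/deduction coverage results already established. We argue for the concept bound; the label bound is the mirror image, with the roles of $(\alpha,\Yset,\Cab)$ and $(\beta,\Cset,\Yfwset)$ swapped and \cref{prop:nesy-forward-coverage} used in place of \cref{prop:nesy-abduction-coverage}.

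By definition \cref{eq:y-c-rev}, $\Crev = \Cset \cap \Cab$, with $\Cab = g^{-\dagger}(\Yset)$ as in \cref{eq:abduction-deduction-ops}. So the event $\{\vc^* \in \Crev\}$ is the intersection of the events $A := \{\vc^* \in \Cset\}$ and $B := \{\vc^* \in \Cab\}$.

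\emph{Step 1 (exact identity).} Write $\Pr(A \cap B) = \Pr(A) + \Pr(B) - \Pr(A \cup B)$. Then use $\Pr(A \cup B) = 1 - \Pr(A^c \cap B^c)$. This gives the exact identity
\[
\Pr(\vc^* \in \Crev) = \Pr(A) + \Pr(B) - 1 + \Pr(\vc^* \notin \Cset \land \vc^* \notin \Cab).
\]
This explains why the "neither" term appears in the statement: it is exactly the slack that the plain union bound would discard.

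\emph{Step 2 (plug in the guarantees).} Use $\Pr(A) \ge 1-\beta$ from \cref{eq:concept-only-guarantee}. Use $\Pr(B) \ge (1-\alpha)\delta_\mathrm{ab}$, the lower bound of \cref{prop:nesy-abduction-coverage}. If one wants to re-derive that lower bound, it is just
\[
\Pr(B) \ge \Pr(B \land \vy^* \in \Yset) = \delta_\mathrm{ab}\Pr(\vy^* \in \Yset) \ge \delta_\mathrm{ab}(1-\alpha).
\]
Summing gives $(1-\alpha)\delta_\mathrm{ab} - \beta + \Pr(A^c \cap B^c)$, as claimed.

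\emph{Step 3 (soundness).} Under abductive soundness, $\vy^* \in \Yset$ implies $\vc^* \in g^{-\dagger}(\vy^*) \subseteq g^{-\dagger}(\Yset) = \Cab$, so $\delta_\mathrm{ab}=1$. Likewise, deductive soundness gives $\vy^* = g^\dagger(\vc^*) \in g^\dagger(\Cset)$ whenever $\vc^* \in \Cset$, so $\delta_\mathrm{de}=1$. The bounds then become $1-\alpha-\beta+\Pr(\cdot)$.

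There is no real obstacle; the only delicate point is bookkeeping. First, the probabilities are over the joint randomness of the calibration data and the test point. Second, the conditional $\delta$'s must be interpreted with respect to this same measure, so that the factorization in Step 2 is legitimate. A minor case arises if $\Pr(\vy^*\in\Yset)=0$ and $\delta_\mathrm{ab}$ is undefined; this is excluded because $\Pr(\vy^*\in\Yset) \ge 1-\alpha>0$.
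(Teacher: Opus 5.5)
Your proposal is correct and follows the paper's own proof essentially step for step. Both use the exact inclusion--exclusion identity $\Pr(A\cap B)=\Pr(A)+\Pr(B)-1+\Pr(\neg A\cap\neg B)$, with $A=\{\vc^*\in\Cset\}$ and $B=\{\vc^*\in\Cab\}$, plug in $\Pr(A)\ge 1-\beta$ and the lower bound of \cref{prop:nesy-abduction-coverage}, handle the label side symmetrically via \cref{prop:nesy-forward-coverage}, and set $\delta_\mathrm{ab}=\delta_\mathrm{de}=1$ under soundness.
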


Under \changed{soundness}
($\delta_\mathrm{ab} = \delta_\mathrm{de} = 1$), \method retains coverage at least $1 - \alpha - \beta$ on both sides, plus a non-negative joint-failure correction, \ie $\Pr(\vc^* \notin \Cset \land \vc^* \notin \Cab)$ or $\Pr(\vy^* \notin \Yset \land \vy^* \notin \Yfwset)$. This correction term is empirically estimable from a held-out set and is small in practice ($\le 0.08$ across our datasets, see \cref{sec:additional-experiments}). When soundness degrades -- \eg on incomplete-knowledge datasets like \RIVAL\ -- the bounds \changed{worsen} according to the quality of the inference layer ($\delta$).

\textbf{User-specified size budgets}.
Standard CP fixes $\alpha, \beta$ a priori and yields the corresponding set sizes. In practice, specifically in decision-support systems~\citep{detoni2024towards, straitouri2024controlling, straitouri2024designing}, it is often more useful to fix a \emph{size budget} (``at most $5$ candidate concepts'') and report the corresponding coverage, thereby avoiding overwhelming practitioners with a range of choices. As shown by~\changed{\citet{gauthier2025Conformal}}, this can be achieved by replacing standard CP with \emph{conformal e-predictions}~\citep{vovk2025conformal, gauthier2025values, gauthier2025Conformal, gauthier2026Conformal}, which use \emph{e-values}~\citep{shafer2019game, vovk2021values, grunwald2020safe, ramdas2025hypothesis} in place of quantiles. 
E-values bring two benefits to \method. First, we can provably build a conformal set over concepts \emph{\changed{while} avoiding Bonferroni's correction} (\cref{cor:bonferroni-coverage}), \changed{meaning that} the resulting concept-set guarantee does \emph{not} degrade with the number of concepts $k$ (\cref{prop:e-world-coverage}).
Second, e-values allow data-dependent miscoverage levels $\tilde{\alpha}, \tilde{\beta}$ to be chosen \emph{after} observing calibration scores, in particular to satisfy user-specified size budgets while retaining a coverage guarantee \emph{in expectation}~\citep{ramdas2025hypothesis, vovk2025conformal}. We denote this variant \EMethod, and prove its joint-coverage guarantee in \cref{sec:proof-eval}. \changed{Details about e-values and \EMethod are left to \cref{sec:evalues} due to space constraints.}

\textbf{Runtime}. 
A key advantage of \method is that the joint revision operator reaches a fixed point in a single step (\cref{prop:one-step}), so the additional overhead amounts to computing the abductive and deductive images once, \ie $\Cab = g^{-\dagger}(\Yset)$ and $\Yfwset = g^{\dagger}(\Cset)$. The main computational bottleneck lies in constructing the concept set $\Cprod$, which scales exponentially in the number of concepts $k$ due to the cross-product structure. This issue is mitigated when full concept annotations are available or if \EMethod is employed due to \emph{e-values}, in which cases $\Cset$ \changed{can be obtained carrying out any set product}. In practice, we find that the runtime of \method does not exceed $1.5\times$ that of standard \DPL and \LTN, which \changed{is} acceptable in high-stakes settings where reliability is crucial. A detailed runtime analysis is deferred to \cref{sec:runtime} due to space constraints.

\subsection{One-Sided Strategies do not Suffice}
\label{sec:one-sided-prior-work}

\begin{table*}[!t]
    \caption{\textbf{Summary of guarantees} provided by different NeSy conformal strategies.} %
    \scriptsize
    \centering
    \scalebox{1}{
    \begin{tabular}{lccccccc}
        \toprule
            & \multicolumn{2}{c}{{\sc Consistency} (\textbf{D1})}
            & \multicolumn{2}{c}{{\sc Coverage} (\textbf{D2})}
            & \multicolumn{2}{c}{{\sc Conciseness} (\textbf{D3})}
        \\
        \cmidrule(lr){2-3}
        \cmidrule(lr){4-5}
        \cmidrule(lr){6-7}
        {\sc Method}
            & {\sc Task}
            & {\sc Concepts}
            & {\sc Task $\vY$}
            & {\sc Concepts $\vC$}
            & {\sc Task $\vY$}
            & {\sc Concepts $\vC$}
        \\
        \midrule
        Task Only (\TaskOnly)~\cite{ledaguenel2024neurosymbolic}
            & \XMARK
            & --
            & \CMARK~{(\ref{eq:task-only-guarantee})}
            & \XMARK
            & \XMARK
            & --
        \\
        Task plus Abduction (\TaskPlusAbduction)
            & \CMARK
            & \CMARK
            & \CMARK~{(\ref{eq:task-only-guarantee})}
            & \CMARK~{(\ref{prop:nesy-abduction-coverage})}
            & \XMARK
            & \XMARK
        \\
        Concepts Only (\ConceptsOnly)
            & --
            & \XMARK
            & \XMARK
            & \CMARK~{(\ref{propr:marginal-coverage-cprod})}
            & --
            & \XMARK
        \\
        Concepts plus Deduction (\ConceptsPlusDeduction)
            & \CMARK
            & \CMARK
            & \CMARK~{(\ref{prop:nesy-forward-coverage})}
            & \CMARK~{(\ref{propr:marginal-coverage-cprod})}
            & \XMARK
            & \XMARK
        \\
        Joint (\bastani)~\cite{ramalingam2024uncertainty}
            & \CMARK
            & \XMARK
            & \CMARK
            & \CMARK
            & \CMARK
            & \XMARK
        \\
        \method
            & \CMARK~{(\ref{prop:joint-fixed-point})}
            & \CMARK~{(\ref{prop:joint-fixed-point})}
            & \CMARK~{(\ref{prop:joint-coverage})}
            & \CMARK~{(\ref{prop:joint-coverage})}
            & \CMARK~{(\ref{prop:joint-optimality}, \ref{rsq:four})}
            & \CMARK~{(\ref{prop:joint-optimality}, \ref{rsq:four})}
        \\
        \bottomrule
    \end{tabular}
    }
    \label{tab:methods-summary}
\end{table*}

The joint revision operator
$R(\Gamma, \Upsilon) = (\Gamma \cap g^{-\dagger}(\Upsilon),\, \Upsilon \cap g^{\dagger}(\Gamma))$ admits four \emph{one-sided} restrictions, obtained by conformalizing only one side and either bypassing the symbolic layer or applying $g^\dagger / g^{-\dagger}$ unilaterally.  \changed{We briefly discuss them and their properties (summarized in \cref{tab:methods-summary})}:
\begin{itemize}[leftmargin=1.5em,itemsep=1pt,topsep=2pt]

    \item \changed{Task only (\underline{\TaskOnly}) applies CP to the labels but not to the concepts, meaning it violates coverage \textbf{D2}; it also violates consistency \textbf{D1} as a single concept vector may not entail the entire label set.}

    \item \changed{Concepts only (\underline{\ConceptsOnly}) is the converse, \ie it does not apply CP to the task labels, violating coverage \textbf{D2}; moreover, if employed, Bonferroni inflates $\Cprod$, breaking conciseness \textbf{D3}.} %

    \item \changed{Task plus Abduction} (\underline{\TaskPlusAbduction}) computes $(\Yset, g^{-\dagger}(\Yset))$, in which case the abduced concept set contains \textit{all} concept vectors $\vc$ entailing some label in $\Yset$, \changed{affecting conciseness} \textbf{D3}.

    \item \changed{Concepts plus Deduction} (\underline{\ConceptsPlusDeduction}) computes $(\Cset, g^{\dagger}(\Cset))$ \changed{instead, meaning that it inflates the label side, violating~\textbf{D3}}.

\end{itemize}
\citet{ramalingam2024uncertainty} propose a variation of the revision operator, here denoted as \underline{\bastani}, that
applies $R$ asymmetrically, revising only the label side: it matches \method's $\Yrev$ but leaves $\Cset$ untouched, breaking concept-side \changed{consistency} \textbf{D1}.  \changed{We defer to \cref{sec:other-ablations} for an in-depth discussion.}

\section{Related Work}
\label{sec:related-work}

\textbf{Conformal Prediction}. CP produces prediction sets containing the ground-truth value with a pre-specified coverage~\citep{shafer2008conformal}.
Core theoretical guarantees are established by \citet{DBLP:conf/ecml/PapadopoulosPVG02}. Extensions cover regression~\citep{lei2014distribution,lei2018distribution}; predictive distributions~\citep{vovk2017nonparametric, vovk2018cross, vovk2019computationally}; conditional guarantees~\citep{foygel2021limits,gibbs2025conformal}; noisy ground truth~\citep{DBLP:journals/tmlr/StutzRMSCD23,DBLP:journals/tmlr/CaprioSLD25}, covariate shifts~\citep{DBLP:conf/nips/TibshiraniBCR19} and size control~\citep{gauthier2025Conformal,gauthier2026Conformal}.
We refer the reader to \citet{DBLP:journals/ftml/AngelopoulosB23} for a broader survey on conformal prediction.

\textbf{Conformal prediction for NeSy}. The two closest works are \citet{ledaguenel2024neurosymbolic}, who apply CP only at the label level (matching our \TaskOnly baseline and inheriting its lack of concept-side coverage), and \citet{ramalingam2024uncertainty}, who propose three strategies for symbolic programs over bounding-boxes that align with our \ConceptsPlusDeduction, \TaskOnly, and \bastani baselines. Both are subsumed by our framework. \method goes further by \emph{also} revising the concept set via $g^{-\dagger}(\Yset)$, which is what guarantees concept-side consistency and drives the empirical concept-set reductions reported in \cref{sec:experiments}. Other lines of work~\citep{mell2025fast, barnaby2025active} use CP for LLM-generated programs and are orthogonal to ours.

\changed{
\textbf{NeSy-CBMs and Reasoning Shortcuts}. 
NeSy-CBMs constitute a broad family of architectures that combine neural concept extractors with symbolic inference layers \citep{de2021statistical, bortolotti2025shortcuts}, encompassing models based on probabilistic logic~\citep{manhaeve2018deepproblog,huang2021scallop, dai2020abductive, ahmed2022semantic, van2022anesi,maene2023softunification}, fuzzy logic~\citep{donadello2017logic, badreddine2022logic,giunchiglia2020coherent}, and algebraic constraints~\cite{kurscheidt2025pal,giunchiglia2024ccn}.
A known failure mode of these models is that of \emph{reasoning shortcuts}~\citep{marconato2023not},
whereby the concept extractor learns incorrect concepts that allow to predict correct labels in training~\citep{marconato2026symbol}. This phenomenon is mainly tackled by resorting to additional supervision  \citep{marconato2023not, yang2024analysis} and uncertainty estimation \citep{marconato2024bears,vankrieken2025neurosymbolic}.
}

\section{Experiments}
\label{sec:experiments}

We address the following research questions:
\begin{itemize}[leftmargin=2.5em]
    \item[\textbf{RQ1}] How does \method fare in terms of \textbf{D1}--\textbf{D3} under correct and noisy knowledge?
    \item[\textbf{RQ2}] How does the \method behave when deductive soundness fails?
    \item[\textbf{RQ3}] Can we impose \emph{hard size constraints} on the prediction sets while retaining coverage guarantees?
\end{itemize}

\subsection{Experimental Settings}

\changed{We report results in the main text using \emph{concept supervision}, as the unsupervised setting yields substantially larger prediction sets that are harder to read and less operationally meaningful; the unsupervised results follow the same qualitative trends and are reported in full in~\cref{sec:additional-experiments}.}

\textbf{Data sets}.  We focus on four datasets, grouped by the properties of the prior knowledge \BK. \emph{Complete-and-correct \BK}: {\sc Chest-Xrays} (\underline{\CHX}~\citep{cohen2022torch}) is a high-stakes medical diagnosis task in which the label is the number of active binary symptoms; {\sc CeBaB} (\underline{\CEBAB}~\citep{abraham2022cebab}) is a restaurant-review dataset with four aspect-level concepts and an overall rating derived by majority vote. \emph{Incomplete \BK}: {\sc RIVAL-10} (\underline{\RIVAL}~\citep{moayeri2022rival}) contains $10$ classes with hand-crafted visual attributes such that at least two class pairs share the same concept signature, so deductive soundness fails (\cref{def:soundness}). \emph{Reasoning-shortcut benchmark}: {\sc MNIST-EvenOdd} (\underline{\MNISTEO}~\citep{ bortolotti2024benchmark}) is a biased variant of \MNISTAdd~\citep{manhaeve2018deepproblog} designed to induce reasoning shortcuts. \changed{Details for these and the 
four additional datasets are in~\cref{sec:dataset-details}}. %

\textbf{Competitors and metrics}. We employ two state-of-the-art NeSy-CBMs, \DPL~\citep{manhaeve2018deepproblog} and \LTN~\citep{badreddine2022logic}, \changed{trained with and without concept supervision}, and compare \method against \bastani~\citep{ramalingam2024uncertainty}, the strongest competitor; full ablations against \TaskOnly, \TaskPlusAbduction, \ConceptsOnly, and \ConceptsPlusDeduction are in~\cref{sec:ablations}.
Following \textbf{D1}--\textbf{D3}, we report \textsc{Coverage}, \textsc{Size}, and \textsc{Consistency} at nominal coverages $1-\alpha = 1-\beta = 0.9$, averaged over $10$ seeds.  \changed{Details about hyperparameters and metrics can be found in \cref{sec:implementation-details} and \cref{sec:model-selection-super}.}

\textbf{Setup}. We train \DPL and \LTN end-to-end on the training split following~\cite{bortolotti2024benchmark}. We partition the test set into a calibration set $\calD_{\mathrm{cal}}$ ($20\%$) and a held-out evaluation set $\calD_{\mathrm{test}}$ ($80\%$), computing nonconformity scores $s(\vx, \vy) = -\log p_\theta(\vy \mid \vx)$ on $\calD_{\mathrm{cal}}$ for both concepts and labels, and use them to construct the conformal sets $\Cset$ \changed{(in practice, $\Cprod$)} and $\Yset$ at the desired miscoverage levels. \method's revision step (\cref{eq:y-c-rev}) is then applied at test time on $\calD_{\mathrm{test}}$, \changed{for which} we report all metrics. For \EMethod, we replace the standard nonconformity scores with soft-rank e-values~(\cref{eq:soft-rank-evariable}) and adaptively select $\tilde{\alpha}$ and $\tilde{\beta}$ via bootstrap (\cref{sec:evalues}) to satisfy user-specified size budgets. Full training, conformalization, and evaluation details are in~\cref{sec:implementation-details}.

\subsection{Experimental Results and Discussion}

\textbf{RQ1: \method preserves coverage and consistency with smaller prediction sets across all regimes}. %
By symmetrically revising both sides, \method matches \bastani's label sets while \emph{strictly} improving the concept side. 
\method attains concept consistency $1.00$ across all four datasets, whereas \bastani ranges from $0.12$ (\MNISTEO) to $0.94$ (\RIVAL).
With \emph{complete-and-correct} \BK: on \CHX, \method shrinks the concept set from $5.30$ (\bastani) to $2.90$ ($1.8\times$); on \CEBAB, a consistent reduction, $75.06 \to 53.49$ ($1.4\times$), with concept consistency rising $0.78 \to 1.00$.
With \emph{incomplete} \BK (\RIVAL), \method reduces the concept set $6.33 \to 5.85$ and increases consistency $0.94 \to 1.00$.
On \emph{RS-benchmark} (\MNISTEO, unsupervised), \bastani returns concept sets of size $100$ -- 
\ie all possible digits combinations
-- at consistency $0.12$, reflecting the strong miscalibration induced by RSs~\citep{marconato2024bears, vankrieken2025neurosymbolic}; \method shrinks this to $12.13$ ($8.2\times$) at consistency $1.00$, with label coverage preserved at $0.90$. \method and \bastani both maintain label-set sizes $\leq 2$, except on \CEBAB ($2.31$); label coverage is $\geq 0.87$ for correct and concept coverage is $\geq 0.83$. Results for \LTN are deferred to Appendix~\cref{sec:additional-experiments} due to space constraints.

\begin{table*}[t]
    \caption{Results for \DPL on four representative datasets. We report concept and label consistency ({\sc Const.}, fraction of consistent pairs -- see \cref{sec:metrics-details}), size, and coverage at nominal coverage $1 - \alpha = 1 - \beta = 0.9$, averaged over $10$ seeds (mean $\pm$ std). Full results, including \LTN, are in~\cref{sec:additional-experiments}.}
    \centering
    \begin{minipage}{0.95\textwidth}
        \centering
        \scriptsize
\scalebox{0.8}{
\begin{tabular}{p{1em}lcccccc}
\toprule
& & \multicolumn{3}{c}{{\sc Concepts}} & \multicolumn{3}{c}{{\sc Labels}} \\
\cmidrule(lr){3-5} \cmidrule(lr){6-8} {\sc} & {\sc Method} & {\sc Const.} & {\sc Size} & {\sc Cov.} & {\sc Const.} & {\sc Size} & {\sc Cov.} \\
\midrule
\multirow[c]{3}[0]{1em}{\rotatebox{90}{\CHX}} & \rule{0pt}{0ex} \DPL & $\mathbf{1.00 \pm 0.00}$ & $1.00 \pm 0.00$ & $0.47 \pm 0.11$ & $\mathbf{1.00 \pm 0.00}$ & $1.00 \pm 0.00$ & $0.72 \pm 0.01$ \\
 & \rule{0pt}{0ex} \DPL + \bastani & $0.55 \pm 0.05$ & $5.30 \pm 1.01$ & $\mathbf{0.91 \pm 0.03}$ & $\mathbf{1.00 \pm 0.00}$ & $\mathbf{1.60 \pm 0.18}$ & $\mathbf{0.88 \pm 0.03}$ \\
 & \rule{0pt}{0ex} \DPL + \method & $\mathbf{1.00 \pm 0.00}$ & $\mathbf{2.90 \pm 0.56}$ & $0.83 \pm 0.04$ & $\mathbf{1.00 \pm 0.00}$ & $\mathbf{1.60 \pm 0.18}$ & $\mathbf{0.88 \pm 0.03}$ \\
\cmidrule(lr){2-8}
\multirow[c]{3}[0]{1em}{\rotatebox{90}{\RIVAL}} & \rule{0pt}{0ex} \DPL & $\mathbf{1.00 \pm 0.00}$ & $1.00 \pm 0.00$ & $0.43 \pm 0.02$ & $\mathbf{1.00 \pm 0.00}$ & $1.00 \pm 0.00$ & $0.80 \pm 0.00$ \\
 & \rule{0pt}{0ex} \DPL + \bastani & $0.94 \pm 0.01$ & $6.33 \pm 0.65$ & $\mathbf{0.92 \pm 0.01}$ & $\mathbf{1.00 \pm 0.00}$ & $\mathbf{1.20 \pm 0.02}$ & $\mathbf{0.89 \pm 0.01}$ \\
 & \rule{0pt}{0ex} \DPL + \method & $\mathbf{1.00 \pm 0.00}$ & $\mathbf{5.85 \pm 0.62}$ & $0.84 \pm 0.01$ & $\mathbf{1.00 \pm 0.00}$ & $\mathbf{1.20 \pm 0.02}$ & $\mathbf{0.89 \pm 0.01}$ \\
\cmidrule(lr){2-8}
\multirow[c]{3}[0]{1em}{\rotatebox{90}{\CEBAB}} & \rule{0pt}{0ex} \DPL & $\mathbf{1.00 \pm 0.00}$ & $1.00 \pm 0.00$ & $0.18 \pm 0.02$ & $\mathbf{1.00 \pm 0.00}$ & $1.00 \pm 0.00$ & $0.58 \pm 0.03$ \\
 & \rule{0pt}{0ex} \DPL + \bastani & $0.78 \pm 0.06$ & $75.06 \pm 38.35$ & $\mathbf{0.90 \pm 0.04}$ & $\mathbf{1.00 \pm 0.00}$ & $\mathbf{2.31 \pm 0.16}$ & $\mathbf{0.87 \pm 0.01}$ \\
 & \rule{0pt}{0ex} \DPL + \method & $\mathbf{1.00 \pm 0.00}$ & $\mathbf{53.49 \pm 25.93}$ & $0.83 \pm 0.03$ & $\mathbf{1.00 \pm 0.00}$ & $\mathbf{2.31 \pm 0.16}$ & $\mathbf{0.87 \pm 0.01}$ \\
\cmidrule(lr){2-8}
\multirow[c]{3}[0]{1em}{\rotatebox{90}{\texttt{MN-EO}}} & \rule{0pt}{0ex} \DPL$^{\square}$ & $\mathbf{1.00 \pm 0.00}$ & $1.00 \pm 0.00$ & $0.13 \pm 0.08$ & $\mathbf{1.00 \pm 0.00}$ & $1.00 \pm 0.00$ & $0.72 \pm 0.01$ \\
 & \rule{0pt}{0ex} \DPL + \bastani$^{\square}$ & $0.12 \pm 0.01$ & $100.00 \pm 0.00$ & $\mathbf{1.00 \pm 0.00}$ & $\mathbf{1.00 \pm 0.00}$ & $\mathbf{1.55 \pm 0.18}$ & $\mathbf{0.90 \pm 0.02}$ \\
 & \rule{0pt}{0ex} \DPL + \method$^{\square}$ & $\mathbf{1.00 \pm 0.00}$ & $\mathbf{12.13 \pm 1.37}$ & $0.90 \pm 0.02$ & $\mathbf{1.00 \pm 0.00}$ & $\mathbf{1.55 \pm 0.18}$ & $\mathbf{0.90 \pm 0.02}$ \\
\bottomrule
\end{tabular}
}

    \end{minipage}

    \label{tab:results-global-table}
\end{table*}

\textit{Trade-off note.} %
Concept coverage of \method is systematically $7$--$10$ points below \bastani's (\eg $0.83$ vs $0.91$ on \CHX, $0.84$ vs $0.92$ on \RIVAL). This is unavoidable \changed{since} $\Crev \subseteq \Cset$, so any concept removed by intersection with $\Cab$ previously contributed to coverage. The drop reflects cases in which \changed{$\vy^* \not\in \Yset$}, \changed{potentially} causing $\Cab$ to exclude $\vc^*$. \method trades this small coverage drop for guaranteed concept-side consistency (always $1.00$) %
and substantially smaller sets. %

\textbf{RQ2: coverage guarantees degrades controllably under incomplete knowledge}. The bound in~\cref{prop:joint-coverage} degrades with $\delta_{\mathrm{ab}}$ and $\delta_{\mathrm{de}}$. 
\RIVAL is \emph{designed} so that $2$ of $10$ classes share concepts (\texttt{dog}/\texttt{horse}, \texttt{automobile}/\texttt{truck}), yielding $\delta_{\mathrm{de}} = 0.8$ by construction (verified empirically;
see~\cref{tab:delta-estimates}%
). %
Abductive soundness holds, $\hat{\delta}_{\mathrm{ab}} = 1.00$. Plugging $\delta_{\mathrm{de}} = 0.80$, $\beta = 0.1$ into~\cref{prop:joint-coverage} yields $\Pr(\vy^* \in \Yrev) \geq 0.62$.\footnote{This excludes the joint-failure correction term, which further tightens the bound slightly; see~\cref{sec:additional-experiments} for more details.}
This bound is respected: \DPL attains label coverage $0.80$ on \RIVAL. %
\cref{fig:joint-failure} reports the joint-failure correction $\Pr(\vy^* \notin \Yset \land \vy^* \notin \Yfwset)$ and $\Pr(\vc^* \notin \Cset \land \vc^* \notin \Cab)$, respectively, for both \CIFAR and \RIVAL, confirming it is small ($\leq 0.06$) and closes the gap between the theoretical lower bound and observed coverage. Results on \CIFAR follow a similar trend, ref.  Section~\ref{sec:additional-experiments}.

\begin{figure}[t]
    \centering
    \begin{subfigure}[b]{0.48\linewidth}
        \centering
        \includegraphics[width=\linewidth]{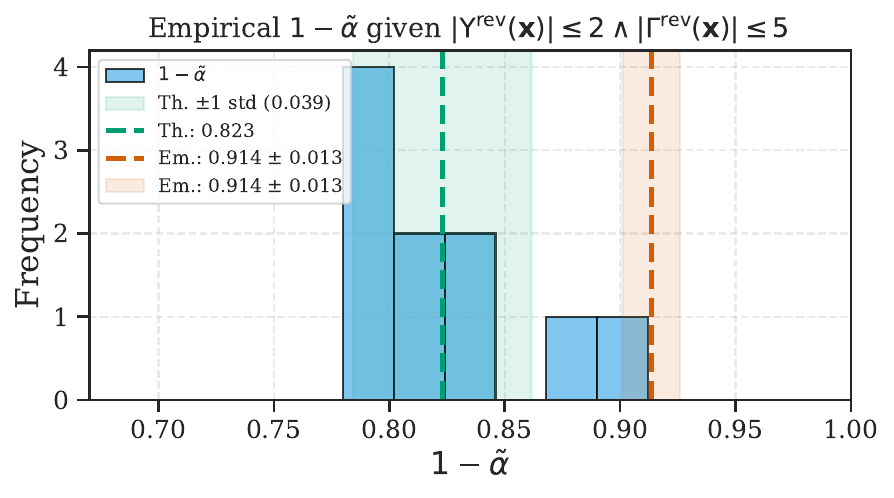}
        \caption{Label coverage}
        \label{fig:joint-alpha}
    \end{subfigure}
    \hfill
    \begin{subfigure}[b]{0.48\linewidth}
        \centering
        \includegraphics[width=\linewidth]{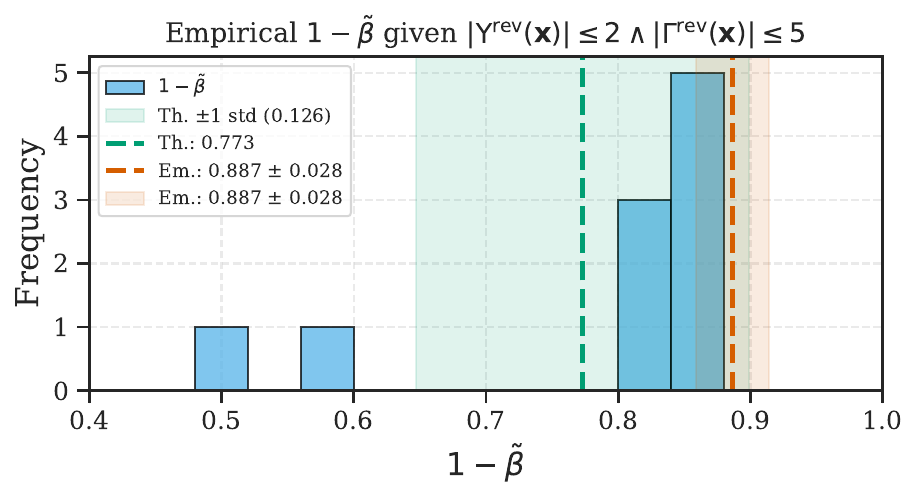}
        \caption{Concept coverage}
        \label{fig:joint-beta}
    \end{subfigure}
    \caption{\EMethod on \CHX with \DPL under fixed regime $|\Yrev| \leq 2 \land |\Crev| \leq 5$, bootstrapped over $100$ calibration resamples and averaged across $10$ seeds. Empirical coverage (\textcolor{orange}{\textbf{orange}}) exceeds the theoretical target (\textcolor{ForestGreen}{\textbf{green}}), while sizes remain \emph{strictly} below the imposed budgets.}
    \label{fig:evalues}
\end{figure}

\textbf{RQ3: \method supports user-specified size budgets while retaining coverage}. \label{rsq:four}
A recurring practical concern with CP is that size and coverage cannot be jointly specified. Conformal e-predictions~\citep{vovk2025conformal, gauthier2025Conformal} invert the procedure: given size budgets, the method adaptively chooses $\tilde{\alpha}$ and $\tilde{\beta}$ while retaining the conformal guarantee~(\cref{prop:evalues-joint-coverage}). E-values additionally avoid Bonferroni's correction~(\cref{prop:e-world-coverage}), so guarantees do not degrade with the number of concepts.
We evaluate on \CHX, fixing label and concept budgets to $2$ and $5$ -- operationally meaningful for a clinical decision-support setting.~\cref{fig:evalues} reports the both-fixed regime, bootstrapped over $100$ calibration resamples. 
Average set sizes are $1.80$ (labels) and $3.57$ (concepts), strictly below the budgets, with empirical coverages $0.91$ and $0.89$, both exceeding by far the theoretical targets $1 - \mathbb{E}[\tilde{\alpha}] = 0.82$ and $1 - \mathbb{E}[\tilde{\beta}] = 0.77$.
Single-sided regimes and full results are in~\cref{tab:econformal} and additional ablation on different e-value formulation in~\cref{sec:product-vs-average}.

\section{Conclusion}
\label{sec:conclusion}

We introduced \method, a post-hoc conformal framework for NeSy-CBMs that jointly conformalizes concept and label predictions and reconciles them via a \emph{single} deduction--abduction revision step. This step provably yields the largest logically consistent revision of the two sets and reaches a fixed point in one pass~(\cref{prop:one-step}). The framework retains distribution-free coverage that degrades controllably under imperfect knowledge, and -- combined with e-values -- supports user-specified size budgets. To our knowledge, \method is the first method that exploits symbolic knowledge to \emph{simultaneously} shrink conformal prediction sets and enforce label--concept consistency while retaining distribution-free coverage. Empirically, \method consistently attains target coverage while reducing concept-set size on eight benchmarks.

\textbf{Limitations \changed{and Future Work}}. Abduction and deduction require enumerating $g^{\dagger}(\Cset)$ and $g^{-\dagger}(\Yset)$, while $\Cprod$ requires to compute the product over a possibly large number of concepts. This becomes a bottleneck for very large concept spaces (\eg ~\BOIA~\citep{xu2020boia, bortolotti2024benchmark}). Similarly, regarding inference time, the inference layer $g$ is called multiple times; while knowledge-compiled circuits~\citep{darwiche2002knowledge, choi2020probabilistic, vergari2021compositional, maene2025klay, derkinderen2025deeplog} (as in \DPL) make this tractable in practice. 
The product-set $\Cprod$ -- if the cross-product construction is used -- assume conditional independence (Proposition~\ref{prop:e-world-coverage}), inherited from standard NeSyCBMs~\citep{van2024independence,vankrieken2025neurosymbolic}. Importantly, the \method revision step itself does \emph{not} require independence: it operates on any concept set satisfying~\cref{eq:concept-only-guarantee}. Plugging in expressive concept distributions~\citep{van2024independence,vankrieken2025neurosymbolic} is a direct extension.
\changed{As future work, we plan to extend \method to a broader range of NeSy-CBMs architectures, leveraging, once mature, recent initiatives that aim to make NeSy-AI more accessible and easier to adopt for practitioners, such as \texttt{DeepLog}~\citep{derkinderen2025deeplog} and \texttt{ULLER}~\citep{van2024uller}, and to architectures that \emph{learn} the knowledge from data~\citep{koh2020concept, daniele2023deep, shindo2023thinking, shindo2024learning, wust2024pix2code, tang2023perception, debot2024interpretable}, where soundness can no longer be assumed a priori and must be estimated, affecting the bounds of \cref{prop:joint-coverage}.}

\newpage
\bibliography{references, explanatory-supervision}

\newpage
\appendix

\section*{Broader Impact}
By coupling distribution-free coverage with logical consistency, \method makes NeSy-CBMs suitable for decision-support systems~\citep{detoni2024towards, straitouri2024designing, straitouri2024controlling}, particularly in high-stakes settings such as \CHX where practitioners consume both intermediate concepts and final labels. We foresee no negative societal impact specific to \method beyond those shared by any conformal method.

\section{Additional Related Work}
\label{sec:extended-related-work}

\textbf{Uncertainty quantification in NeSy-CBMs}. NeSy-CBMs, despite their architectural bias toward reliability, are not immune to overconfident predictions~\citep{marconato2024bears}. Existing approaches fall into two categories. The first modifies the concept distribution to be more expressive~\citep{van2024independence}, e.g., via deep ensembles~\citep{marconato2024bears} or diffusion-based parametrizations~\citep{vankrieken2025neurosymbolic}. The second enforces uncertainty-related properties during training via constraint-based objectives~\citep{alam2026constraint}. Our approach is complementary and post-hoc: it operates on top of an already-trained NeSy-CBM and provides \emph{distribution-free} coverage on top of any of these methods.

\section{Extended One-Sided Strategies and Prior Work}
\label{sec:other-ablations}
\method combines two design choices: (i) constructing both a label set $\Yset$ and a concept set $\Cset$, and (ii) revising each via the logical image of the other. A natural question is whether either choice can be relaxed without sacrificing the desiderata. In this section we systematically ablate \method along these two axes, showing that each one-sided strategy fails at least one desideratum. The ablations also subsume prior work: \citet{ledaguenel2024neurosymbolic}'s strategy corresponds to applying CP on the label side, while \citet{ramalingam2024uncertainty}'s ``compositional,'' ``direct,'' and ``full'' can be matched in three of our ablations (\ConceptsPlusDeduction, \TaskOnly and \bastani, respectively). \cref{tab:app-methods-summary} summarizes the comparison.

\begin{table*}[!h]
    \centering
    \scalebox{0.8}{
    \begin{tabular}{lccccccc}
        \toprule
            & \multicolumn{2}{c}{{\sc Consistency} (\textbf{D1})}
            & \multicolumn{2}{c}{{\sc Coverage} (\textbf{D2})}
            & \multicolumn{2}{c}{{\sc Conciseness} (\textbf{D3})}
        \\
        \cmidrule(lr){2-3}
        \cmidrule(lr){4-5}
        \cmidrule(lr){6-7}
        {\sc Method}
            & {\sc Task}
            & {\sc Concepts}
            & {\sc Task $\vY$}
            & {\sc Concepts $\vC$}
            & {\sc Task $\vY$}
            & {\sc Concepts $\vC$}
        \\
        \midrule
        Task Only (\TaskOnly)~\cite{ledaguenel2024neurosymbolic}
            & \XMARK
            & --
            & \CMARK~{(\ref{eq:task-only-guarantee})}
            & \XMARK
            & \XMARK
            & --
        \\
        Task plus Abduction (\TaskPlusAbduction)
            & \CMARK
            & \CMARK
            & \CMARK~{(\ref{eq:task-only-guarantee})}
            & \CMARK~{(\ref{prop:nesy-abduction-coverage})}
            & \XMARK
            & \XMARK
        \\
        Concepts Only (\ConceptsOnly)
            & --
            & \XMARK
            & \XMARK
            & \CMARK~{(\ref{propr:marginal-coverage-cprod})}
            & --
            & \XMARK
        \\
        Concepts plus Deduction (\ConceptsPlusDeduction)
            & \CMARK
            & \CMARK
            & \CMARK~{(\ref{prop:nesy-forward-coverage})}
            & \CMARK~{(\ref{propr:marginal-coverage-cprod})}
            & \XMARK
            & \XMARK
        \\
        Joint (\bastani)~\cite{ramalingam2024uncertainty}
            & \CMARK
            & \XMARK
            & \CMARK
            & \CMARK
            & \CMARK
            & \XMARK
        \\
        \method
            & \CMARK~{(\ref{prop:joint-fixed-point})}
            & \CMARK~{(\ref{prop:joint-fixed-point})}
            & \CMARK~{(\ref{prop:joint-coverage})}
            & \CMARK~{(\ref{prop:joint-coverage})}
            & \CMARK~{(\ref{prop:joint-optimality}, \ref{rsq:four})}
            & \CMARK~{(\ref{prop:joint-optimality}, \ref{rsq:four})}
        \\
        \bottomrule
    \end{tabular}
    }
    \caption{Summary of guarantees provided by different NeSy conformal strategies.} %
    \label{tab:app-methods-summary}
\end{table*}

\textbf{One-side CP}.
The simplest strategies apply CP to only one level:
\underline{\TaskOnly}:~\citet{ledaguenel2024neurosymbolic} applies CP only to the label distribution to obtain $\Yset$, and predicts the single $\argmax$ concept $\hat{\vc}$. Label coverage is satisfied by construction, but the concept side has no conformalization (\textbf{D2}), and the predicted concept may not entail \emph{all} labels in $\Yset$, breaking consistency (\textbf{D1}).
\underline{\ConceptsOnly} symmetrically applies CP only to the concepts (via $\Cprod$) and predicts the single $\argmax$ label. The label side is uncoformalized and inconsistencies arise in the opposite direction. Moreover, Bonferroni's correction (\cref{cor:bonferroni-coverage}) inflates $\Cprod$ as the number of concepts $k$ grows, breaking conciseness (\textbf{D3}).

\textbf{Cross-Level CP without revision}.
A natural improvement is to use one of $g^{-\dagger}, g^{\dagger}$ to derive the missing side from the conformalized one -- but without revising both jointly.
\underline{\TaskPlusAbduction} reports the label set $\Yset$ and the abducted concept set $\Cab = g^{-\dagger}(\Yset)$. Both sets are consistent by construction (\textbf{D1}) and inherit coverage from $\Yset$ via~\cref{prop:nesy-abduction-coverage} (\textbf{D2}). However, $\Cab$ contains \emph{every} concept vector entailing some label in $\Yset$ -- including low-probability ones under $f$ -- so it is potentially much larger than the input-aware $\Cset$, breaking conciseness (\textbf{D3}).
\underline{\ConceptsPlusDeduction} (the ``direct'' approach of~\citet{ramalingam2024uncertainty}) is the symmetric strategy: report $\Cset$ and the deduced label set $\Yfwset = g^{\dagger}(\Cset)$. Coverage (\cref{prop:nesy-forward-coverage}) and consistency hold, but $\Yfwset$ inflates when $\Cset$ contains low-probability concepts whose images inflate $\calY$, breaking \textbf{D3} on the label side.

\textbf{One-Sided Joint Revision}:
The closest competitor, \underline{\bastani}~\citep{ramalingam2024uncertainty} (their ``full'' approach), can be represented in our framework as a revision applied only to the label side:
\[
    \Yrev^{\bastani} = \Yset \cap g^{\dagger}(\Cset),
    \qquad
    \Cset^{\bastani} = \Cset \quad \text{(unchanged)}.
\]
This delivers label-side consistency and matches \method's label set by construction. However, the concept set is left untouched: it still contains concept vectors that entail labels \emph{outside} $\Yrev^{\bastani}$, breaking concept-side consistency (\textbf{D1}). Empirically (\cref{sec:experiments}), this manifests as concept-side consistency dropping to $\approx 0.08$--$0.94$ across our datasets, while \method attains $1.00$ throughout.

\textbf{Discussion}: 
On the one hand, $\Yset$ and $\Cset$, used in isolation, leave the other side unconformalized -- so stakeholders cannot tell which concepts (resp.\ labels) can be trusted, violating coverage (\textbf{D2}) -- and the predicted side may not entail or abduce \emph{all} elements of the conformalized one, violating consistency (\textbf{D1}). On the other hand, $\Cab$ and $\Yfwset$ ensure concept-side (resp. label-side) consistency by design, but potentially inflate the resulting set -- $\Cab$ contains every concept vector $\vc$ entailing some label in $\Yset$, and $\Yfwset$ every label deducible from some concept in $\Cset$ -- violating conciseness (\textbf{D3}). \Cref{sec:ablations} analyzes these failure modes in detail, exploring all combinations of these four objects -- including those of \citet{ledaguenel2024neurosymbolic} and \citet{ramalingam2024uncertainty} as special cases -- and shows that none satisfies the three desiderata simultaneously. %

\section{Proofs}
\label{sec:proofs}

\subsection*{Notation and Standing Assumptions}

We recall the definition of $g^{\dagger}$ instead is
\[
    g^{\dagger} := \argmax_{\vy \in \calY} g(\vc) \subseteq \calY \, , 
\]
and its inverse
$g^{-\dagger}$ when it is applied to a subset $\calY_s \subseteq \calY$
\[
    g^{-\dagger}(\vy) = \{
    \vc \in \calC: \vy \in \argmax g(\vc)
    \}, 
    \quad
    g^{-\dagger}(\calY_s) = \bigcup_{\vy \in \calY_s} \{
    \vc \in \calC: \vy \in \argmax g(\vc)
    \} .
    \label{eq:inverse-g-appendix}
\]
We also recall that for a given $0 < \alpha < 1$ and every pair $(\vx, \vy^*) \in \calD_{\vY}$, we define the \textbf{\textit{label set}} $\Yset \subseteq \calY$ that satisfies
\[
    \Pr(\vy^* \in \Yset) \geq 1 - \alpha.
    \label{eq:label-set-constraint-alpha-appendix}
\]
Similarly, for $0 < \beta < 1$ and $(\vx, \vc^*) \in \calD_{\vC}$, we define the \textbf{\textit{concept set}} $\Cset \subset \calC$ that satisfies
\[
    \Pr(\vc^* \in \Cset) \geq 1 - \beta.
    \label{eq:concept-set-constraint-beta-appendix}
\]
To make self-contained this appendix without loss of generality, we prove the main statements in the text, providing equivalent statements with the material here recollected.

\subsection{Proof of abductive and deductive assumptions~(\cref{def:soundness})}

\begin{proposition}[Deductive soundness implies abductive soundness]
\label{prop:deductive-implies-abductive}
For any NeSy-CBM $(f, g)$, deductive soundness implies abductive soundness~(\cref{def:soundness}).  Consequently, a model that satisfies deductive soundness but not abductive soundness cannot exist.
\end{proposition}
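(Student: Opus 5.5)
The argument is a direct unfolding of \cref{def:soundness} together with the definition of $g^{-\dagger}$ in \cref{eq:inverse-g-inference-layer-def}. Fix an arbitrary pair $(\vc^*, \vy^*) \in \mathrm{supp}(p^*(\vx, \vc, \vy))$ and assume the NeSy-CBM $(f,g)$ is deductively sound.

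\textbf{Step 1: read off membership in the argmax.} Deductive soundness gives $g^\dagger(\vc^*) = \vy^*$. Since $g^\dagger(\vc^*)$ is formally a subset of $\calY$, I will interpret this equality as $g^\dagger(\vc^*) = \{\vy^*\}$. Under either reading, the conclusion I need is $\vy^* \in g^\dagger(\vc^*)$, i.e.\ $\vy^* \in \argmax_{\vy} g(\vc^*)$. The weaker reading $\vy^* \in g^\dagger(\vc^*)$ would already be enough.

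\textbf{Step 2: apply the definition of the preimage.} By \cref{eq:inverse-g-inference-layer-def}, $g^{-\dagger}(\vy^*) = \{\vc \in \calC : \vy^* \in g^\dagger(\vc)\}$. Step 1 shows that $\vc^*$ satisfies the defining condition, so $\vc^* \in g^{-\dagger}(\vy^*)$.

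\textbf{Step 3: conclude.} The pair $(\vc^*, \vy^*)$ was arbitrary, so $\vc^* \in g^{-\dagger}(\vy^*)$ holds for every pair in the support, which is exactly abductive soundness. The ``consequently'' clause is the contrapositive: a model violating abductive soundness must also violate deductive soundness. Hence no model can be deductively sound without being abductively sound.

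\textbf{Main obstacle.} There is no real mathematical difficulty. The only delicate point is the notational mismatch in \cref{def:soundness}, which equates a set, $g^\dagger(\vc^*)$, with an element, $\vy^*$. I will say explicitly that the proof uses only the inclusion $\vy^* \in g^\dagger(\vc^*)$, which follows from any sensible reading of that equality. I may also remark that the converse fails: when ties occur, or when two labels share a concept signature (as in \RIVAL), abductive soundness can hold while deductive soundness does not.
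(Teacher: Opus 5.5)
Your proposal is correct and matches the paper's proof: deductive soundness gives $\vy^* \in g^\dagger(\vc^*)$, and this is exactly the defining condition for $\vc^* \in g^{-\dagger}(\vy^*)$, i.e., abductive soundness. Your explicit handling of the set-versus-element equality is cleaner than the paper's, which writes the preimage through an equality with the argmax rather than the membership condition of \cref{eq:inverse-g-inference-layer-def}; note, however, that your remark that the converse fails depends on the strict reading $g^\dagger(\vc^*) = \{\vy^*\}$, since under the weak reading $\vy^* \in g^\dagger(\vc^*)$ the two notions coincide.
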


\begin{proof}
Suppose the NeSy-CBM satisfies deductive soundness, \ie $g^{\dagger}(\vc^*) = \vy^*$ for every $(\vc^*, \vy^*)$ in $\operatorname{supp}(p^*)$. Then $g^{\dagger}(\vc^*) = \vy^*$ means $\vc^*$ belongs to $g^{-\dagger}(\vy^*) = \{\vc \in \calC: \argmax g(\vc) = \vy^*\}$, which is exactly abductive soundness (\cref{def:soundness}).
\end{proof}

\begin{proposition}[Abductive soundness implies $\delta_{\mathrm{ab}} = 1$]
\label{prop:abductive-soundness-implies-delta-ab}
Let $\Cab := g^{-\dagger}(\Yset)$ be the concept set obtained by abduction according to \cref{eq:inverse-g-appendix}.  If the NeSy-CBM satisfies abductive soundness (\cref{def:soundness}), then
\[
    \delta_{\mathrm{ab}}
    := \Pr \bigl(\vc^* \in \Cab \mid \vy^* \in \Yset\bigr)
    = 1.
\]
\end{proposition}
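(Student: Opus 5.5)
The plan is a direct event-inclusion argument: show that, under abductive soundness, the event $\{\vy^* \in \Yset\}$ is contained in the event $\{\vc^* \in \Cab\}$ almost surely, so the conditional probability defining $\delta_{\mathrm{ab}}$ equals one.

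\textbf{Step 1: pointwise inclusion.} Fix any realization $(\vx, \vc^*, \vy^*)$ in $\mathrm{supp}(p^*(\vx, \vc, \vy))$ on which $\vy^* \in \Yset$ holds. By abductive soundness (\cref{def:soundness}), $\vc^* \in g^{-\dagger}(\vy^*)$. Since $g^{-\dagger}$ acts on sets as a union over labels (\cref{eq:inverse-g-appendix}), it is monotone: $\vy^* \in \Yset$ gives $g^{-\dagger}(\vy^*) \subseteq \bigcup_{\vy \in \Yset} g^{-\dagger}(\vy) = g^{-\dagger}(\Yset) = \Cab$. Hence $\vc^* \in \Cab$.

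\textbf{Step 2: pass to probabilities.} Step 1 yields $\{\vy^* \in \Yset\} \cap \mathrm{supp}(p^*) \subseteq \{\vc^* \in \Cab\}$. Because $(\vx,\vc^*,\vy^*)$ lies in the support with probability one, whatever the randomness of the calibration data behind $\Yset$, this gives $\Pr(\vc^* \in \Cab \land \vy^* \in \Yset) = \Pr(\vy^* \in \Yset)$. Dividing by $\Pr(\vy^* \in \Yset)$ gives $\delta_{\mathrm{ab}} = 1$.

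\textbf{Main obstacle.} The only delicate point is that the conditional probability must be well defined, which needs $\Pr(\vy^* \in \Yset) > 0$. This follows from \cref{eq:label-set-constraint-alpha-appendix}: $\Pr(\vy^* \in \Yset) \ge 1-\alpha > 0$ since $\alpha < 1$. I would also note that soundness is assumed on the whole support of $p^*$, which justifies applying it to every realization in Step 1 rather than just on average.
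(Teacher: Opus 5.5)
Your proof is correct and follows essentially the same argument as the paper: abductive soundness gives $\vc^* \in g^{-\dagger}(\vy^*) \subseteq g^{-\dagger}(\Yset) = \Cab$ whenever $\vy^* \in \Yset$. Hence the conditioning event is contained in the target event, and the conditional probability equals one. Your extra remark that $\Pr(\vy^* \in \Yset) \ge 1-\alpha > 0$ ensures the conditional probability is well defined, a point the paper leaves implicit.
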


\begin{proof}
From \cref{def:soundness},
abductive soundness ensures that $\vc^* \in g^{-\dagger}(\vy^*)$ for every $(\vc^*, \vy^*)$ in the support of $p^*(\vx, \vc, \vy)$. Suppose $\vy^* \in \Yset$. 
Because $\Cab = g^{-\dagger}(\Yset) = \bigcup_{\vy \in \Yset} g^{-\dagger}(\vy)$ and $\vc^* \in g^{-\dagger}(\vy^*)$ with $\vy^* \in \Yset$, we have $\vc^* \in g^{-\dagger}(\vy^*) \subseteq \Cab$. 
Hence, the event $\{\vc^* \in \Cab\}$ holds surely whenever $\{\vy^* \in \Yset\}$ holds,
giving
\[
    \Pr(\vc^* \in \Cab \mid \vy^* \in \Yset) = 1.
\]
\end{proof}

\begin{proposition}[Deductive soundness implies $\delta_{\mathrm{de}} = 1$]
\label{prop:deductive-soundness-implies-delta-de}
Let $\Yfwset := g^{\dagger}(\Cset)$ be the label set obtained by the image of $g$ on $\Cset$.  
If the NeSy-CBM satisfies deductive soundness (\cref{def:soundness}), then
\[
    \delta_{\mathrm{de}}
    := \Pr\bigl(\vy^* \in \Yfwset \mid \vc^* \in \Cset\bigr)
    = 1.
\]
\end{proposition}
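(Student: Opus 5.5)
The argument mirrors the one just given for \cref{prop:abductive-soundness-implies-delta-ab}, with the roles of concepts and labels swapped and $g^{\dagger}$ in place of $g^{-\dagger}$. We show that the event $\{\vc^* \in \Cset\}$ is contained in the event $\{\vy^* \in \Yfwset\}$ for every $(\vx, \vc^*, \vy^*)$ in the support of $p^*$. The conditional probability then equals one.

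First, we spell out the image of a set under $g^{\dagger}$. Following the convention used for $g^{-\dagger}$ in \cref{eq:inverse-g-appendix}, we take $g^{\dagger}(\Cset) = \bigcup_{\vc \in \Cset} g^{\dagger}(\vc)$. Second, we fix a support triple and assume $\vc^* \in \Cset$. Deductive soundness (\cref{def:soundness}) gives $g^{\dagger}(\vc^*) = \vy^*$, which we read as $\vy^* \in g^{\dagger}(\vc^*)$. Since $\vc^*$ is one of the elements in the union, $g^{\dagger}(\vc^*) \subseteq g^{\dagger}(\Cset) = \Yfwset$, and hence $\vy^* \in \Yfwset$.

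Third, we pass to probabilities. The inclusion $\{\vc^* \in \Cset\} \subseteq \{\vy^* \in \Yfwset\}$ holds almost surely under $p^*$, so
\[
\Pr(\vy^* \in \Yfwset,\ \vc^* \in \Cset) = \Pr(\vc^* \in \Cset).
\]
Dividing by $\Pr(\vc^* \in \Cset) \ge 1-\beta > 0$, which is positive by \cref{eq:concept-set-constraint-beta-appendix}, gives $\delta_{\mathrm{de}} = 1$. This justifies dividing to define the conditional probability.

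No step here is a real obstacle; the one point needing care is notation. \Cref{def:soundness} writes $g^{\dagger}(\vc^*) = \vy^*$, but $g^{\dagger}$ returns a subset of $\calY$. We therefore interpret soundness as $\vy^* \in g^{\dagger}(\vc^*)$ (equivalently, $g^{\dagger}(\vc^*) = \{\vy^*\}$), and either reading suffices for the argument.
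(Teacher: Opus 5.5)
Your proof is correct and follows essentially the same route as the paper: deductive soundness gives $\vy^* \in g^{\dagger}(\vc^*) \subseteq g^{\dagger}(\Cset)$ whenever $\vc^* \in \Cset$, so the conditioning event is contained in the target event and the conditional probability is one. You also read $g^{\dagger}(\vc^*) = \vy^*$ as set membership and check that $\Pr(\vc^* \in \Cset) \ge 1-\beta > 0$; the paper's proof leaves both points implicit.
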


\begin{proof}
Deductive soundness states that $g^{\dagger}(\vc^*) = \vy^*$ for every
$(\vc^*, \vy^*)$ in the support of $p^*(\vx, \vc, \vy)$. Suppose $\vc^* \in \Cset$.  Then $g^{\dagger}(\vc^*) \in g^{\dagger}(\Cset) = \Yfwset$. Since $g^{\dagger}(\vc^*) = \vy^*$, we have $\vy^* \in \Yfwset$. Hence the event $\{\vy^* \in \Yfwset\}$ holds surely whenever
$\{\vc^* \in \Cset\}$ holds, giving
\[
    \Pr(\vy^* \in \Yfwset \mid \vc^* \in \Cset) = 1.
\]
\end{proof}

\begin{remark}
\Cref{prop:abductive-soundness-implies-delta-ab,prop:deductive-soundness-implies-delta-de} are central to give tighter bounds stated in \cref{prop:nesy-abduction-coverage,prop:nesy-forward-coverage,prop:joint-coverage}: when $\delta_{\mathrm{ab}} = 1$ (resp.\ $\delta_{\mathrm{de}} = 1$) in the general bounds, this  yields the $1 - \alpha$ (resp.\ $1 - \beta$) guarantees.
\end{remark}

\subsection{Coverage proof of \TaskPlusAbduction~(\cref{prop:nesy-abduction-coverage})}
\label{proof:nesy-abduction-coverage}

\begin{proposition*}[\ref{prop:nesy-abduction-coverage}, Marginal coverage of $\Cab$]
For $0< \alpha<1$ and for any $(\vx, \vc^*, \vy^*)$ in  the support of $p^*(\vx, \vc, \vy)$, suppose $\Yset$ satisfies \cref{eq:label-set-constraint-alpha-appendix}. 
Let $\Cab := g^{-\dagger} (\Yset)$.
Then
\[
    (1 - \alpha) \delta_\mathrm{ab} \le \Pr(\vc^* \in \Cab) \le \alpha + \delta_\mathrm{ab}
    \label{eq:task-plus-abduction-guarantee}
\]
where $\delta_\mathrm{ab} := \Pr(\vc^* \in \Cab \mid \vy^* \in \Yset)$ is the probability that abduction recovers the ground-truth concepts $\vc^*$ given that the label set $\Yset$ contains the correct label $\vy^*$.  Moreover, if the NeSy-CBM satisfies abductive soundness (\cref{def:soundness}), the bound tightens to
\[
    \Pr(\vc^* \in \Cab) \ge 1 - \alpha.
\]
\end{proposition*}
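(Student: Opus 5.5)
Let $A := \{\vy^* \in \Yset\}$ and $B := \{\vc^* \in \Cab\}$. The plan is to apply the law of total probability to $B$, conditioning on whether $A$ holds:
\[
    \Pr(B) = \Pr(B \mid A)\Pr(A) + \Pr(B \mid \neg A)\Pr(\neg A) = \delta_\mathrm{ab}\Pr(A) + \Pr(B \cap \neg A).
\]
Both bounds follow by controlling the two terms on the right-hand side using only \cref{eq:label-set-constraint-alpha-appendix}, i.e.\ $\Pr(A) \ge 1-\alpha$ and hence $\Pr(\neg A) \le \alpha$. This relies on $\Pr(A) > 0$, which holds since $\alpha<1$, so that $\delta_\mathrm{ab}$ is well defined.

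\textbf{Lower bound.} Drop the nonnegative term $\Pr(B \cap \neg A)$ and use $\delta_\mathrm{ab} \ge 0$ together with $\Pr(A) \ge 1-\alpha$. This gives $\Pr(B) \ge \delta_\mathrm{ab}\Pr(A) \ge (1-\alpha)\delta_\mathrm{ab}$.

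\textbf{Upper bound.} Bound $\Pr(A) \le 1$, so the first term is at most $\delta_\mathrm{ab}$. Bound $\Pr(B \cap \neg A) \le \Pr(\neg A) \le \alpha$. Adding the two gives $\Pr(B) \le \delta_\mathrm{ab} + \alpha$.

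\textbf{Tightening under abductive soundness.} Invoke \cref{prop:abductive-soundness-implies-delta-ab}, which shows that abductive soundness gives $\delta_\mathrm{ab} = 1$. Plugging this into the lower bound yields $\Pr(\vc^* \in \Cab) \ge 1-\alpha$. A direct argument also works: under soundness $A \subseteq B$, since $\vc^* \in g^{-\dagger}(\vy^*) \subseteq g^{-\dagger}(\Yset)$, so $\Pr(B) \ge \Pr(A) \ge 1-\alpha$ by monotonicity.

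There is no real obstacle. The only delicate points are interpreting the probability correctly, namely over the joint draw of the calibration data and the test triple (as in the marginal CP guarantee), and noting that the conditional probability defining $\delta_\mathrm{ab}$ is well defined because $\Pr(A) \ge 1-\alpha > 0$.
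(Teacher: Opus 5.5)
Your proposal is correct and follows the paper's proof essentially step for step: the same law-of-total-probability decomposition over $A=\{\vy^*\in\Yset\}$, dropping the second term for the lower bound, bounding the first term by $\delta_\mathrm{ab}$ and the second by $\alpha$ for the upper bound, and invoking $\delta_\mathrm{ab}=1$ under abductive soundness. Writing the second term as $\Pr(B\cap\neg A)$ is a small improvement, since it avoids conditioning on $\neg A$ when $\Pr(\neg A)=0$.
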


\begin{proof}

Let the two binary events $A = (\vy^* \in \Yset)$ and $B = (\vc^* \in \Cab)$.
We first derive the lower bound. By the law of total probability,
\begin{align}
    \Pr(B)
        &= \Pr(B \mid A)\Pr(A) + \Pr(B \mid \neg A)\Pr(\neg A) \label{eq:law-of-total-probability}
    \\
        & \geq \Pr(B \mid A)\Pr(A)
        \label{eq:prob-b-coverage-abduction}
\end{align}
Since $\Yset$ is a conformal label set with miscoverage $\alpha$, %
we have that, by~\cref{eq:task-only-guarantee}:
\[
    \text{Pr}(A) \geq 1 - \alpha
\]
We can plug this result into the previous one, \ie $\Pr(B) \geq \Pr(B \mid A)\Pr(A)$~(\cref{eq:prob-b-coverage-abduction}), to obtain
\begin{align}
    \Pr(B) &\ge 
    \Pr( B | A) (1-\alpha) \label{eq:proof-ab-lower}\\
\end{align}
Which proofs the lower bound:
\begin{align}
    \Pr(\vc^* \in \Cab)
    &\ge
    \Pr(\vc^* \in \Cab \mid \vy^* \in \Yset)(1 - \alpha).
\end{align}
Next, we prove the upper bound. Consider again the right-hand side of~\cref{eq:law-of-total-probability}.  Since $\Pr(B \mid \neg A) \le 1$ and $\Pr(\neg A) \le \alpha$, we have
\[ \label{eq:proof-ab-upper-right}
    \Pr(B \mid \neg A) \Pr(\neg A) \le \alpha.
\]
Moreover, $\Pr(A) \le 1$.  Hence,
\[ \label{eq:proof-ab-upper-left}
    \Pr(B \mid A) \Pr(A) \le \Pr(B \mid A).
\]
Combining these observations yields: %
\begin{align}  \label{eq:proof-ab-upper}
    \Pr(B) &= \Pr(B \mid A)\Pr(A) + \Pr(B \mid \neg A)\Pr(\neg A)
    \tag{Law of total probability - ~\cref{eq:law-of-total-probability}}
    \\ 
    &\le \Pr(B \mid A) + \Pr(B \mid \neg A)\Pr(\neg A) \tag{Substitute the left term - \cref{eq:proof-ab-upper-left}} \\
    & \le \Pr(B \mid A) + \alpha \tag{Substitute the right term - \cref{eq:proof-ab-upper-right}}
\end{align}
Substituting the expression for $A$ and $B$, we have the upper bound 
\[
 \Pr(\vc^* \in \Cab) \le \alpha + \delta_\mathrm{ab}.
\]

Under \emph{abductive soundness} (\cref{def:soundness}), 
by \cref{prop:abductive-soundness-implies-delta-ab} we have $\delta_\mathrm{ab} = 1$. Hence, we get
\[
    \Pr(B) \geq \delta_\mathrm{ab} %
    (1 - \alpha) \geq 1 - \alpha \, .
\]
While for the upper bound~\cref{eq:proof-ab-upper-right}
\[
    \Pr(B) \le \delta_\mathrm{ab} %
    + \alpha \le \min(1, \delta_{\textrm{ab}} + \alpha) \, .
\]
This concludes the proof.
\end{proof}

\subsection{Coverage proof of Product Set~(\cref{propr:marginal-coverage-cprod})}
\label{proof:marginal-coverage-cprod}

For $0<\beta_j<1$, with $j \in \{1, \ldots k\}$, and  for any $(\vx, \vc^*)$ in the support of $p^*(\vx, \vc, \vy)$, we recall  the definition of individual \textbf{\textit{concept sets }} $\Cind$ that satisfy:
\[
   \Pr (\vc^*_j \in \Cind ) \geq 1 - \beta_j \, .
\]
The cartesian product set is given by
\[
    \Cset = \bigtimes_{j=1}^{k} \Cind.
    \label{eq:cset-product-appendix}
\]

\begin{proposition*}[Marginal coverage of Product Set~\ref{propr:marginal-coverage-cprod}], %
\label{prop:conformality-individual}
For any NeSy CBM $(f,g)$, let $0 < \beta_j < 1$ and $0 < \beta < 1$ such that $\beta = \sum_{j=1}^{k} \beta_j$. If $\Cind$, for $j \in [1, \dots, k]$, are concept conformal sets with miscoverage level $\beta_j$, respectively~(\cref{eq:concept-only-guarantee}), then let $\Cprod$ be the set constructed as in~\cref{eq:product-concept-set} based on the $k$ sets $\Cind$. Then, for every $(\vx, \vc^*, \vy^*) \in \mathrm{supp}(p^*(\vx, \vc, \vy))$:
\[
    \textstyle
    \Pr(\vc^* \in \Cprod) \ge 1 - \beta.
\]

\begin{proof}
By construction, we have $\vc^* \in \Cprod$ if and only if $\vc^*_j \in \Cind$ for all $j \in [k]$. 
Therefore, the probability that the joint vector $\vc^*$ does not belong to $\Cprod$ can be written as %
\[
    \Pr(\vc^* \notin \Cprod) 
    = \Pr\left(\bigcup_{j=1}^k 
    \{ \vc^*_j \notin \Cind \}
    \right). 
\]

Applying the union bound (Boole's inequality) yields
\[
    \Pr\left(\bigcup_{j=1}^k \{ \vc^*_j \notin \Cind\} \right)
    \le \sum_{j=1}^k \Pr( \vc^*_j \notin \Cind)
    \le \sum_{j=1}^k \beta_j,
\]
where the last inequality uses the individual coverage guarantees. Subtracting from 1, we obtain
\[
    \Pr(\vc^* \in \Cprod) = 1 - \Pr(\vc^* \notin \Cprod) \ge 1 - \sum_{i=j}^k \beta_j.
\]

\end{proof}
\end{proposition*}

\begin{corollary}[Bonferroni's correction]
\label{cor:bonferroni-coverage}
If we desire joint coverage at level $1 - \beta$, using Bonferroni's correction, \ie setting $\beta_j = \beta / k$ for all $j$ guarantees
\[
    \Pr(\vc^* \in \Cprod) \ge 1 - \sum_{i=1}^k \frac{\beta}{k} = 1 - \beta.
\]
This follows directly from the proposition above. \qed
\end{corollary}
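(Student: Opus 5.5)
The corollary follows from the preceding proposition on the coverage of $\Cprod$ (\cref{propr:marginal-coverage-cprod}) by a specific choice of the per-concept miscoverage levels. Concretely, I will set $\beta_j := \beta/k$ for every $j \in [k]$. Two things then need to be checked.

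First, this choice is admissible for the proposition, which requires $0 < \beta_j < 1$. Since $0 < \beta < 1$ and $k \ge 1$, we get $0 < \beta/k \le \beta < 1$, so each $\beta_j$ is a valid level. Each per-concept set $\Cind$ is then built with standard CP at level $\beta/k$ on its per-concept calibration set $\calD_{C_j}$. The usual conformal guarantee (\cref{eq:basic-conformal-set}) gives $\Pr(\vc^*_j \in \Cind) \ge 1 - \beta/k$, which is exactly the hypothesis the proposition needs.

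Second, the aggregate level must match the target. Here $\sum_{j=1}^k \beta_j = k \cdot \beta/k = \beta$, so the proposition's union-bound argument applies directly. It yields $\Pr(\vc^* \in \Cprod) \ge 1 - \sum_{j=1}^k \beta/k = 1 - \beta$.

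I do not expect a genuine obstacle. The only care point is that the union bound behind the proposition does not need independence between the events $\{\vc^*_j \notin \Cind\}$. Those sets share the input $\vx$ and may be built from overlapping calibration data, so independence could fail. Because Boole's inequality holds for arbitrary events, the bound is valid regardless.
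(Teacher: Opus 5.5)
Your proposal is correct and takes the same approach as the paper, which instantiates the product-set coverage proposition with $\beta_j = \beta/k$ so that $\sum_j \beta_j = \beta$. Your extra checks are sound and simply make explicit what the paper leaves implicit: that $\beta/k \in (0,1)$ is an admissible level, and that Boole's inequality needs no independence among the per-concept miscoverage events.
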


\subsection{Coverage proof of \ConceptsPlusDeduction~(\cref{prop:nesy-forward-coverage})}
\label{proof:nesy-forward-coverage}

\begin{proposition*}[Marginal coverage of \ConceptsPlusDeduction~\cref{prop:nesy-forward-coverage}]
For $0<\beta<1$ and for any $(\vx, \vc^*, \vy^*)$ in  the support of $p^*(\vx, \vc, \vy)$, suppose $\Cset$ satisfies \cref{eq:concept-set-constraint-beta-appendix}. Let $\Yfwset := g^{\dagger}(\Cset)$. Then
\[
    (1 - \beta) \delta_\mathrm{de} \le \Pr(\vy^* \in \Yfwset) \le \beta + \delta_\mathrm{de}
\]
where $\delta_\mathrm{de} := \Pr(\vy^* \in \Yfwset \mid \vc^* \in \Cset)$ is the probability that deduction recovers the ground-truth labels $\vy^*$ given that the concept set $\Cset$ contains the correct concepts $\vc^*$.  Moreover, if the NeSy-CBM satisfies deductive soundness (\cref{def:soundness}), the bound tightens to:
\[
    \Pr(\vy^* \in \Yfwset) \ge 1 - \beta.
\]
\end{proposition*}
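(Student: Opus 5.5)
The argument mirrors the proof of \cref{prop:nesy-abduction-coverage}, with the roles of labels and concepts exchanged. Let $A := (\vc^* \in \Cset)$ and $B := (\vy^* \in \Yfwset)$, so that $\delta_\mathrm{de} = \Pr(B \mid A)$. The law of total probability gives
\[
    \Pr(B) = \Pr(B \mid A)\Pr(A) + \Pr(B \mid \neg A)\Pr(\neg A),
\]
and both bounds will be read off this decomposition.

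For the lower bound, drop the non-negative second term, so $\Pr(B) \ge \Pr(B \mid A)\Pr(A)$. By \cref{eq:concept-set-constraint-beta-appendix}, $\Pr(A) \ge 1-\beta$, which yields $\Pr(\vy^* \in \Yfwset) \ge (1-\beta)\,\delta_\mathrm{de}$.

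For the upper bound, bound the first term using $\Pr(A) \le 1$, so it is at most $\delta_\mathrm{de}$. Bound the second term using $\Pr(B \mid \neg A) \le 1$ together with $\Pr(\neg A) \le \beta$, which again follows from \cref{eq:concept-set-constraint-beta-appendix}. Summing gives $\Pr(B) \le \delta_\mathrm{de} + \beta$. If the conditional probability is undefined because $\Pr(A)=0$, this cannot occur since $\Pr(A)\ge 1-\beta>0$, so there is no degenerate case.

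For the tightened statement under deductive soundness, \cref{prop:deductive-soundness-implies-delta-de} gives $\delta_\mathrm{de}=1$. Substituting into the lower bound yields $\Pr(\vy^* \in \Yfwset) \ge 1-\beta$, and the upper bound becomes the trivial bound $\min(1, 1+\beta) = 1$.

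No step is a real obstacle; the only care needed is that probabilities are marginal over the randomness of both the calibration set and the test point. This is the same randomness under which the hypothesis on $\Cset$ holds, so the coverage hypothesis can be plugged in directly without any exchangeability argument.
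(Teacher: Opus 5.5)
Your proof is correct and follows the paper's argument essentially step for step. You use the same events $A,B$ and the same law-of-total-probability decomposition, drop the second term and use $\Pr(A)\ge 1-\beta$ for the lower bound, and use $\Pr(A)\le 1$ with $\Pr(B\mid\neg A)\Pr(\neg A)\le\beta$ for the upper bound, then set $\delta_\mathrm{de}=1$ under deductive soundness via \cref{prop:deductive-soundness-implies-delta-de}. That is the right reference; the paper's proof mistakenly cites the proposition itself at that point.
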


\begin{proof}

Let the two binary events $A = (\vc^* \in \Cset)$ and $B = (\vy^* \in \Yfwset)$.
We first derive the lower bound. By the law of total probability~\cref{eq:law-of-total-probability}
\begin{align}
    \Pr(B)
        &= \Pr(B \mid A)\Pr(A) + \Pr(B \mid \neg A)\Pr(\neg A) \label{eq:law-of-total-probability-fw}
    \\
        & \geq \Pr(B \mid A)\Pr(A)
        \label{eq:prob-b-coverage-abduction-2}
\end{align}
Since $\Cset$ is a conformal concept set with miscoverage $\beta$ by \cref{eq:concept-set-constraint-beta-appendix}, we have that
\[
    \Pr(A) = \Pr(\vc^* \in \Cset) \geq 1 - \beta.
\]
We can plug this result into the previous one, \ie $\Pr(B) \geq \Pr(B \mid A)\Pr(A)$~(\cref{eq:prob-b-coverage-abduction-2}), to obtain
\begin{align}
    \Pr(B) 
    &\ge 
    \Pr(B \mid A)(1-\beta) \label{eq:proof-de-lower}
\end{align}
Which proofs the lower-bound:
\begin{align}
    \Pr(\vy^* \in \Yfwset)
    &\ge
    \Pr(\vy^* \in \Yfwset \mid \vc^* \in \Cset)(1 - \beta).
\end{align}
Next, we prove the upper bound. Consider again the right-hand side of \cref{eq:law-of-total-probability-fw}. Since $\Pr(B \mid \neg A) \le 1$ and $\Pr(\neg A) \le \beta$, we have:
\begin{equation} \label{eq:proof-de-upper-right}
    \Pr(B \mid \neg A) \Pr(\neg A) \le \beta.
\end{equation}
Moreover, $\Pr(A) \le 1$. Hence,
\begin{equation} \label{eq:proof-de-upper-left}
    \Pr(B \mid A) \Pr(A) \le \Pr(B \mid A).
\end{equation}
Combining these observations yields:
\begin{align}  \label{eq:proof-de-upper}
    \Pr(B) 
    &= \Pr(B \mid A)\Pr(A) + \Pr(B \mid \neg A)\Pr(\neg A) \tag{Law of total probability - ~\cref{eq:law-of-total-probability}} \\
    &\le \Pr(B \mid A) + \Pr(B \mid \neg A)\Pr(\neg A) \quad \tag{Substitute the left term - \cref{eq:proof-de-upper-left}} \\
    &\le \Pr(B \mid A) + \beta \quad \tag{Substitute the right term - \cref{eq:proof-de-upper-right}}
\end{align}
Substituting the expression for $A$ and $B$, we have the upper bound
\[
    \Pr(\vy^* \in \Yfwset) \leq \beta + \delta_\mathrm{de} 
\]
Under \emph{deductive soundness} (\cref{def:soundness}), by~\cref{prop:nesy-forward-coverage}, we have $\delta_\mathrm{de} = 1$. Hence, we get
\[
    \Pr(B) \geq \delta_\mathrm{de} (1 - \beta) \geq 1 - \beta.
\]
While for the upper bound~\cref{eq:proof-de-upper-right}:
\[
    \Pr(B) \le \delta_\mathrm{de} + \beta \leq \min(1, \delta_\mathrm{de} + \beta).
\]
This concludes the proof.

\end{proof}

\subsection{Coverage proofs of \method concept set~(\cref{prop:joint-coverage})}

\begin{proposition*}[Marginal coverage of $\Crev$~(\cref{prop:joint-coverage})]
\label{prop:revised-concept-using-knowledge}
For $0<\alpha<1$ and $0<\beta<1$, and for any $(\vx, \vc^*, \vy^*)$ in  the support of $p^*(\vx, \vc, \vy)$, suppose $\Yset$ satisfies \cref{eq:label-set-constraint-alpha-appendix} and $\Cset$ satisfies \cref{eq:concept-set-constraint-beta-appendix}. Let $\Yfwset := g^{\dagger}(\Cset)$ and $\Cab := g^{-\dagger}(\Yset)$. Define the \emph{\textbf{refined concept set}} as $\Crev := \Cset \cap \Cab$. Then
\begin{align}
    \Pr(\vc^* \in \Crev)
    &\ge (1-\alpha)\,\Pr(\vc^* \in \Cab \mid \vy^* \in \Yset) - \beta \notag\\
    &\quad + \Pr(\vc^* \notin \Cset \land \vc^* \notin \Cab)
\end{align}
\end{proposition*}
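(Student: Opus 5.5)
The plan is to use inclusion--exclusion on the intersection $\Crev = \Cset \cap \Cab$. Write $E_1 = \{\vc^* \in \Cset\}$ and $E_2 = \{\vc^* \in \Cab\}$. Since $\Pr(E_1 \cup E_2) = 1 - \Pr(\neg E_1 \land \neg E_2)$, inclusion--exclusion gives the identity
\[
    \Pr(\vc^* \in \Crev) = \Pr(E_1) + \Pr(E_2) - 1 + \Pr(\vc^* \notin \Cset \land \vc^* \notin \Cab).
\]
This is an equality, not a bound. It explains why the joint-failure term appears in the statement with a plus sign, and why no further slack is lost there.

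Next I bound the two marginal terms separately, using results already established. For $\Pr(E_1)$, the concept-set guarantee \cref{eq:concept-set-constraint-beta-appendix} gives $\Pr(E_1) \ge 1-\beta$. For $\Pr(E_2)$, I invoke the lower bound of \cref{prop:nesy-abduction-coverage}, namely $\Pr(\vc^* \in \Cab) \ge (1-\alpha)\,\Pr(\vc^* \in \Cab \mid \vy^* \in \Yset)$. That lower bound came from the law of total probability, dropping the nonnegative $\neg A$ term, together with $\Pr(\vy^* \in \Yset) \ge 1-\alpha$. Substituting both bounds into the identity gives
\[
    \Pr(\vc^* \in \Crev) \ge (1-\beta) + (1-\alpha)\delta_\mathrm{ab} - 1 + \Pr(\vc^* \notin \Cset \land \vc^* \notin \Cab),
\]
which simplifies to exactly the claimed bound.

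Finally, for the sharpened statement under abductive soundness, I substitute $\delta_\mathrm{ab} = 1$ from \cref{prop:abductive-soundness-implies-delta-ab}. This yields $1-\alpha-\beta$ plus the nonnegative correction term.

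The label-side bound in \cref{prop:joint-coverage} follows symmetrically. There I take $E_1 = \{\vy^* \in \Yset\}$ and $E_2 = \{\vy^* \in \Yfwset\}$, and use \cref{prop:nesy-forward-coverage} and \cref{prop:deductive-soundness-implies-delta-de}.

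I do not expect a real obstacle. The only delicate point is measure-theoretic: the probabilities are over the joint randomness of calibration and test data, and the conditional $\delta_\mathrm{ab}$ must be well defined, which requires $\Pr(\vy^* \in \Yset) > 0$. This holds because $\Pr(\vy^* \in \Yset) \ge 1-\alpha > 0$. I would note it explicitly so that the appeal to \cref{prop:nesy-abduction-coverage} is legitimate.
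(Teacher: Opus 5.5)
Your proof is correct and follows the paper's argument essentially verbatim. Both use the inclusion--exclusion identity $\Pr(A\cap B)=\Pr(A)+\Pr(B)-1+\Pr(\neg A\cap\neg B)$, bound the marginals by $\Pr(A)\ge 1-\beta$ and $\Pr(B)\ge(1-\alpha)\delta_{\mathrm{ab}}$ (the latter from \cref{prop:nesy-abduction-coverage}), and set $\delta_{\mathrm{ab}}=1$ under abductive soundness; your note that the conditional is well defined because $\Pr(\vy^*\in\Yset)\ge 1-\alpha>0$ is a small point the paper leaves implicit.
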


\begin{proof}

Let $A := (\vc^* \in \Cset)$ and $B := (\vc^* \in \Cab)$, so that
\[
    \Pr(\vc^* \in \Crev) = \Pr(A \cap B).
\]
By the inclusion-exclusion identity, we have
\[
    \Pr(A \cap B) = \Pr(A) + \Pr(B) - \Pr(A \cup B).\label{eq:inclusion-exclusion}
\]
Since $\Pr(A \cup B) = 1 - \Pr(\neg A \cap \neg B)$, this is equivalent to
\begin{equation}
    \Pr(A \cap B) = \Pr(A) + \Pr(B) - 1 + \Pr(\neg A \cap \neg B).
    \label{eq:ie-crev}
\end{equation}
We know that $\Pr(A) \ge 1 - \beta$ follows from~\cref{eq:concept-set-constraint-beta-appendix}, and that $\Pr(B) \ge (1-\alpha)\delta_{\mathrm{ab}}$ follows from~\cref{prop:nesy-abduction-coverage}. Therefore, we can proceed to obtain the lower bound as follows:
\begin{align}
    \Pr(\vc^* \in \Crev)
    &= \Pr(A \cap B)
    \tag{def.\ of $\Crev$} \\
    &= \Pr(A) + \Pr(B) - 1 + \Pr(\neg A \cap \neg B)
    \tag{inclusion--exclusion~\cref{eq:ie-crev}} \\
    &\ge (1-\beta) + \Pr(B) - 1 + \Pr(\neg A \cap \neg B)
    \tag{\cref{eq:concept-set-constraint-beta-appendix}} \\
    &\ge (1-\beta) + (1-\alpha)\delta_{\mathrm{ab}} - 1 + \Pr(\neg A \cap \neg B)
    \tag{\cref{prop:nesy-abduction-coverage}} \\
    &= (1-\alpha)\delta_{\mathrm{ab}} - \beta
      + \Pr(\vc^* \notin \Cset \land \vc^* \notin \Cab).
    \tag{subst. $\Pr(\neg A \cap \neg B)$}
\end{align}

Under \emph{abductive soundness}~(\cref{def:soundness}), we have
$\delta_{\mathrm{ab}} = 1$.
Hence, we get
\[
    \Pr(\vc^* \in \Crev) \ge 1 - \alpha - \beta
    + \Pr(\vc^* \notin \Cset \land \vc^* \notin \Cab).
\]

This concludes the proof.

\end{proof}

\begin{remark}
The term $\Pr(\vc^* \notin \Cset \land \vc^* \notin \Cab)$ 
is the probability that the ground-truth concept $\vc^*$ is not captured by \emph{neither} the concept conformal set ($\Cset$) \emph{nor} the abduced concept set ($\Cab$).

\end{remark}

\subsection{Coverage proofs of \method label set~(\cref{prop:joint-coverage})}

\begin{proposition*}[Marginal coverage of $\Yrev$~(\cref{prop:joint-coverage})]
\label{prop:revised-label-forward}
For $0<\alpha<1$ and $0<\beta<1$, and for any $(\vx, \vc^*, \vy^*)$ in  the support of $p^*(\vx, \vc, \vy)$, suppose $\Yset$ satisfies \cref{eq:label-set-constraint-alpha-appendix} and $\Cset$ satisfies \cref{eq:concept-set-constraint-beta-appendix}. Let $\Yfwset := g^{\dagger}(\Cset)$ and $\Cab := g^{-\dagger}(\Yset)$. Define the \emph{\textbf{refined label set}} as $\Yrev = \Yset \cap \Yfwset$. Then:
\begin{align}
    \Pr(\vy^* \in \Yrev)
    &\ge (1-\beta)\,\Pr(\vy^* \in \Yfwset \mid \vc^* \in \Cset) - \alpha \notag\\
    &\quad + \Pr(\vy^* \notin \Yset \land \vy^* \notin \Yfwset).
\end{align}
\end{proposition*}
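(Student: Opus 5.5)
The proof mirrors the argument just given for $\Crev$, with the roles of labels and concepts exchanged. I would introduce the two events $A := (\vy^* \in \Yset)$ and $B := (\vy^* \in \Yfwset)$. Since $\Yrev = \Yset \cap \Yfwset$ by definition, we have $\Pr(\vy^* \in \Yrev) = \Pr(A \cap B)$. The whole argument then reduces to lower-bounding a probability of intersection using the two marginal guarantees already available.

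\emph{Step 1: inclusion--exclusion.} I would use inclusion--exclusion together with $\Pr(A \cup B) = 1 - \Pr(\neg A \cap \neg B)$ to write
\[
\Pr(A \cap B) = \Pr(A) + \Pr(B) - 1 + \Pr(\neg A \cap \neg B),
\]
exactly as in \cref{eq:ie-crev}. This is an identity, not a bound, so the joint-failure term $\Pr(\vy^* \notin \Yset \land \vy^* \notin \Yfwset)$ appears with no slack.

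\emph{Step 2: lower-bound the two marginals.} For the first term, $\Pr(A) \ge 1 - \alpha$ follows directly from the label conformal guarantee \cref{eq:label-set-constraint-alpha-appendix}. For the second, $\Pr(B) \ge (1-\beta)\,\delta_\mathrm{de}$ is the lower bound of \cref{prop:nesy-forward-coverage}, with $\delta_\mathrm{de} = \Pr(\vy^* \in \Yfwset \mid \vc^* \in \Cset)$. Substituting both into the identity gives
\[
(1-\alpha) + (1-\beta)\delta_\mathrm{de} - 1 + \Pr(\neg A \cap \neg B) = (1-\beta)\delta_\mathrm{de} - \alpha + \Pr(\vy^* \notin \Yset \land \vy^* \notin \Yfwset),
\]
which is the claimed bound.

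\emph{Step 3: the sound case.} For the tightened statement under soundness, I would invoke \cref{prop:deductive-soundness-implies-delta-de}. It gives $\delta_\mathrm{de} = 1$, so the bound becomes $1 - \alpha - \beta$ plus the joint-failure term.

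There is no real obstacle here. The one point needing care is that the substitutions in Step 2 are valid because $\Pr(A)$ and $\Pr(B)$ enter the identity with positive coefficients, so replacing each by a lower bound preserves the inequality. Everything else is the same calculation already carried out for $\Crev$.
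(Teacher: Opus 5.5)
Your proposal is correct and matches the paper's proof essentially step for step. It uses the same events $A = (\vy^* \in \Yset)$ and $B = (\vy^* \in \Yfwset)$ and the same identity $\Pr(A \cap B) = \Pr(A) + \Pr(B) - 1 + \Pr(\neg A \cap \neg B)$, substitutes $\Pr(A) \ge 1-\alpha$ and $\Pr(B) \ge (1-\beta)\delta_\mathrm{de}$ from \cref{prop:nesy-forward-coverage}, and then sets $\delta_\mathrm{de} = 1$ under deductive soundness for the tightened form.
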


\begin{proof}
Let $A := (\vy^* \in \Yset)$ and $B := (\vy^* \in \Yfwset)$, so that
\[
    \Pr(\vy^* \in \Yrev) = \Pr(A \cap B).
\]
By the inclusion--exclusion identity,
\[
    \Pr(A \cap B) = \Pr(A) + \Pr(B) - \Pr(A \cup B).
\]
Since $\Pr(A \cup B) = 1 - \Pr(\neg A \cap \neg B)$, this is equivalent to
\begin{equation}
    \Pr(A \cap B) = \Pr(A) + \Pr(B) - 1 + \Pr(\neg A \cap \neg B).
    \label{eq:ie-yrev}
\end{equation}
We know that $\Pr(A) \ge 1 - \alpha$ follows from~\cref{eq:label-set-constraint-alpha-appendix}, and that $\Pr(B) \ge (1-\beta)\delta_{\mathrm{de}}$ follows from~\cref{prop:nesy-forward-coverage}. Therefore:
\begin{align}
    \Pr(\vy^* \in \Yrev)
    &= \Pr(A \cap B)
    \tag{def.\ of $\Yrev$} \\
    &= \Pr(A) + \Pr(B) - 1 + \Pr(\neg A \cap \neg B)
    \tag{inclusion--exclusion~\cref{eq:ie-yrev}} \\
    &\ge (1-\alpha) + \Pr(B) - 1 + \Pr(\neg A \cap \neg B)
    \tag{\cref{eq:label-set-constraint-alpha-appendix}} \\
    &\ge (1-\alpha) + (1-\beta)\delta_{\mathrm{de}} - 1 + \Pr(\neg A \cap \neg B)
    \tag{\cref{prop:nesy-forward-coverage}} \\
    &=  (1 - \beta)\delta_{\mathrm{de}} - \alpha 
      + \Pr(\vy^* \notin \Yset \land \vy^* \notin \Yfwset).
    \tag{subst. $\Pr(\neg A \cap \neg B)$}
\end{align}
Under \emph{deductive soundness}~(\cref{def:soundness}), we have $\delta_{\mathrm{de}} = 1$. Hence, we get
\[
    \Pr(\vy^* \in \Yrev) \ge 1 - \alpha - \beta
    + \Pr(\vy^* \notin \Yset \land \vy^* \notin \Yfwset).
\]
This concludes the proof.
\end{proof}

\begin{remark}
The term $\Pr(\vy^* \notin \Yset \land \vy^* \notin \Yfwset)$ is the probability that the ground-truth label $\vy^*$ is captured by \emph{neither} the label conformal set ($\Yset$) \emph{nor} the deduced label set ($\Yfwset$). 
\end{remark}

\subsection{Fixed point proof of \method~(\cref{prop:joint-fixed-point})} 

To prove that \method does not require further iterations, let $R$ be the operator that \emph{revises} the label and concept sets. Formally:
\[
    R: (\Gamma, \Upsilon) \mapsto (R(\Gamma), R(\Upsilon)) \subseteq \calC \times \calY ,
    \qquad
    R(\Gamma, \Upsilon) := \bigl(\Gamma \cap g^{-\dagger}(\Upsilon), \Upsilon \cap g^{\dagger}(\Gamma)\bigr).
    \label{eq:def-revision-map}
\]
By construction, \method applies $R$ \textbf{only once} to the pair $(\Cset, \Yset)$, producing 
\[
    (\Crev, \Yrev) = R(\Cset, \Yset)
\]

Before introducing the main proposition, we show two useful lemmas.

\begin{lemma}[Set behavior of $g^{-\dagger}$ on intersection]
\label{lemma:g-inverse-distributes}
For any $\calA, \calB \subseteq \calY$,
\[
    g^{-\dagger}(\calA \cap \calB) \;\subseteq\; g^{-\dagger}(\calA) \cap g^{-\dagger}(\calB).
\]
The reverse inclusion does \emph{not} hold in general for $g^\dagger$.
\end{lemma}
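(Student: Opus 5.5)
The inclusion is a direct element-chase from the definition of $g^{-\dagger}$ on sets given in \cref{eq:inverse-g-appendix}, namely $g^{-\dagger}(\calY_s) = \bigcup_{\vy \in \calY_s} g^{-\dagger}(\vy)$. The one fact the argument rests on is that the preimage operator is \emph{monotone} with respect to set inclusion: if $\calY_1 \subseteq \calY_2$, then every term in the union defining $g^{-\dagger}(\calY_1)$ also appears in the union defining $g^{-\dagger}(\calY_2)$, so $g^{-\dagger}(\calY_1) \subseteq g^{-\dagger}(\calY_2)$. This holds even though $g^\dagger(\vc)$ may be a set of several labels when the argmax is not unique.

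Next, I apply monotonicity twice, using $\calA \cap \calB \subseteq \calA$ and $\calA \cap \calB \subseteq \calB$. This gives $g^{-\dagger}(\calA \cap \calB) \subseteq g^{-\dagger}(\calA)$ and $g^{-\dagger}(\calA \cap \calB) \subseteq g^{-\dagger}(\calB)$, and the claimed inclusion into the intersection follows. Equivalently, one can argue pointwise: take $\vc$ with $\vy \in g^\dagger(\vc)$ for some $\vy \in \calA \cap \calB$. This same $\vy$ witnesses both $\vc \in g^{-\dagger}(\calA)$ and $\vc \in g^{-\dagger}(\calB)$.

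For the claim that the reverse inclusion fails in general, a counterexample suffices. The failure comes from ties in the argmax, since with ties a single $\vc$ can be abduced from two different labels. Let $g^\dagger(\vc_0) = \{\vy_1, \vy_2\}$, for example because $g(\vc_0)$ assigns equal mass to $\vy_1$ and $\vy_2$. Set $\calA = \{\vy_1\}$ and $\calB = \{\vy_2\}$. Then $\vc_0 \in g^{-\dagger}(\calA) \cap g^{-\dagger}(\calB)$, but $\calA \cap \calB = \emptyset$, so $g^{-\dagger}(\calA \cap \calB) = \emptyset$.

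The only point needing care is stating this counterexample concretely and interpreting the lemma's wording "for $g^\dagger$" as referring to the preimage operator built from $g^\dagger$. A further remark may be worth adding: even without ties, the reverse inclusion still fails for images. For instance, $g^\dagger(\Gamma_1 \cap \Gamma_2) \subsetneq g^\dagger(\Gamma_1) \cap g^\dagger(\Gamma_2)$ whenever two distinct concept vectors map to the same label, as in \MNISTAdd.
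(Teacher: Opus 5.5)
Your proof is correct and matches the paper's: the pointwise argument, where a single witness $\vy \in g^\dagger(\vc) \cap \calA \cap \calB$ certifies membership in both preimages, is exactly the paper's proof (your monotonicity phrasing just repackages it). Your tie-based counterexample with $\calA = \{\vy_1\}$, $\calB = \{\vy_2\}$ is a special case of the paper's choice $\vy_1 \in \calA \setminus \calB$, $\vy_2 \in \calB \setminus \calA$; in your optional remark on images, note that the strict inclusion holds only for suitable sets such as $\Gamma_1 = \{\vc_1\}$, $\Gamma_2 = \{\vc_2\}$ with $\vc_1 \neq \vc_2$ and $g^\dagger(\vc_1) = g^\dagger(\vc_2)$, not for arbitrary $\Gamma_1, \Gamma_2$.
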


\begin{proof}
Let $\vc \in g^{-\dagger}(\calA \cap \calB)$, there exists $\vy \in g^\dagger(\vc) \cap (\calA \cap \calB)$. Then $\vy \in g^\dagger(\vc) \cap \calA$, so $\vc \in g^{-\dagger}(\calA)$, and likewise $\vc \in g^{-\dagger}(\calB)$. Hence $\vc \in g^{-\dagger}(\calA) \cap g^{-\dagger}(\calB)$.

The reverse inclusion fails when ties are present: if $g^\dagger(\vc) = \{y_1, y_2\}$ with $y_1 \in \calA \setminus \calB$ and $y_2 \in \calB \setminus \calA$, then $\vc \in g^{-\dagger}(\calA) \cap g^{-\dagger}(\calB)$ but $\vc \notin g^{-\dagger}(\calA \cap \calB)$.
\end{proof}

\begin{lemma}%
\label{lemma:round-trip}
For any $\Gamma' \subseteq \calC$,
\[
    \Gamma' \subseteq g^{-\dagger}\bigl(g^{\dagger}(\Gamma')\bigr).
\]
\label{eq:g-transitivity}
\end{lemma}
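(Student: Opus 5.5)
The plan is a direct element-chasing argument from the definitions of $g^{\dagger}$ and $g^{-\dagger}$. Recall from \cref{eq:inverse-g-appendix} that $g^{-\dagger}(\calY_s) = \bigcup_{\vy \in \calY_s} \{\vc \in \calC : \vy \in g^{\dagger}(\vc)\}$. Equivalently, $\vc \in g^{-\dagger}(\calY_s)$ if and only if $g^{\dagger}(\vc) \cap \calY_s \neq \emptyset$.

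Similarly, the image of a set is $g^{\dagger}(\Gamma') = \bigcup_{\vc' \in \Gamma'} g^{\dagger}(\vc')$. This follows the paper's convention for images of sets, applied to the set-valued map $g^{\dagger}$. I will state both characterizations explicitly at the start, since the whole proof rests on them.

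Fix an arbitrary $\vc \in \Gamma'$. The goal is to exhibit some $\vy \in g^{\dagger}(\Gamma')$ with $\vy \in g^{\dagger}(\vc)$.

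\begin{enumerate}[leftmargin=2em,itemsep=1pt,topsep=1pt]
\item First, observe that $g^{\dagger}(\vc) = \argmax_{\vy \in \calY} g(\vc)$ is nonempty. The reason is that $\calY$ is finite (a standing assumption for classification NeSy-CBMs), so the maximum of $g(\vc)$ over $\calY$ is attained.
\item Pick any $\vy \in g^{\dagger}(\vc)$. Since $\vc \in \Gamma'$, this $\vy$ lies in $\bigcup_{\vc' \in \Gamma'} g^{\dagger}(\vc') = g^{\dagger}(\Gamma')$.
\item Then $\vy \in g^{\dagger}(\vc) \cap g^{\dagger}(\Gamma')$, so by the characterization above $\vc \in g^{-\dagger}(g^{\dagger}(\Gamma'))$.
\end{enumerate}

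Since $\vc$ was arbitrary, the inclusion $\Gamma' \subseteq g^{-\dagger}(g^{\dagger}(\Gamma'))$ follows. The case $\Gamma' = \emptyset$ is trivial.

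The only step needing care is the nonemptiness of the argmax in step 1. If $g^{\dagger}(\vc)$ could be empty, the inclusion could fail for that $\vc$. I will therefore state explicitly that $\calY$ is finite, or more generally that the argmax is attained. I will also remark, in parallel with \cref{lemma:g-inverse-distributes}, that the reverse inclusion fails in general. For instance, a concept $\vc \notin \Gamma'$ that maps to the same label as some element of $\Gamma'$ belongs to the right-hand side but not to $\Gamma'$.

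This lemma will then be used in \cref{prop:one-step} to show $R(R(\Cset,\Yset)) = R(\Cset,\Yset)$.
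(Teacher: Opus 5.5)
Your proof is correct and follows essentially the same element-chasing route as the paper: fix $\vc \in \Gamma'$, observe that a label deduced from $\vc$ lies in $g^{\dagger}(\Gamma')$, and conclude $\vc \in g^{-\dagger}(g^{\dagger}(\Gamma'))$ from the definition of the preimage. The only difference is that you handle the set-valued $g^{\dagger}(\vc)$ carefully, picking some $\vy \in g^{\dagger}(\vc)$ and justifying its existence by the finiteness of $\calY$; the paper instead writes $g^{\dagger}(\vc) \in g^{\dagger}(\Gamma')$ as if $g^{\dagger}(\vc)$ were a single label and leaves that nonemptiness implicit.
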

\begin{proof}
Let $\vc \in \Gamma'$. By definition of the image, $g^{\dagger}(\vc) \in g^{\dagger}(\Gamma')$. By \cref{eq:inverse-g-appendix},
\[
    g^{-\dagger}\bigl(g^{\dagger}(\Gamma')\bigr)
    = \bigcup_{\vy \in g^{\dagger}(\Gamma')} \{\vc' \in \calC : \vy \in \argmax g(\vc')\},
\]
so for any $\vc' \in \calC$,
\[
    g^{\dagger}(\vc') \in g^{\dagger}(\Gamma')
    \implies
    \vc' \in g^{-\dagger}\bigl(g^{\dagger}(\Gamma')\bigr).
\]
In particular, $\Gamma' \subseteq \vg^{-\dagger}  (\vg^\dagger (\Gamma'))$. 
\qedhere
\end{proof}

Now we can proceed with the main proposition.

\begin{proposition}[\method reaches a fixed point]
\label{prop:joint-fixed-point}
For a NeSy-CBM $(f,g)$, let $R$ be defined as \cref{eq:def-revision-map}. Then, for any $\Gamma \subseteq \calC$ and $\Upsilon \subseteq \calY$,
\[
    R\bigl(R(\Gamma, \Upsilon)\bigr) = R(\Gamma, \Upsilon) \, .
\]
In particular, $(\Crev, \Yrev)$ is a \textbf{\emph{fixed point}} of $R$, and no iterative refinement is needed.
\end{proposition}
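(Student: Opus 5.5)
The plan is to reduce the fixed-point identity to two inclusions and prove each by a \emph{witness-pair} argument. Write $(\Gamma_1, \Upsilon_1) := R(\Gamma, \Upsilon)$, that is, $\Gamma_1 = \Gamma \cap g^{-\dagger}(\Upsilon)$ and $\Upsilon_1 = \Upsilon \cap g^{\dagger}(\Gamma)$. By \cref{eq:def-revision-map},
\[
    R(\Gamma_1, \Upsilon_1) = \bigl(\Gamma_1 \cap g^{-\dagger}(\Upsilon_1),\, \Upsilon_1 \cap g^{\dagger}(\Gamma_1)\bigr).
\]
Each component of $R(\Gamma_1,\Upsilon_1)$ is automatically contained in the corresponding component of $(\Gamma_1,\Upsilon_1)$. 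So it suffices to show
\[
    \Gamma_1 \subseteq g^{-\dagger}(\Upsilon_1)
    \qquad\text{and}\qquad
    \Upsilon_1 \subseteq g^{\dagger}(\Gamma_1).
\]

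Throughout, images and preimages of sets are read as unions, as in \cref{eq:inverse-g-appendix}. Concretely, $\vc \in g^{-\dagger}(\Upsilon)$ iff there is $\vy \in \Upsilon$ with $\vy \in g^\dagger(\vc)$. Likewise, $\vy \in g^{\dagger}(\Gamma)$ iff there is $\vc \in \Gamma$ with $\vy \in g^\dagger(\vc)$. Both memberships therefore amount to the existence of a pair $(\vc, \vy)$ with $\vy \in g^\dagger(\vc)$, and that pair is the witness I will carry through.

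\textbf{First inclusion.} Take $\vc \in \Gamma_1$. Then $\vc \in \Gamma$, and there is a witness $\vy \in \Upsilon$ with $\vy \in g^\dagger(\vc)$. Since $\vc \in \Gamma$, this same $\vy$ lies in $g^\dagger(\Gamma)$, hence $\vy \in \Upsilon \cap g^\dagger(\Gamma) = \Upsilon_1$. The pair $(\vc,\vy)$ then shows $\vc \in g^{-\dagger}(\Upsilon_1)$.

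\textbf{Second inclusion.} This is symmetric. Take $\vy \in \Upsilon_1$. Then $\vy \in \Upsilon$, and there is $\vc \in \Gamma$ with $\vy \in g^\dagger(\vc)$. Because $\vy \in \Upsilon$, we get $\vc \in g^{-\dagger}(\Upsilon)$, so $\vc \in \Gamma_1$, and therefore $\vy \in g^\dagger(\Gamma_1)$. Combining the two inclusions gives $R(\Gamma_1,\Upsilon_1) = (\Gamma_1,\Upsilon_1)$. Taking $(\Gamma,\Upsilon) = (\Cset,\Yset)$ yields the stated fact that $(\Crev,\Yrev)$ is a fixed point.

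The main obstacle is the ties in $g^\dagger$, i.e.\ $g^\dagger(\vc)$ being a set. This is exactly why \cref{lemma:g-inverse-distributes} only gives a one-sided inclusion, so I would avoid any argument that manipulates preimages of intersections such as $g^{-\dagger}(\Upsilon \cap g^\dagger(\Gamma))$ as whole sets. The pointwise argument sidesteps this: the label $\vy$ that justifies keeping $\vc$ in the first revision is the same label that survives into $\Upsilon_1$, and vice versa for $\vc$ justifying $\vy$. Consequently no cross-tie situation of the kind in \cref{lemma:g-inverse-distributes} can arise. \Cref{lemma:round-trip} is not needed, though it is consistent with the argument.
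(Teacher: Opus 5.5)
Your proof is correct and follows essentially the same route as the paper. Like the paper, you use absorption to reduce $R(\Gamma_1,\Upsilon_1) = (\Gamma_1,\Upsilon_1)$ to the two inclusions $\Gamma_1 \subseteq g^{-\dagger}(\Upsilon_1)$ and $\Upsilon_1 \subseteq g^{\dagger}(\Gamma_1)$, then prove each elementwise by carrying the same witness pair through the revision. Your second inclusion is, if anything, slightly cleaner: you keep $\vy \in g^{\dagger}(\vc)$ throughout, whereas the paper writes $g^{\dagger}(\vc) = \vy$, which tacitly ignores ties.
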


\begin{proof}
Let $(\Gamma_1, \Upsilon_1) = R(\Gamma, \Upsilon)$. By definition of $R$, these are
\[
    \Gamma_1 = \Gamma \cap g^{-\dagger}(\Upsilon),
    \qquad
    \Upsilon_1 = \Upsilon \cap g^\dagger (\Gamma) \, .
\]
Now, denote with $(\Gamma_2, \Upsilon_2)$ the sets obtained by applying $R$ twice on $\Gamma, \Upsilon$, namely
\[
(\Gamma_2, \Upsilon_2) = R(\Gamma_1, \Upsilon_1) = R(R(\Gamma, \Upsilon)) \, ,
\]
which are given by
\[
    \Gamma_2 = \Gamma_1 \cap g^{-\dagger}(\Upsilon_1),
    \qquad
    \Upsilon_2 = \Upsilon_1 \cap g^\dagger(\Gamma_1).
\]

Now, to proceed with the proof, we use the standard set identity: for any two sets $S, T$,
\begin{equation}
    S \subseteq T \iff S \cap T = S.
    \label{eq:absorption}
\end{equation}
This means that, following~\cref{eq:absorption}:
\begin{align}
 &(I) &\Gamma_2 = \Gamma_1 \iff \Gamma_1 \subseteq g^{-\dagger}(\Upsilon_1) = g^{-\dagger}(\Upsilon \cap g^{\dagger}(\Gamma)) \label{eq:fp-incl-I} \\ &(II) &\Upsilon_2 = \Upsilon_1 \iff \Upsilon_1 \subseteq g^\dagger(\Gamma_1) = g^{\dagger}(\Gamma \cap g^{-\dagger}(\Upsilon)). \label{eq:fp-incl-II}
\end{align}

To prove the fixed point, we prove these two inclusions in turn.

\textbf{Proof of (I): $\Gamma_1 \subseteq g^{-\dagger}\bigl(\Upsilon \cap g^{\dagger}(\Gamma)\bigr)$}. 

Let $\vc \in \Gamma_1$, this means that, since $\Gamma_1 = \Gamma \cap g^{-\dagger}(\Upsilon)$, $\vc \in \Gamma$ and $\vc \in g^{-\dagger}(\Upsilon)$. We prove $\vc \in g^{-\dagger}(\Upsilon_1)$, \ie that $g^\dagger(\vc) \cap \Upsilon_1 \neq \emptyset$.

By definition of $\Gamma_1 = \Gamma \cap g^{-\dagger}(\Upsilon)$, the concept $\vc$ satisfies
\begin{enumerate}
    \item[(a)] $\vc \in \Gamma$, hence $g^\dagger(\vc) \subseteq g^\dagger(\Gamma)$;
    \item[(b)] $\vc \in g^{-\dagger}(\Upsilon)$, hence $g^\dagger(\vc) \cap \Upsilon \neq \emptyset$.
\end{enumerate}

Now, let $\vy$ be any $\vy \in (g^\dagger(\vc) \cap \Upsilon) \neq \emptyset$ from (b). By (a), $\vy \in g^\dagger(\Gamma)$ as well. Therefore 
\begin{align}
    &\vy \in g^\dagger(\vc) \cap \Upsilon \cap g^\dagger(\Gamma) \\
    &\vy \in g^\dagger(\vc) \cap \Upsilon_1 \tag{from $\Upsilon_1 = \Upsilon \cap g^\dagger(\Gamma)$}
\end{align}

since $g^{\dagger}(\vc) \cap \Upsilon_1 \neq \emptyset$, this concludes the proof of the first inclusion (I)~\cref{eq:fp-incl-I}.

\textbf{Proof of (II): $\Upsilon_1 \subseteq g^{\dagger}(\Gamma_1) \subseteq g^{\dagger}\bigl(\Gamma \cap g^{-\dagger}(\Upsilon)\bigr)$}. 
This proof does not follow the steps of  (I): 
unlike $g^{-\dagger}$, the image $g^{\dagger}$ does not distribute over intersection in general, so we cannot mirror the argument above. We proceed elementwise.
 
Let $\vy \in \Upsilon_1 = \Upsilon \cap g^{\dagger}(\Gamma)$. Then:
\begin{itemize}
    \item $\vy \in \Upsilon$, and
    \item $\vy \in g^{\dagger}(\Gamma)$, so by definition of the image there exists some $\vc \in \Gamma$ such that $\vy \in g^\dagger (\vc)$. %
\end{itemize}
Let us then fix $\vc$ such that $\vy \in g^\dagger (\vc)$.%
Now, we   verify that $\vc \in \Gamma_1 = \Gamma \cap g^{-\dagger}(\Upsilon)$:
\begin{itemize}
    \item $\vc \in \Gamma$ holds by the choice of $\vc$;
    \item $g^{\dagger}(\vc) = \vy \in \Upsilon$, so by the definition of \cref{eq:inverse-g-appendix}:
    \[
        g^{-\dagger}(\Upsilon) = \bigcup_{\vy \in \calC} \{
        \vc \in \Gamma: \vy \in \argmax g(\vc)
        \}
    \]
    We can conclude that $\vc \in g^{-\dagger}(\Upsilon)$.
\end{itemize}
Hence $\vc \in \Gamma_1$ and $g^{\dagger}(\vc) = \vy$, which means $\vy \in g^{\dagger}(\Gamma_1)$, proving the second inclusion (II) \cref{eq:fp-incl-II}.
 
\textbf{Conclusion}. By (I), $\Gamma_2 = \Gamma_1 \cap g^{-\dagger}(\Upsilon_1) = \Gamma_1$. By (II), $\Upsilon_2 = \Upsilon_1 \cap g^{\dagger}(\Gamma_1) = \Upsilon_1$. Therefore $R\bigl(R(\Gamma, \Upsilon)\bigr) = (\Gamma_1, \Upsilon_1) = R(\Gamma, \Upsilon)$, i.e.\ $(\Gamma_1, \Upsilon_1)$ is a fixed point of $R$. 

This concludes the proof.
\end{proof}

\subsection{Proof of the Optimality of \method}

\begin{proposition}[Optimality of \method]
\label{prop:joint-optimality}

Let $\Cset \subseteq \calC$ and $\Yset \subseteq \calY$ be arbitrary prediction sets, and let $(\Crev, \Yrev) = R(\Cset, \Yset)$ be their joint revision. 

Then $(\Crev, \Yrev)$ is the largest pair $(\Gamma', \Upsilon') \subseteq \calC \times \calY$ satisfying:
\begin{enumerate}
    \item \emph{Containment:} $\Gamma' \subseteq \Cset$ and $\Upsilon' \subseteq \Yset$.
    \item \emph{Mutual coverage:}
    \begin{enumerate}
        \item every concept entails at least one retained label:
              $\forall \vc \in \Gamma',\; g^\dagger(\vc) \cap \Upsilon' \neq \emptyset$
              (equivalently, $\Gamma' \subseteq g^{-\dagger}(\Upsilon')$);
        \item every label is entailed by at least one retained concept:
              $\forall \vy \in \Upsilon',\; \exists \vc \in \Gamma' \text{ with } \vy \in g^\dagger(\vc)$
              (equivalently, $\Upsilon' \subseteq g^\dagger(\Gamma')$).
    \end{enumerate}
\end{enumerate}
Consequently,
\[
    |\Crev| \le |\Cset|,
    \qquad\qquad
    |\Yrev| \le |\Yset|.
\]
Moreover, $(\Crev, \Yrev)$ is a minimal revision of $(\Cset, \Yset)$ that enforces logical consistency with $g$.
\end{proposition}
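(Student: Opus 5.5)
We argue in three steps: show that $(\Crev,\Yrev)$ is itself feasible, show that every feasible pair sits inside it componentwise, and then read off the cardinality bounds and the minimality claim. Here "largest" means componentwise inclusion: if $(\Gamma',\Upsilon')$ satisfies Containment and Mutual coverage, then $\Gamma'\subseteq\Crev$ and $\Upsilon'\subseteq\Yrev$.

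\textbf{Feasibility.} Containment holds immediately, because $\Crev=\Cset\cap g^{-\dagger}(\Yset)\subseteq\Cset$ and $\Yrev=\Yset\cap g^{\dagger}(\Cset)\subseteq\Yset$. Mutual coverage is exactly the pair of inclusions (I) $\Crev\subseteq g^{-\dagger}(\Yrev)$ and (II) $\Yrev\subseteq g^{\dagger}(\Crev)$. These were established in the proof of \cref{prop:joint-fixed-point}, taking $(\Gamma,\Upsilon)=(\Cset,\Yset)$ and $(\Gamma_1,\Upsilon_1)=(\Crev,\Yrev)$. So feasibility needs no new argument.

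\textbf{Maximality.} Let $(\Gamma',\Upsilon')$ be feasible. Two facts are used: $g^{-\dagger}$ and $g^{\dagger}$ are both monotone under inclusion, since each is a union of fibres or images; and $g^{-\dagger}(\calA)=\{\vc : g^\dagger(\vc)\cap\calA\neq\emptyset\}$.
\begin{itemize}
\item \emph{Concept side.} Take $\vc\in\Gamma'$. Containment gives $\vc\in\Cset$. Condition (a) gives $\vc\in g^{-\dagger}(\Upsilon')$, and since $\Upsilon'\subseteq\Yset$, monotonicity gives $\vc\in g^{-\dagger}(\Yset)$. Hence $\vc\in\Crev$.
\item \emph{Label side.} Take $\vy\in\Upsilon'$. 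Containment gives $\vy\in\Yset$. Condition (b) gives a $\vc\in\Gamma'\subseteq\Cset$ with $\vy\in g^\dagger(\vc)$, so $\vy\in g^\dagger(\Cset)$. Hence $\vy\in\Yrev$.
\end{itemize}
Taken together with feasibility, $(\Crev,\Yrev)$ is the greatest element of the feasible family. Equivalently, it is the union of all feasible pairs, and that union is itself feasible.

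\textbf{Consequences.} The cardinality bounds $|\Crev|\le|\Cset|$ and $|\Yrev|\le|\Yset|$ follow directly from Containment. The final sentence, ``minimal revision that enforces logical consistency'', should be read as: the revision removes the fewest elements. Any consistent pair inside $(\Cset,\Yset)$ is feasible, so by maximality it deletes a superset of what $R$ deletes.

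\textbf{Expected obstacle.} The mathematics is light. The main difficulty is interpreting the statement correctly: "largest" must be taken as inclusion-maximal, and "minimal" as minimal removal. A secondary point is ties in $g^\dagger$. Condition (a) should be phrased as a non-empty intersection with $g^\dagger(\vc)$, not as $g^\dagger(\vc)\in\Upsilon'$, and this is what the paper's definition of $g^{-\dagger}$ provides.
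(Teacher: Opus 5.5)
Your proposal is correct and follows the paper's proof essentially step for step. Feasibility of the revised pair comes from inclusions (I) and (II) in the fixed-point proof, and elementwise maximality comes from pushing a feasible pair's mutual-coverage conditions through its containments. The only differences are cosmetic: you cite (II) for condition (2b) where the paper re-derives it inline, and you spell out the cardinality bounds and the reading of ``minimal revision'', which the paper's proof leaves implicit.
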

 
\begin{proof}
 
\textbf{Step 1: $(\Crev, \Yrev)$ itself satisfies (1) and (2)}.
 
By definition,
\[
    \Crev = \Cset \cap g^{-\dagger}(\Yset) \subseteq \Cset
\]
and 
\[
\Yrev = \Yset \cap g^{\dagger}(\Cset) \subseteq \Yset,
\]
therefore (1) holds.
 
\emph{Mutual coverage (2a): $\Crev \subseteq g^{-\dagger}(\Yrev)$.}

By definition, $(\Crev, \Yrev) = R(\Cset, \Yset)$, so $\Crev = \Cset \cap g^{-\dagger}(\Yset)$ plays the role of $\Gamma_1$ and $\Yrev = \Yset \cap g^{\dagger}(\Cset)$ plays the role of $\Upsilon_1$ in the proof of~\cref{prop:joint-fixed-point}. Inclusion~(I) of that proof established $\Gamma_1 \subseteq g^{-\dagger}(\Upsilon_1)$; specialising to the present sets gives $\Crev \subseteq g^{-\dagger}(\Yrev)$.

\emph{Mutual coverage (2b): $\Yrev \subseteq g^\dagger(\Crev)$.} 

Recall that $\Yrev = \Yset \cap g^{\dagger}(\Cset)$ and $\Crev = \Cset \cap g^{-\dagger}(\Yset)$. 

Let $\vy \in \Yrev$. 
Since $\vy \in g^\dagger(\Cset)$, by definition of the image there is some $\vc \in \Cset$ with $\vy \in g^\dagger(\vc)$. This $\vc$ lies in $\Crev$ by definition of $\Crev$. Using $\vy \in \Yrev \subseteq \Yset$ from (1),
\[
    \vy \in g^\dagger(\vc) \cap \Yset \;\neq\; \emptyset
    \quad\Longrightarrow\quad
    \vc \in g^{-\dagger}(\Yset),
\]
hence $\vc \in \Cset \cap g^{-\dagger}(\Yset) = \Crev$.

\medskip
\textbf{Step 2: Any pair satisfying (1) and (2) is contained in $(\Crev, \Yrev)$}.
 
Let $(\Gamma', \Upsilon')$ satisfy (1) and (2). We must show that $\Gamma' \subseteq \Crev$ and $\Upsilon' \subseteq \Yrev$.

\emph{Show $\Gamma' \subseteq \Crev$.} Let $\vc \in \Gamma'$. We prove $\vc \in \Crev$ by 
checking the two conditions defining $\Crev = \Cset \cap g^{-\dagger}(\Yset)$.

\begin{itemize}
    \item[(i)] $\vc \in \Cset$: this follows directly from $\vc \in \Gamma' \subseteq \Cset$ by~(1).
    \item[(ii)] $\vc \in g^{-\dagger}(\Yset)$: by definition, this requires
    $g^{\dagger}(\vc) \cap \Yset \neq \emptyset$. From~(2a), 
    $g^{\dagger}(\vc) \cap \Upsilon' \neq \emptyset$, and from~(1), $\Upsilon' \subseteq \Yset$.
    Hence
    \[
        \emptyset \;\neq\; g^{\dagger}(\vc) \cap \Upsilon'
              \;\subseteq\; g^{\dagger}(\vc) \cap \Yset.
    \]
\end{itemize}
Combining (i) and (ii) gives $\vc \in \Cset \cap g^{-\dagger}(\Yset) = \Crev$.

\medskip
\emph{Show $\Upsilon' \subseteq \Yrev$.} Let $\vy \in \Upsilon'$. We prove $\vy \in \Yrev$ by
checking the two conditions defining $\Yrev = \Yset \cap g^{\dagger}(\Cset)$.

\begin{itemize}
    \item[(i)] $\vy \in \Yset$: this follows directly from $\vy \in \Upsilon' \subseteq \Yset$ by~(1).
    \item[(ii)] $\vy \in g^{\dagger}(\Cset)$: by definition of the image, this requires some
    $\vc \in \Cset$ with $\vy \in g^{\dagger}(\vc)$. From~(2b), there exists $\vc \in \Gamma'$
    with $\vy \in g^{\dagger}(\vc)$, and from~(1), $\vc \in \Gamma' \subseteq \Cset$.
\end{itemize}
Combining (i) and (ii) gives $\vy \in \Yset \cap g^{\dagger}(\Cset) = \Yrev$.

This concludes the proof.
\end{proof}

\subsection{Coverage proof of Product Set leveraging e-values}

\begin{proposition}[Concept coverage via aggregated e-values]
\label{prop:e-world-coverage}

For $0 < \beta < 1$ and any $(\vx, \vc^*) \in \calD_{\vC}$,
let $\vc = (\vc_1, \dots, \vc_K) \in \calC = \calC_1 \times \cdots \times \calC_K$ be a concept vector, where each $\vc_j \in \{0, \dots, V-1\}$. For each concept $j$, let $s_j : \calX \times \calC_j \to \mathbb{R}_{\geq 0}$ be a non-negative conformity score and let
$\calD_j = \{(\vx_i, \vc_{ij}^*)\}_{i=1}^n$ be a calibration set.
Define the \emph{soft-rank e-value}~\citep{wang2022false} for concept $j$ and candidate $v$ as
\[
    E_j(\vx, \vv)
    =
    \frac{(n+1)\, s_j(\vx, v)}
    {\displaystyle\sum_{i=1}^n s_j(\vx_i, c_{ij}^*) + s_j(\vx, \vv)}.
\]
the \emph{joint e-value} for a candidate tuple $\vv = (v_1, \dots, v_K)$ as
\[
    E(\vx, \vv) = \frac{1}{K} \sum_{j=1}^{K} E_j(\vx, v_j).
\]
and the \emph{concept prediction set} for threshold $\beta > 0$ as
\[
    \Cset = \left\{ 
        \vv \in \bigtimes_{j=1}^K \{0,\dots,V-1\} : 
            E(\vx, \vv) < \frac{1}{\beta} 
        \right\}.
\]
Then:
\[
    \Pr\left(\vc^* \in \Cset\right) \geq 1 - \beta.
\]
In particular, this guarantee holds without Bonferroni correction (\cref{cor:bonferroni-coverage}) and does not degrade with the number of concepts $K$.

\end{proposition}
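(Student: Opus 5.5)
The plan is to show that the joint e-value evaluated at the ground truth, $E(\vx, \vc^*)$, is a valid e-variable, i.e.\ $\mathbb{E}[E(\vx, \vc^*)] \le 1$. Markov's inequality then gives the coverage bound directly. The argument has three steps, each using only exchangeability of the test point with each per-concept calibration set $\calD_j$.

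\textbf{Step 1: each per-concept e-value is valid.} Fix $j$ and write $S_i = s_j(\vx_i, c_{ij}^*)$ for $i \in [n]$ and $S_{n+1} = s_j(\vx, c_j^*)$. By exchangeability of $(\vx_1, c^*_{1j}), \dots, (\vx_n, c^*_{nj}), (\vx, c^*_j)$, the scores $S_1, \dots, S_{n+1}$ are exchangeable. Their sum $T = \sum_{i=1}^{n+1} S_i$ is a symmetric function of them. Hence
\[
    \mathbb{E}\!\left[\frac{S_{n+1}}{T}\right] = \frac{1}{n+1}\sum_{i=1}^{n+1}\mathbb{E}\!\left[\frac{S_i}{T}\right] = \frac{1}{n+1}\,\mathbb{E}\!\left[\frac{\sum_{i} S_i}{T}\right] \le \frac{1}{n+1},
\]
with the convention $0/0 = 0$ on the event $T = 0$; the nonnegativity of $s_j$ is used here. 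Multiplying by $n+1$ gives $\mathbb{E}[E_j(\vx, c_j^*)] \le 1$.

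\textbf{Step 2: averaging preserves validity.} By linearity of expectation,
\[
    \mathbb{E}[E(\vx, \vc^*)] = \frac{1}{K}\sum_{j=1}^K \mathbb{E}[E_j(\vx, c^*_j)] \le 1.
\]
This step needs no independence between concepts and no union bound. In particular, the per-concept calibration sets may share inputs and the $E_j$ may be arbitrarily dependent. This is why the guarantee does not degrade with $K$, unlike \cref{propr:marginal-coverage-cprod} and \cref{cor:bonferroni-coverage}.

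\textbf{Step 3: Markov's inequality.} We have $\vc^* \notin \Cset$ exactly when $E(\vx, \vc^*) \ge 1/\beta$. Since $E(\vx, \vc^*) \ge 0$, Markov's inequality gives
\[
    \Pr(\vc^* \notin \Cset) = \Pr\bigl(E(\vx,\vc^*) \ge 1/\beta\bigr) \le \beta\, \mathbb{E}[E(\vx,\vc^*)] \le \beta.
\]
Taking complements yields $\Pr(\vc^* \in \Cset) \ge 1 - \beta$.

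The main obstacle is Step 1. It needs two things made precise. First, the expectation is over the joint draw of the calibration set and the test point, so the guarantee is marginal, exactly as in \cref{eq:concept-only-guarantee}. Second, the degenerate event $T = 0$ must be handled, where all scores vanish; I would either adopt the convention $E_j = 0$ there or assume positive scores. I would also state that exchangeability is required separately for each $\calD_j$ together with the test point, and that no joint concept annotations are needed.
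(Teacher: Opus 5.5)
Your proposal is correct and follows the paper's proof step for step: validity of each per-concept soft-rank e-value by exchangeability, validity of the average by linearity of expectation (no independence or union bound needed), and coverage via Markov's inequality. The only difference is that you prove Step 1 yourself with the symmetric-sum argument and handle the degenerate event $T=0$, whereas the paper simply cites the validity of the soft-rank e-variable from the literature.
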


\begin{proof}

\textbf{Step 1: each $E_j(\vx, \vc_j^*)$ is a valid e-variable}.
By exchangeability of the calibration data and the test point, each soft-rank e-value satisfies $\mathbb{E}[E_j(\vx, \vc_j^*)] \leq 1$~\citet{vovk2025conformal, gauthier2025values}.

\textbf{Step 2: the average $E(\vx, \vc) = \frac{1}{K} \sum_{j=1}^{K} E_j(\vx, \vc_j)$ is a valid e-variable}.
By linearity of expectation:
\[
    \mathbb{E}\left[E(\vx, \vc^*)\right]
    =
    \mathbb{E}\!\left[\frac{1}{K}\sum_{j=1}^K E_j(\vx, \vc_j^*)\right]
    =
    \frac{1}{K}\sum_{j=1}^K \mathbb{E}\!\left[E_j(\vx, \vc_j^*)\right]
    \leq
    \frac{1}{K} \cdot K
    =
    1,
\]
where the inequality uses Step~1 termwise. Hence $E(\vx, \vc^*)$ is a valid e-variable.

\textbf{Step 3: Coverage follows by Markov's inequality~\citep{ramdas2025hypothesis}}.
Since $E(\vx, \vc) \geq 0$ and $\mathbb{E}[E(\vx, \vc)] \leq 1$, Markov's inequality gives:
\[
    \Pr\left(E(\vx, \vc) \geq \frac{1}{\beta}\right) \leq \beta,
\]
or equivalently,
\[
    \Pr\left(E(\vx, \vc) < \frac{1}{\beta}\right) \geq 1 - \beta.
\]
Since $\vc \in \Cset$ if and only if $E(\vx, \vc) < 1/\beta$, the coverage guarantee follows.
\end{proof}

\begin{remark}[Comparison with Bonferroni correction (\cref{cor:bonferroni-coverage})]
\cref{propr:marginal-coverage-cprod} applies Bonferroni's correction, allocating miscoverage $\beta / K$ to each individual concept and requiring $K$ separate thresholds.
The e-value approach in \cref{prop:e-world-coverage} avoids this entirely: independence is exploited at the level of expectations via the product factorization in Step~2, so a single threshold $1/\beta$ suffices and the guarantee does not degrade with $K$.
\end{remark}

\begin{remark}[Average vs.\ product aggregation]
\label{rem:mean-vs-product}
An alternative aggregation is the \emph{product} $E(\vx, \vc) = \prod_{j=1}^{K} E_j(\vx, \vc_j)$. 
To make it a valid e-variable, it requires:
\[
    \mathbb{E}\!\left[\prod_{j=1}^K E_j(\vx, \vc_j^*)\right]
    =
    \prod_{j=1}^K \mathbb{E}\!\left[E_j(\vx, \vc_j^*)\right],
\]
which is \emph{not} implied by the conditional concept independence $\vC_i \indep \vC_j \mid \vX$ standard in NeSy-CBMs~\citep{vankrieken2025neurosymbolic, van2024independence}, because the per-concept e-values share the same test input $\vx$ and the same calibration set $\calD_{\mathrm{cal}}$. The product is a valid e-variable if one additionally assumes \emph{conditional} independence of the per-concept e-values given $(\vx, \calD_{\mathrm{cal}})$, which might not hold in pratice. Despite this, \EMethod also behaves well when using the product, as shown in our experiments for \CHX (see~\cref{sec:product-vs-average}).
\end{remark}

\subsection{Coverage proof of \method leveraging e-values}
\label{sec:proof-eval}

\begin{proposition}[Joint coverage of \method using e-values]
\label{prop:evalues-joint-coverage}
For $0 < \tilde{\alpha}, \tilde{\beta}$ possibly data-dependent, let $\Crev$ and $\Yrev$ be the jointly revised sets of \method (\cref{eq:y-c-rev}). Suppose the concept set $\Cset$ and the label set $\Yset$ are constructed using e-values so that
\[
    \Pr(\vc^* \notin \Cset) \le \bbE[\tilde{\beta}],
    \qquad
    \Pr(\vy^* \notin \Yset) \le \bbE[\tilde{\alpha}].
\]
If the NeSy-CBM satisfies both deductive and abductive soundness
(\cref{def:soundness}), then
\begin{align}
    &\Pr(\vc^* \in \Crev)
    \ge 1 - \bbE[\tilde{\alpha}] - \bbE[\tilde{\beta}]
    + \Pr(\vc^* \notin \Cset \land \vc^* \notin \Cab),\\
    &\Pr(\vy^* \in \Yrev)
    \ge 1 - \bbE[\tilde{\alpha}] - \bbE[\tilde{\beta}]
    + \Pr(\vy^* \notin \Yset \land \vy^* \notin \Yfwset).
\end{align}
\end{proposition}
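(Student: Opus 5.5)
The argument repeats the inclusion--exclusion proof of \cref{prop:joint-coverage}, with the fixed miscoverage levels $\alpha,\beta$ replaced by the expected data-dependent levels $\bbE[\tilde{\alpha}]$ and $\bbE[\tilde{\beta}]$. The earlier proofs used only two facts: the marginal coverage of $\Cset$ and $\Yset$, and the transfer of that coverage through abduction and deduction. Both survive when we work with the e-value bounds $\Pr(\vc^* \notin \Cset) \le \bbE[\tilde{\beta}]$ and $\Pr(\vy^* \notin \Yset) \le \bbE[\tilde{\alpha}]$. No property of e-values is needed beyond these two hypotheses, which the statement assumes.

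\emph{Concept side.} Let $A := (\vc^* \in \Cset)$ and $B := (\vc^* \in \Cab)$. Then $\Pr(\vc^* \in \Crev) = \Pr(A \cap B)$, and exactly as in \cref{eq:ie-crev},
\[
\Pr(A\cap B) = \Pr(A) + \Pr(B) - 1 + \Pr(\neg A \cap \neg B).
\]
We bound the two terms separately.
\begin{itemize}
\item By hypothesis, $\Pr(A) \ge 1 - \bbE[\tilde{\beta}]$.
\item For $\Pr(B)$, we use abductive soundness pointwise rather than through $\delta_{\mathrm{ab}}$. On the event $\vy^* \in \Yset$, soundness gives $\vc^* \in g^{-\dagger}(\vy^*) \subseteq g^{-\dagger}(\Yset) = \Cab$, as in \cref{prop:abductive-soundness-implies-delta-ab}. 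Hence $\{\vy^* \in \Yset\} \subseteq B$, so $\Pr(B) \ge \Pr(\vy^* \in \Yset) \ge 1 - \bbE[\tilde{\alpha}]$.
\end{itemize}
Substituting gives the first inequality.

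\emph{Label side.} Set $A := (\vy^* \in \Yset)$ and $B := (\vy^* \in \Yfwset)$. Deductive soundness gives the event inclusion $\{\vc^* \in \Cset\} \subseteq B$, because $\vy^* = g^\dagger(\vc^*) \in g^\dagger(\Cset)$ as in \cref{prop:deductive-soundness-implies-delta-de}. Hence $\Pr(B) \ge 1 - \bbE[\tilde{\beta}]$. Inclusion--exclusion, as in \cref{eq:ie-yrev}, then yields the second inequality.

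\emph{Main point of care.} Because $\tilde{\alpha}$ and $\tilde{\beta}$ are data-dependent, the sets $\Yset$ and $\Cset$ are random through them. The coverage statements must therefore be understood as probabilities over the joint randomness of calibration and test data. I would check that every step above is a pointwise set inclusion or an identity between events. Soundness holds on the whole support, so the inclusions hold surely for every realization of $\tilde{\alpha}$ and $\tilde{\beta}$. The only probabilistic inputs are then the two hypothesized bounds.

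Arguing via event inclusions also avoids the $(1-\alpha)\delta$ product form. That form would require $\delta$ to be defined conditionally on data-dependent sets, which is where a naive transcription of \cref{prop:nesy-abduction-coverage} could become delicate.
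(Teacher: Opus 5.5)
Your proof is correct and follows the paper's argument: inclusion--exclusion $\Pr(A\cap B)=\Pr(A)+\Pr(B)-1+\Pr(\neg A\cap\neg B)$, with $\Pr(B)$ bounded through the soundness-induced event inclusion (on the concept side, $\{\vy^*\in\Yset\}\subseteq\{\vc^*\in\Cab\}$) together with the two hypothesized e-value bounds. The differences are presentational: you write out the label side via $\{\vc^*\in\Cset\}\subseteq\{\vy^*\in\Yfwset\}$, which the paper dismisses as ``identical by symmetry,'' and you skip $\delta_{\mathrm{ab}}$, which the paper cites before using the same inclusion.
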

 
\begin{proof}
We prove the bound for $\Crev$; the argument for $\Yrev$ is identical by symmetry.
 
Let $A := (\vc^* \in \Cset)$ and $B := (\vc^* \in \Cab)$, so $\Pr(\vc^* \in \Crev) = \Pr(A \cap B)$.
By inclusion--exclusion,
\begin{equation}
    \Pr(A \cap B) = \Pr(A) + \Pr(B) - 1 + \Pr(\neg A \cap \neg B).
    \label{eq:ie-evalues}
\end{equation}
Under abductive soundness, \cref{prop:abductive-soundness-implies-delta-ab} gives $\delta_{\mathrm{ab}} = 1$, so $\vc^* \in \Cab = g^{-\dagger}(\Yset)$ whenever $\vy^* \in \Yset$. Hence $\Pr(B) \ge 1 - \bbE[\tilde{\alpha}]$.

Using also $\Pr(A) \ge 1 - \bbE[\tilde{\beta}]$ and substituting
into~\cref{eq:ie-evalues},
\begin{align}
    \Pr(\vc^* \in \Crev)
    &= \Pr(A) + \Pr(B) - 1 + \Pr(\neg A \cap \neg B)
    \tag{\cref{eq:ie-evalues}} \\
    &\ge (1 - \bbE[\tilde{\beta}]) + (1 - \bbE[\tilde{\alpha}]) - 1
      + \Pr(\neg A \cap \neg B)
    \tag{e-value guarantees} \\
    &= 1 - \bbE[\tilde{\alpha}] - \bbE[\tilde{\beta}]
      + \Pr(\vc^* \notin \Cset \land \vc^* \notin \Cab).
    \tag{simplify, $\delta_{\textrm{de}}$}
\end{align}

This concludes the proof.
\end{proof}
 
\begin{remark}[Relation to fixed miscoverage levels]
When $\tilde{\alpha}$ and $\tilde{\beta}$ are fixed constants independent of the data, $\bbE[\tilde{\alpha}] = \alpha$ and $\bbE[\tilde{\beta}] = \beta$, and the bound recovers the guarantee of \cref{prop:joint-coverage} with the additional term $\Pr(\vc^* \notin \Cset \land \vc^* \notin \Cab)$~\citep{ramdas2025hypothesis}. If instead, \emph{abductive} and \emph{deductive} soundness do not hold, the bound can be recovered the same way as~\cref{prop:joint-coverage}.
\end{remark}

\section{Implementation Details}
\label{sec:implementation-details}

Here, we provide additional details about all metrics, datasets, and models useful for reproducibility.

\subsection{Implementation}

All experiments were conducted using Python 3.9 and PyTorch 2.5.1, and were run on a single NVIDIA A100 GPU. The implementations of \DPL and \LTN for \MNISTAdd, \MNISTHALF, and \MNISTEO were taken from~\citep{bortolotti2024benchmark}, while they were developed from scratch for all other datasets.

The \MNISTAdd variants were constructed starting from the original \MNIST images~\citep{lecun1998mnist}. The \CHX dataset was taken verbatim from~\citep{pugnana2025ask}. For \RIVAL, we used the images contained in the \texttt{ordinary} folder, preserving the original train/test split.

For \CEBAB, we used \texttt{train\_observational} as the training set, \texttt{dev} as the validation set, and \texttt{test} as the test set. For \CIFAR, we retained the original train/test split.

\subsection{Metrics details}
\label{sec:metrics-details}

For all datasets, we evaluate predictions at both the label and concept levels using three metrics, \emph{coverage}, \emph{size}, and \emph{consistency}, defined below.

\textbf{Coverage}. Coverage is defined as:
\[
    \textsc{Coverage}
    =
    \frac{1}{|\calD_{\textrm{test}}|}
    \sum_{\vx \in \calD_{\textrm{test}}}
    \Ind{\vc^* \in \Cset},
\]
where $\calD_{\textrm{test}}$ denotes the test set and $|\calD_{\textrm{test}}|$ its cardinality. This metric quantifies the proportion of test samples for which the ground-truth concept $\vc^*$ is included in the predicted concept set $\Cset$. An analogous definition applies at the label level by replacing $\vc^*$ and $\Cset$ with $\vy^*$ and $\Yset$, respectively.

\textbf{Size}. We measure the average size of the prediction sets as
\[
    \textsc{Size}
    =
    \frac{1}{|\calD_{\textrm{test}}|}
    \sum_{\vx \in \calD_{\textrm{test}}}
    |\Cset|,
\]
which corresponds to the expected cardinality of the predicted concept sets. The same definition applies to label sets $\Yset$.

\textbf{Consistency}.
Concept-level consistency is defined as
\[
    \textsc{Consistency}
    =
    \frac{1}{|\calD_{\textrm{test}}|}
    \sum_{\vx \in \calD_{\textrm{test}}}
    \sum_{\vc \in \Cset}
    \frac{\Ind{g^{\dagger}(\vc) \cap \Yset \neq \emptyset}}{\max(1, |\Cset|)},
\]
this metric measures the extent to which predicted concepts agree with predicted labels. Specifically, it captures the average fraction of concepts in $\Cset$ whose induced label $g^{\dagger}(\vc)$ is contained in $\Yset$. A fully consistent model ensures that every predicted concept is supported by at least one compatible label. An analogous definition applies at the label level.

\subsection{E-values details}
\label{sec:evalues}

A powerful and recent tool in the conformal prediction literature is conformal e-prediction, which uses \emph{e-values}~\citep{shafer2019game, vovk2021values, grunwald2020safe, ramdas2025hypothesis} to build prediction sets~\citep{vovk2025conformal, gauthier2025values, gauthier2025Conformal,
gauthier2026Conformal}.
An \emph{e-variable} $E$ is a nonnegative random variable satisfying $\bbE[E] \leq 1$~\citep{ramdas2025hypothesis}.
Thresholding $E$ at level $1/\alpha$ yields a valid prediction set with marginal coverage at least $1-\alpha$: by Markov's inequality, $\Pr(E \geq 1/\alpha) \leq \alpha$, or equivalently, $\Pr(E < 1/\alpha) \geq 1-\alpha$.

A standard e-value for conformal prediction is the \emph{soft-rank e-variable}~\citep{wang2022false, koning2023post,
balinsky2024enhancing}. 
Given a non-negative nonconformity score $s : \calX \times \calY \to \mathbb{R}_{\geq 0}$ and a calibration set $\calD_{\mathrm{cal}} = \{(\vx_i, \vy_i^*)\}_{i=1}^n$, it is defined as:
\begin{equation}
    \label{eq:soft-rank-evariable}
    E(\vx, \vy)
    =
    \frac{(n+1)\, s(\vx, \vy)}
    {\displaystyle\sum_{i=1}^{n} s(\vx_i, \vy_i^*) + s(\vx, \vy)}.
\end{equation}
This is a valid e-variable whenever the calibration scores $\{s(\vx_i,\vy_i^*)\}_{i=1}^n$ and the test score $s(\vx,\vy)$ are exchangeable~\citep{vovk2025conformal}.

\begin{proposition}[\citet{gauthier2025values}]
    \label{prop:e-value-guarantee}
    Consider a calibration set $\calD_{\mathrm{cal}} = \{(\vx_i, \vy_i^*)\}_{i=1}^n$ and a test point $(\vx, \vy^*)$ such that all $n+1$ points are exchangeable.
    Let $\tilde{\alpha}$ be any (possibly data-dependent) miscoverage level. Then:
    \begin{equation}
        \label{eq:e-guarantee}
        \bbE \left[
            \frac{\Pr(\vy^* \notin \Upsilon_{\tilde\alpha}(\vx) \mid \tilde{\alpha})}
                 {\tilde{\alpha}}
        \right] \leq 1,
    \end{equation}
    where
    \[
        \Upsilon_{\tilde\alpha}(\vx)
        =
        \left\{
            y \in \calY :
            \frac{(n+1)\,s(\vx, y)}
                 {\displaystyle\sum_{i=1}^n s(\vx_i, \vy_i^*) + s(\vx, y)}
            < \frac{1}{\tilde{\alpha}}
        \right\}.
        \label{eq:e-set}
    \]
\end{proposition}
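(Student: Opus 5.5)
The plan is to reduce the statement to the validity of the soft-rank e-variable, and then apply a Markov-type argument that survives a data-dependent threshold. The guarantee to prove is $\bbE[\Pr(\vy^*\notin\Upsilon_{\tilde\alpha}(\vx)\mid\tilde\alpha)/\tilde\alpha]\le 1$.

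\textbf{Step 1: the soft-rank e-variable is valid.} Write $E^* := E(\vx,\vy^*)$ for the soft-rank e-variable evaluated at the true test label, as in \cref{eq:soft-rank-evariable}. Let $S_i$ denote the calibration scores for $i \in [n]$ and $S_{n+1}$ the test score. The denominator $\sum_{i=1}^{n+1} S_i$ is symmetric in all $n+1$ scores. By exchangeability, every $S_i/\sum_j S_j$ has the same expectation. These ratios sum to one, so each has expectation $1/(n+1)$. Hence
\[
\bbE[E^*] = (n+1)\,\bbE\Bigl[S_{n+1}/\textstyle\sum_j S_j\Bigr] = 1 .
\]
If $\sum_j S_j = 0$, adopt the convention $0/0 = 0$, which only makes the inequality easier. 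This is the step already used in the proof of \cref{prop:e-world-coverage}.

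\textbf{Step 2: translate miscoverage into a threshold event.} By the definition of $\Upsilon_{\tilde\alpha}(\vx)$, the event $\vy^*\notin\Upsilon_{\tilde\alpha}(\vx)$ coincides with $E^*\ge 1/\tilde\alpha$, that is, $\tilde\alpha E^*\ge 1$. This gives the pointwise inequality
\[
\mathbf{1}\{\vy^*\notin\Upsilon_{\tilde\alpha}(\vx)\}/\tilde\alpha \;\le\; E^* .
\]
It holds because the left side is either $0$, or it equals $1/\tilde\alpha \le E^*$. Crucially, this is deterministic and holds for every realization, whatever value $\tilde\alpha$ takes. So no independence between $\tilde\alpha$ and $E^*$ is needed. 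This is the key point that makes post-hoc choice of the level admissible.

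\textbf{Step 3: take expectations.} Conditioning on $\tilde\alpha$ and using the tower property gives
\[
\bbE\bigl[\Pr(\vy^*\notin\Upsilon_{\tilde\alpha}(\vx)\mid\tilde\alpha)/\tilde\alpha\bigr] = \bbE\bigl[\mathbf{1}\{\vy^*\notin\Upsilon_{\tilde\alpha}(\vx)\}/\tilde\alpha\bigr] \le \bbE[E^*] \le 1 .
\]
The first equality uses that $1/\tilde\alpha$ is $\sigma(\tilde\alpha)$-measurable.

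\textbf{Main obstacle.} The main delicacy is Step 1. It requires exchangeability of the $n+1$ scores including the test score at the \emph{true} label, and careful handling of the degenerate zero-denominator case and of $\tilde\alpha$ possibly taking extreme values. Positivity of $\tilde\alpha$ is assumed, and measurability of $\tilde\alpha$ is needed for the conditional probability to make sense. Once validity holds, Steps 2 and 3 are a one-line Markov-type argument, in the form of the pointwise bound $\mathbf{1}\{E\ge 1/a\}\le aE$.
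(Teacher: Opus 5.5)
Your proof is correct. You establish validity of the soft-rank e-variable via exchangeability, giving $\bbE[E^*]\le 1$; this uses $s\ge 0$ implicitly, both there and for $E^*\ge 0$ in Step~2. You then apply the deterministic bound $\mathbf{1}\{\tilde\alpha E^*\ge 1\}/\tilde\alpha\le E^*$, valid for every realization of $\tilde\alpha$, and the tower property. This is the standard post-hoc-validity argument for e-values, and I believe it is the one behind the result the paper cites from \citet{gauthier2025values}.

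The paper gives no proof of this proposition, and in \cref{prop:e-world-coverage} it only cites the validity of the soft-rank e-variable. Your Step~1 therefore makes explicit what the paper takes for granted.
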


\begin{remark}
    When $\tilde{\alpha}$ is a fixed constant independent of the data, \cref{eq:e-guarantee} reduces to the standard conformal guarantee
    $\Pr(\vy^* \in \Upsilon_\alpha(\vx)) \geq 1 - \alpha$~\citep{gauthier2025values}
\end{remark}

\textbf{E-value prediction sets}.
\EMethod (\cref{sec:method}) instantiates \cref{eq:soft-rank-evariable} with the negative log-probability nonconformity score $s(\vx, y) = -\log p_\theta(y \mid \vx)$%
, which is non-negative by design.

\textbf{Label sets}.  At calibration time, \EMethod computes the label score sum $S_Y = \sum_{i=1}^n s(\vx_i, \vy_i^*)$ and stores the calibration count $n$. At test time, given a miscoverage level $\tilde{\alpha}$, the label prediction set, following~\cref{eq:e-set}, is
\[
    \Yset
    =
    \left\{
        y \in \calY :
        \frac{(n+1)\,s(\vx, y)}{S_Y + s(\vx, y)} < \frac{1}{\tilde{\alpha}}
    \right\}.
\]

\textbf{Concept sets}. \EMethod aggregates per-concept e-values via averaging:
\[
    E(\vx, \vc) = \frac{1}{K}\sum_{j=1}^{K} E_j(\vx, v_j),
\]
which is a valid e-variable by linearity of expectation~(\cref{prop:e-world-coverage}). The concept prediction set
\[
    \Cset = \left\{ \vc \in \calC : E(\vx, \vc) < \tfrac{1}{\tilde{\beta}} \right\}
\]
satisfies $\Pr(\vc^* \in \Cset) \geq 1 - \bbE[\tilde{\beta}]$ (\cref{prop:e-world-coverage}), \emph{without} Bonferroni correction (\cref{cor:bonferroni-coverage}) and \emph{without} degradation in $K$. The product alternative $\prod_j E_j$ is also valid but requires conditional independence of the $E_j$ given $(\vx, \mathcal{D}_{\mathrm{cal}})$, which does not hold in practice (see \cref{rem:mean-vs-product}). Nevertheless, \EMethod remains robust to this violation (see \cref{sec:product-vs-average}).

\textbf{Size-budget calibration: computing $\tilde{\alpha}$ and $\tilde{\beta}$}.

Rather than fixing $\alpha$ and $\beta$ a priori, \EMethod selects the \emph{smallest miscoverage} levels that keep the average prediction-set sizes within user-specified budgets $C_Y$ (for labels) and $C_C$ (for concepts).

Formally, following~\citet{gauthier2025Conformal}, given a test set $\calD$ and calibration statistics $(S_Y, \{S_j\}_{j=1}^K)$:
\begin{align}
    \label{eq:alpha-tilde}
    \tilde{\alpha}
    &=
    \inf\left\{
        \alpha \in (0,1) :
        \frac{1}{|\calD|}
        \sum_{\vx \in \calD}
        |\Upsilon_\alpha(\vx)|
        \leq C_Y
    \right\},
    \\[4pt]
    \label{eq:beta-tilde}
    \tilde{\beta}
    &=
    \inf\left\{
        \beta \in (0,1) :
        \frac{1}{|\calD|}
        \sum_{\vx \in \calD}
        |\Gamma_\beta(\vx)|
        \leq C_C
    \right\}.
\end{align}
Both $\tilde{\alpha}$ and $\tilde{\beta}$ are data-dependent: they depend on the calibration score sums $S_Y$ and $\{S_j\}$. According to~\cref{prop:evalues-joint-coverage}, under deductive and abductive soundness (\cref{def:soundness}), the jointly revised sets $\Crev$ and $\Yrev$ produced by \EMethod satisfy:
\[
    \Pr(\vc^* \in \Crev)
    \ge
    1 - \bbE[\tilde{\alpha}] - \bbE[\tilde{\beta}]
    + \Pr(\vc^* \notin \Cset \land \vc^* \notin \Cab).
\]

\textbf{Bootstrap estimation of $\bbE[\tilde{\alpha}]$ and $\bbE[\tilde{\beta}]$}.
We estimate these quantities over $T = 100$ %
bootstrap iterations. In each iteration $t$, a calibration sample $\calD_t$ of size $|\calD_{\mathrm{cal}}|$ is drawn \emph{with replacement} from $\calD_{\mathrm{cal}}$, score sums are recomputed on $\calD_t$, and prediction sets are built on the fixed test set for all $(\alpha,\beta)$ in $(\{0.10, 0.15, 0.20, 0.25, 0.30\}, \{0.10, 0.35, 0.40, 0.45, 0.50, 0.55, 0.60\})$
The minimal miscoverage levels $\tilde{\alpha}_t$ and $\tilde{\beta}_t$ are then selected as the smallest joint values achieving average set sizes at most $C_Y = 2$
and $C_C = 5$, respectively (\cref{eq:beta-tilde}). The empirical coverage lower bounds are reported as
\[
    1 - \frac{1}{T}\sum_{t=1}^T \tilde{\alpha}_t
    \qquad\text{and}\qquad
    1 - \frac{1}{T}\sum_{t=1}^T \tilde{\beta}_t.
\]

\subsection{Neuro-Symbolic Models}
\label{sec:nesy-details}

In this subsection, we report relevant details about the implementation and use of the neuro-symbolic models used in the experiments, \ie \DPL and \LTN.

\subsubsection{Inference}

We begin by describing how inference is carried out at test time in \DPL and \LTN, that is, how the concept distribution $p_\theta(\vC \mid \vx)$ produced by the neural backbone is combined with prior knowledge $\BK$ to yield a label prediction $\hat{\vy}$.

\textbf{DeepProbLog}. We use the \DPL implementation of \citet{bortolotti2024benchmark}. At inference time, given an input $\vx$, the model computes the joint distribution over concepts and labels under prior knowledge $\BK$
\[
    p_\theta(\vY, \vC \mid \vx, \BK) = p_\theta(\vC \mid \vX) \Ind{(\vC, \vY) \models \BK}.
\]
Hard predictions are obtained by solving the MAP problem, which is NP-hard~\citep{shimony1994finding}, that is:
\[
    (\hat{\vc}, \hat{\vy}) = \argmax_{\vc \in \calC, \vy \in \calY} p_{\theta}(\vC \mid \vx) \Ind{(\vC, \vY) \models \BK}
\]
where
\[
    p_\theta(\vC \mid \vx) = \prod_j^K p_\theta(\vC_j \mid \vx)
\]
Namely, $p_\theta(\vC \mid \vx)$ factorizes over all the $K$ concepts due to conditional independence~\citep{manhaeve2018deepproblog}.

\textbf{Logic Tensor Networks}. To implement \LTN, we used the \texttt{ltn-torch} library~\citep{carraro2024ltntorch}, following the approach of~\citet{bortolotti2024benchmark}. Given a formula $\BK$ and concept distribution $p_\theta(\vC \mid \vx)$, each candidate label $\vy$ is scored according to the degree to which the pair $(\vc, \vy)$ satisfies $\BK$ under the chosen fuzzy relaxation.

We experiment with three t-(co)norms: 
\begin{itemize}
    \item G\"odel semantics
    \[
        \calT_G(a,b) = \min(a, b), \quad
        \calS_G(a,b) = \max(a, b), \quad
        \calN_G(a) = \Ind{a = 0}
    \]
    \item Product semantics
    \[
        \calT_P(a,b) = a \cdot b, \quad
        \calS_P(a,b) = a + b - a \cdot b, \quad
        \calN_P(a) = 1 - a
    \]
    \item \L{}ukasiewicz semantics
    \[
        \calT_L(a,b) = \max(a + b - 1, 0), \quad
        \calS_L(a,b) = \min(a + b, 1), \quad
        \calN_L(a) = 1 - a
    \]
\end{itemize}

Where $\calT$ denotes a \textit{t-norm}, $\calS$ denotes a \textit{t-conorm}, $\calN$ denotes the \textit{negation operation}, and $a$ and $b$ are fuzzy truth values in $[0, 1]$.

The inference layer selects the label:
\[
    \hat{\vy} =
    \argmax_{\vy \in \calY}
    \calT_{\BK}\left(p_\theta(\vC \mid \vx), \Ind{\vY=\vy}\right)
\]
where $\mathcal{T}_{\mathcal{K}} \in [0,1]$ denotes the fuzzy satisfaction degree. 

In practice, inference is performed by first computing:
\[
    \hat{\vc} = \argmax_{\vc \in \calC}  p_\theta(\vC \mid \vx)
\]
and then predicting the final label leveraging prior knowledge $\BK$:
\[
    \hat{\vy} = \BK(\hat{\vc})
\]

\subsubsection{Grounding}
\label{sec:grounding}

For each dataset, we detail how the prior knowledge $\BK$ is grounded in \DPL and \LTN. In \DPL, $\BK$ is compiled into a \emph{concept-to-label} matrix $W \in [0,1]^{|\calC| \times |\calY|}$, where each entry $W_{\vc, \vy}$ represents the probability of $(\vc, \vy) \models \BK$. The label distribution is then computed by $p_\theta(\vY \mid \vx) = W^\top p_\theta(\vC \mid \vx)$. In \LTN, $\BK$ is expressed as a first-order logic formula whose predicates are grounded on the neural outputs $p_\theta(\vc \mid \vx) \in [0,1]$, while logical connectives are interpreted using a chosen t-norm (and corresponding co-tnorm), as detailed in \cref{sec:architecture-details}.

\textbf{\MNISTAdd (and \MNISTEO, \MNISTHALF)}.
Each input consists of two images $(\vx_1, \vx_2)$ with associated digit variables $\vc_1, \vc_2 \in \{0,\dots,9\}$ and output $\vy \in \{0,\dots,18\}$.
\underline{\DPL}: The circuit enumerates all $10^2$ digit combinations and sets
$W_{(\vc_1, \vc_2),\, \vy} = \Ind{\vc_1 + \vc_2 = \vy}$, so that each concept vector $\vc$ maps deterministically to exactly one sum.
\underline{\LTN}: The knowledge base is
\[
    \BK_{\MNISTAdd} :=
    \forall \vx_1, \vx_2, \vy.
    \exists d_1, d_2.
    \bigl( \mathrm{digit}(\vx_1, d_1) \land \mathrm{digit}(\vx_2, d_2) \bigr)
    \text{ s.t. } d_1 + d_2 = \vy,
\]
where $\mathrm{digit}(\vx_i, d_i) := p_\theta(\vc_i = d \mid \vx_i)$ is the digit probability.

\textbf{\MNISTSumXor}.
The setup is as in \MNISTAdd, but with $\vY \in \{0,1\}$ encoding the parity of the sum.
\underline{\DPL}: $W_{(\vc_1,\vc_2), \vy} = \Ind{(\vc_1 + \vc_2) \bmod 2 = \vy}$.
\underline{\LTN}: Same FOL template as \MNISTAdd, with arithmetic condition $d_1 + d_2 = \vy \pmod 2$.

\textbf{\CHX}.
The concepts are the four binary symptoms $\vc = (\vc_1, \vc_2, \vc_3, \vc_4) \in {0,1}^4$ (\texttt{Fracture}, \texttt{Pneumothorax}, \texttt{Airspace opacity}, \texttt{Nodule/mass}), and the label $\vy$ denotes the severity level.
\underline{\DPL}: The model enumerates all $2^4$ possible concept combinations and defines
$W_{\vc,\vy} = \Ind{\sum_{i=1}^{4} \vc_i = \vy}$, corresponding to the five severity levels ${\texttt{healthy}, \texttt{green}, \texttt{yellow}, \texttt{orange}, \texttt{red}}$.
\underline{\LTN}: We instead formulate the knowledge base in first-order logic. Let $\mathrm{present}(\vx, i) := p_\theta(\vc_i = 1 \mid \vx)$. The knowledge base is then defined as follows:
\[
    \BK^{\mathrm{mc}}_{\CHX} :=
    \forall \vx, \vy.\ \exists s_1, s_2, s_3, s_4 \in \{0,1\}.
    \bigwedge_{i=1}^{4} \bigl(s_i \iff \mathrm{present}(\vx, i)\bigr)
    \text{ s.t. } \textstyle\sum_{i=1}^{4} s_i = \vy.
\]

\textbf{\CEBAB}. Recall that $\vc \in {\texttt{Positive}, \texttt{Negative}, \texttt{Neutral}, \texttt{unknown}}^5$ and the label $\vy \in {\texttt{Positive}, \texttt{Negative}, \texttt{Neutral}, \texttt{unknown}, \texttt{conflict}}$ is defined as the majority vote with the tie-breaking rule described in \cref{sec:dataset-details}.
\underline{\DPL}: The circuit enumerates all $4^5$ possible concept combinations. For each combination, let $N_v := \sum_{i=1}^{5} \Ind{\vc_i = v}$ denote the count of each value $v \in {\texttt{Positive}, \texttt{Negative}, \texttt{Neutral}, \texttt{unknown}}$. Then
\[
    W_{\vc,\, y} =
    \begin{cases}
        \Ind{N_{\texttt{Pos}} = N_{\texttt{Neg}} \,\land\, N_{\texttt{Pos}} \ge \max(N_{\texttt{Neu}}, N_{\texttt{Unk}})} & \vy = \texttt{Conflict}, \\[2pt]
        \Ind{N_v \ge \max_{v' \ne v} N_{v'}} \cdot \big(1 - W_{\vc, \texttt{Conflict}}\big) & \vy \in \{\texttt{Pos}, \texttt{Neg}, \texttt{Neu}, \texttt{Unk}\},
    \end{cases}
\]
where the tie-breaking priority $\texttt{Pos} \succ \texttt{Neg} \succ \texttt{Neu} \succ \texttt{Unk}$ is enforced by evaluating the cases in order and assigning exactly one label per concept combination.
\underline{\LTN}: The discrete $\argmax$ rule is not directly differentiable. We therefore adopt a relaxed formulation based on \emph{soft counts} $\tilde{N}v(\vx) := \sum{i=1}^{5} p_\theta(\vC_i = v \mid \vx)$, together with smooth comparison operators:
\[
    \mathrm{ge}(a, b) := \sigma(a - b), \qquad
    \mathrm{eq}(a, b) := \exp\bigl(-|a - b|\bigr),
\]
where $\sigma$ denotes the sigmoid function, i.e., $\sigma(x) = \frac{1}{1 + \exp(-x)}$. Strict inequality is expressed compositionally as $\mathrm{gt}(a, b) := \mathrm{ge}(a, b) \land \neg, \mathrm{eq}(a, b)$. The knowledge base is defined as the conjunction, over all five labels, of biconditionals between each label indicator and the corresponding count-based predicate:
\begin{align*}
    \BK_{\CEBAB} := \forall \vx, \vy.
    &\bigl(\Ind{\vy = \texttt{Pos}} \iff \mathrm{gt}(\tilde{N}_{\texttt{Pos}}, \tilde{N}_{\texttt{Neg}}) \land \mathrm{ge}(\tilde{N}_{\texttt{Pos}}, \tilde{N}_{\texttt{Neu}}) \land \mathrm{ge}(\tilde{N}_{\texttt{Pos}}, \tilde{N}_{\texttt{Unk}})\bigr) \\
    \land &\bigl(\Ind{\vy = \texttt{Neg}} \iff \mathrm{gt}(\tilde{N}_{\texttt{Neg}}, \tilde{N}_{\texttt{Pos}}) \land \mathrm{ge}(\tilde{N}_{\texttt{Neg}}, \tilde{N}_{\texttt{Neu}}) \land \mathrm{ge}(\tilde{N}_{\texttt{Neg}}, \tilde{N}_{\texttt{Unk}})\bigr) \\
    \land &\bigl(\Ind{\vy = \texttt{Neu}} \iff \mathrm{gt}(\tilde{N}_{\texttt{Neu}}, \tilde{N}_{\texttt{Pos}}) \land \mathrm{gt}(\tilde{N}_{\texttt{Neu}}, \tilde{N}_{\texttt{Neg}}) \land \mathrm{ge}(\tilde{N}_{\texttt{Neu}}, \tilde{N}_{\texttt{Unk}})\bigr) \\
    \land &\bigl(\Ind{\vy = \texttt{Unk}} \iff \mathrm{gt}(\tilde{N}_{\texttt{Unk}}, \tilde{N}_{\texttt{Pos}}) \land \mathrm{gt}(\tilde{N}_{\texttt{Unk}}, \tilde{N}_{\texttt{Neg}}) \land \mathrm{gt}(\tilde{N}_{\texttt{Unk}}, \tilde{N}_{\texttt{Neu}})\bigr) \\
    \land &\bigl(\Ind{\vy = \texttt{Conflict}} \iff \mathrm{eq}(\tilde{N}_{\texttt{Pos}}, \tilde{N}_{\texttt{Neg}}) \land \mathrm{ge}(\tilde{N}_{\texttt{Pos}}, \tilde{N}_{\texttt{Neu}}) \land \mathrm{ge}(\tilde{N}_{\texttt{Pos}}, \tilde{N}_{\texttt{Unk}})\bigr).
\end{align*}
The tie-breaking priority $\texttt{Pos} \succ \texttt{Neg} \succ \texttt{Neu} \succ \texttt{Unk}$ is encoded through the asymmetric use of $\mathrm{ge}$ and $\mathrm{gt}$: higher-priority categories are allowed to win ties via $\ge$, whereas lower-priority categories must satisfy the strict inequality $>$.

\textbf{\CIFAR and \RIVAL}.
The concepts are the $7$ binary visual attributes $\vc = (\vc_{\texttt{whl}}, \vc_{\texttt{met}}, \vc_{\texttt{wng}}, \vc_{\texttt{ani}}, \vc_{\texttt{hai}}, \vc_{\texttt{hrn}}, \vc_{\texttt{snt}}) \in {0,1}^7$, and the labels are the $10$ \CIFAR classes.
\underline{\DPL}: The circuit enumerates all $2^7$ concept combinations. For each combination, the rule base in \cref{sec:dataset-details} is evaluated; when the active concepts uniquely entail a single class, we set $W_{\vc, \vy} = 1$ for that class. When two classes share the same concepts (\ie \texttt{automobile}/\texttt{truck} and \texttt{dog}/\texttt{horse}), the mass is split equally, yielding $W_{\vc, \vy} = 1/2$ for each. Concept combinations that do not uniquely identify any class are handled by distributing probability uniformly over the consistent classes (\eg $W_{\vc,\texttt{plane}} = W_{\vc,\texttt{car}} = W_{\vc,\texttt{truck}} = W_{\vc,\texttt{ship}} = 1/4$ when only \texttt{met} is active). This explicitly encodes the \emph{incompleteness} of $\BK$ in the truth table.
\underline{\LTN}: For \LTN, let $\mathrm{present}(\vx, a) = p_\theta(\vc_a = 1 \mid \vx)$ for $a \in {\texttt{whl}, \texttt{met}, \texttt{wng}, \texttt{ani}, \texttt{hai}, \texttt{hrn}, \texttt{snt}}$. The knowledge base is expressed as a conjunction of biconditionals, one per class:
\begin{align*}
    \BK_{\CIFAR} := \forall \vx, \vy.
    &\bigl(\Ind{\vy = \texttt{plane}} \iff \mathrm{met} \land \neg\,\mathrm{ani} \land \mathrm{wng}\bigr) \\
    \land &\bigl(\Ind{\vy = \texttt{car}} \vee \Ind{\vy = \texttt{truck}} \iff \mathrm{whl} \land \neg\,\mathrm{wng} \land \mathrm{met} \land \neg\,\mathrm{ani}\bigr) \\
    \land &\bigl(\Ind{\vy = \texttt{bird}} \iff \mathrm{ani} \land \neg\,\mathrm{met} \land \mathrm{wng}\bigr) \\
    \land &\bigl(\Ind{\vy = \texttt{frog}} \iff \mathrm{ani} \land \neg\,\mathrm{met} \land \neg\,\mathrm{hai}\bigr) \\
    \land &\bigl(\Ind{\vy = \texttt{deer}} \iff \mathrm{ani} \land \neg\,\mathrm{met} \land \neg\,\mathrm{wng} \land \mathrm{hai} \land \mathrm{hrn}\bigr) \\
    \land &\bigl(\Ind{\vy = \texttt{cat}} \iff \mathrm{ani} \land \neg\,\mathrm{met} \land \neg\,\mathrm{wng} \land \mathrm{hai} \land \neg\,\mathrm{hrn} \land \neg\,\mathrm{snt} \bigr) \\
    \land &\bigl(\Ind{\vy = \texttt{dog}} \vee \Ind{\vy = \texttt{horse}} \iff \mathrm{ani} \land \neg\,\mathrm{met} \land \neg\,\mathrm{wng} \land \mathrm{hai} \land \neg\,\mathrm{hrn} \land \mathrm{snt}\bigr) \\
    \land &\bigl(\Ind{\vy = \texttt{ship}} \iff \mathrm{met} \land \neg\,\mathrm{ani} \land \neg\,\mathrm{whl} \land \neg\,\mathrm{wng}\bigr),
\end{align*}
\RIVAL uses the same grounding since the two datasets share the same labels and concepts.

\subsection{Dataset details}
\label{sec:dataset-details}

Here we report additional information about the evaluated datasets.

\subsubsection{\MNISTAdd}

\MNISTAdd, originally introduced in~\citet{manhaeve2018deepproblog}, is the canonical benchmark in Neuro-Symbolic AI. Each input consists of two \MNIST~\citep{lecun1998mnist} digit images, and the target label is their arithmetic sum. Formally, each sample is a pair of images $(\vx_1, \vx_2)$ representing \MNIST digits $(\vc_1, \vc_2)$, each taking values in $[0, 9]$, and the task is to predict:
\[
    \vy = \vc_1 + \vc_2,
\]
In contrast to \MNISTEO and \MNISTHALF, \MNISTAdd contains all possible digit combinations ($54,000$ train examples, $6,000$ validation examples and $10,000$ test examples) and does not induce structural ambiguity in the digit-to-value mapping. The underlying combinatorial structure uniquely identifies each digit, and no deterministic reasoning shortcuts arise. Consequently, \MNISTAdd serves as a shortcut-free reference benchmark for reasoning~\citep{bortolotti2025shortcuts}.

\textbf{Reasoning Shortcuts}. \MNISTAdd is provably reasoning shortcuts free. For further information refer to~\citet{bortolotti2024benchmark}.

\subsubsection{\CHX}

\CHX~\citep{cohen2022torch} is medical dataset characterized by four binary symptoms:
\texttt{Fracture}, \texttt{Pneumothorax}, \texttt{Airspace opacity}, and \texttt{Nodule/mass}. Each symptom is represented as a binary variable indicating its presence or absence. For preprocessing, we follow~\citet{pugnana2025ask}. However, in its original form, \CHX is not directly suitable for learning and reasoning tasks. For this reason, we construct our setting as follows: let the concept vector be defined as:
\[
    \mathbf{c} = (\vc_1, \vc_2, \vc_3, \vc_4) \in \{0,1\}^4,
\]
where:
\[
\vc =
\begin{pmatrix}
    \texttt{Fracture} \\
    \texttt{Pneumothorax} \\
    \texttt{Airspace opacity} \\
    \texttt{Nodule/mass}
\end{pmatrix}
\]
We designed our NeSy task, such that the final label is determined by the total number of present symptoms. Formally, let
\[
    \vs = \sum_{i=1}^{4} \vc_i.
\]
The diagnostic label $\vy$ is defined as:
\[
    \vy =
    \begin{cases}
        \texttt{healthy} & \text{if } \vs = 0, \\
        \texttt{green\_code}   & \text{if } \vs = 1, \\
        \texttt{yellow\_code}  & \text{if } \vs = 2, \\
        \texttt{orange\_code}     & \text{if } \vs = 3, \\
        \texttt{red\_code}     & \text{if } \vs = 4.
    \end{cases}
\]
Thus, the task consists of predicting a severity level derived deterministically from the number of active symptoms. \CHX consists of $3,150$ train samples, $350$ validation samples, and $876$ test samples.

\textbf{Reasoning Shortcuts}. \CHX contains reasoning shortcuts. For example, \texttt{yellow\_code} ($\vy = 2$) can be obtained not only by predicting $\vc = (1, 0, 1, 0)$, but also via other permutations such as $\vc' = (0, 1, 1, 0)$. For the exact (or approximate) number of reasoning shortcuts, we refer the reader to \texttt{rs-count}~\citep{bortolotti2024benchmark} specifying such settings.

\subsubsection{\CEBAB}

\CEBAB~\citep{abraham2022cebab} is a dataset designed to evaluate concept-based and causal reasoning in natural language models. It consists of restaurant reviews with structured, aspect-level annotations and counterfactual edits.
Each instance consists of:
\begin{itemize}
    \item A review text called \texttt{description}, \ie the input,
    \item A set of high-level semantic concepts $\vc \in [0, 4]$,
    \item An overall rating \texttt{review\_majority} $\in [0, 5]$ .
\end{itemize}
The core concepts correspond to distinct aspects of the restaurant experience:
\[
\vc =
\begin{pmatrix}
    \texttt{food\_aspect\_majority} \\
    \texttt{service\_aspect\_majority} \\
    \texttt{noise\_aspect\_majority} \\
    \texttt{ambiance\_aspect\_majority}
\end{pmatrix}
\]
Each aspect captures the sentiment expressed toward that dimension (\ie \texttt{Positive}, \texttt{Negative}, \texttt{Neutral}, \texttt{unknown}). From this concepts, the goal is to infer the causally influenced label that is \texttt{review\_majority}.
The dataset is distributed in JSON format. Below is an example instance:
\begin{verbatim}
{
  "id": "000000_000000",
  "original_id": "000000",
  "edit_id": "000000",
  "is_original": true,
  "edit_goal": "Negative",
  "edit_type": "noise",
  "edit_worker": "w82",
  "description": "Overbooked and did not honor reservation time, 
  put on wait list with walk-ins",
  "review_majority": "1",
  "review_label_distribution": {
    "1": 4,
    "2": 1
  },
  "review_workers": {
    "w244": "1",
    "w120": "2",
    "w197": "1",
    "w7": "1",
    "w132": "1"
  },
  "food_aspect_majority": "",
  "ambiance_aspect_majority": "",
  "service_aspect_majority": "Negative",
  "noise_aspect_majority": "unknown",
  "service_aspect_label_distribution": {
    "Negative": 5
  },
  "noise_aspect_label_distribution": {
    "unknown": 4,
    "Negative": 1
  },
  "opentable_metadata": {
    "restaurant_id": 6513,
    "restaurant_name": "Molino's Ristorante",
    "cuisine": "italian",
    "price_tier": "low",
    "dining_style": "Casual Elegant",
    "dress_code": "Smart Casual",
    "parking": "Private Lot",
    "region": "south",
    "rating_ambiance": 1,
    "rating_food": 3,
    "rating_noise": 2,
    "rating_service": 2,
    "rating_overall": 2
  }
}
\end{verbatim}

\textbf{Preprocessing}. As originally defined, \CEBAB does not provide a deterministic rule mapping aspect-level concepts to the overall review label, making it unsuitable for a NeSy setting in which symbolic inference must be explicit.
To address this, we introduce \texttt{review\_majority} as an additional concept, so that the concept vector $\vc$ becomes
\[ 
    \vc = 
        \begin{pmatrix} \texttt{food\_aspect\_majority} \\ \texttt{service\_aspect\_majority} \\ \texttt{noise\_aspect\_majority} \\ \texttt{ambiance\_aspect\_majority} \\ \texttt{review\_majority}
    \end{pmatrix} \in \{0, 1, 2, 3\}^5
\]
with each component $\vc_i \in$ \{\texttt{Positive}, \texttt{Neutral}, \texttt{Negative}, \texttt{unknown}\}.

To make it compliant with the other concepts, the original \texttt{review\_majority} field is quantized as:
\[
\texttt{review\_majority} =
\begin{cases}
    \texttt{unknown} & \text{if } \in \{\texttt{no majority}, \texttt{unknown}\}, \\
    \texttt{Negative} & \text{if } \le 2, \\
    \texttt{Neutral} & \text{if } = 3, \\
    \texttt{Positive} & \text{if } \ge 4,
\end{cases}
\]
We then construct our reasoning task as selecting the category with the maximum count, representing an aggregate of the restaurant's quality:
\[
    \vy = \argmax_{\vc_i}{\sum_{\vc_j \in C} \Ind{\vc_i = \vc_j}}, \quad \vy \in \{\texttt{Positive}, \texttt{Neutral}, \texttt{Negative}, \texttt{unknown}, \texttt{conflict}\},
\]
with the following tie-breaking rule: if both \texttt{Positive} and \texttt{Negative} attain the maximum count, then $y = \texttt{conflict}$; otherwise, ties follow the priority order \texttt{Positive} $\succ$ \texttt{Negative} $\succ$ \texttt{Neutral} $\succ$ \texttt{unknown}.
The dataset provides three different training sets: \texttt{observational}, \texttt{inclusive}, and \texttt{exclusive}. Since we do not analyze counterfactuals or interpretability, we use the \texttt{observational} training split. The dataset contains $1,755$ training samples, $1,673$ validation samples, and $1,689$ test samples. 

\textbf{Reasoning Shortcuts}. \CEBAB, as designed, is not free of reasoning shortcuts. For instance, for a given label such as \texttt{Negative}, multiple concept combinations are possible, such as $\vc = (\texttt{Negative}, \texttt{Negative}, \texttt{Negative}, \texttt{Negative})$ or $ (\texttt{Positive}, \texttt{Negative}, \texttt{Neutral}, \texttt{Negative})$. 
If interested, exact (or approximate) number of reasoning shortcuts can be found by using \texttt{rs-count}~\citep{bortolotti2024benchmark} instatiating the formalized problem.

\subsubsection{\CIFAR}

The \CIFAR dataset \citep{krizhevsky2009cifar} is a standard benchmark for image classification. It contains $60,000$ color images of size $32 \times 32$, divided into $10$ classes, with $50,000$ training images (which we further split into $35,000$ for training and $15,000$ for validation) and $10,000$ test images. The original class labels are:
\[
\texttt{\{airplane, automobile, bird, cat, deer, dog, frog, horse, ship, truck\}}
\]
\textbf{Preprocessing}. As with \CEBAB, the raw \CIFAR dataset is not directly suitable for learning and reasoning tasks. For this reason, we introduce a set of intermediate concepts that are \textit{intended} to be present in all images and useful for distinguishing the $10$ classes.
However, to avoid trivial label leakage and to ensure that some concepts remain insufficient, we \textit{deliberately} design the concept-to-label mapping to be incomplete. In particular, we construct the labeling such that, given only the provided concepts, it is not possible to distinguish between \texttt{automobile} and \texttt{truck}, nor between \texttt{dog} and \texttt{horse}.
In practice, we build a binary concept vector $\mathbf{c} \in \{0,1\}^{7}$ for each \CIFAR image, encoding high-level semantic attributes such as \texttt{ani} (animal), \texttt{wng} (wing), \texttt{met} (metallic), \texttt{whl} (wheels), \texttt{hai} (hairy), \texttt{hrn} (horn) and \texttt{snt} (snout).
These concepts are deterministically derived from the original class label. 
Then, given the concepts, we build the following rules for the \CIFAR labels:
\[
\begin{aligned}
    \texttt{airplane} &\iff \texttt{met} \land \neg \texttt{ani} \land \texttt{wng}, \\
    \texttt{automobile, truck} &\iff \texttt{whl} \land \texttt{met} \land \neg \texttt{wng} \land \neg \texttt{ani}, \\
    \texttt{bird} &\iff \texttt{ani} \land \neg \texttt{met} \land \texttt{wng}, \\
    \texttt{frog} &\iff \texttt{ani} \land \neg \texttt{met} \land \neg \texttt{hai}, \\
    \texttt{deer} &\iff \texttt{ani} \land \neg \texttt{met} \land \neg \texttt{wng} \land \texttt{hai} \land \texttt{hrn}, \\
    \texttt{dog, horse} &\iff \texttt{ani} \land \neg \texttt{met} \land \neg \texttt{wng} \land \texttt{hai} \land \neg \texttt{hrn} \land \texttt{snt}, \\
    \texttt{ship} &\iff \texttt{met} \land \neg \texttt{ani} \land \neg \texttt{whl} \land \neg \texttt{wng}.
\end{aligned}
\]

\textbf{Reasoning Shortcuts}. \CIFAR, as constructed, is not free of reasoning shortcuts. 
For instance, \texttt{airplane} is defined as $\texttt{met} \land \neg \texttt{ani} \land \texttt{wng}$ but nothing is said about other concepts. For example, concept assignments such as $\texttt{met} = \top, \texttt{ani} = \bot, \texttt{wng} = \top, \texttt{hai} = \bot$ and $\texttt{met} = \top, \texttt{ani} = \bot, \texttt{wng} = \top, \texttt{hai} = \top$ both entail the \texttt{airplane} class. For the exact (or approximate) number of reasoning shortcuts, you can instantiate this problem in \texttt{rs-count}~\citep{bortolotti2024benchmark}.

\subsubsection{\RIVAL}

\RIVAL~\citep{moayeri2022rival} is a real-world image dataset built from \texttt{ImageNet}~\citep{deng2009imagenet} that contains natural images spanning the same $10$ semantic categories as \CIFAR: \texttt{airplane}, \texttt{automobile}, \texttt{bird}, \texttt{cat}, \texttt{deer}, \texttt{dog}, \texttt{frog}, \texttt{horse}, \texttt{ship}, and \texttt{truck}. We kept the original train/test split as in~\citep{moayeri2022rival}, but creating a validation set from the train data counting $14,768$ train samples, $6,330$ validation samples and $5,286$ test samples.

\textbf{Preprocessing}. Since \RIVAL shares the same labels as \CIFAR, we adopt the exact same preprocessing pipeline.

\textbf{Reasoning Shortcuts}. Since \RIVAL shares the same concepts, knowledge and concept combinations of \CIFAR (despite having different distributions). Refer to \CIFAR for the reasoning shortcuts discussion.

\subsubsection{\MNISTEO}

Following \citep{bortolotti2024benchmark}, we consider the \MNISTEO dataset, originally introduced in \citep{marconato2023neuro}. This dataset is a structured variant of \MNISTAdd in which only a restricted subset of digit combinations is observed during training. Specifically, the dataset contains exclusively pairs composed either of even digits or of odd digits, defined by the following constraints:

\[
    \begin{array}{ccc}
        \begin{cases}
            \MZero + \MSix &= 6\\ 
            \MTwo + \MEight &= 10\\ 
            \MFour + \MSix &= 10\\ 
            \MFour + \MEight &= 12
        \end{cases}
        &
        \land
        &
        \begin{cases}
            \MOne + \MFive &= 6\\ 
            \MThree + \MSeven &= 10\\ 
            \MOne + \MNine &= 10\\ 
            \MThree + \MNine &= 12
        \end{cases}
    \end{array}
\]

The dataset contains $6,720$ fully annotated training samples, $1,920$ validation samples, and $960$ in-distribution test samples.

\textbf{Reasoning Shortcuts}. \MNISTEO is one of the first datasets introduced to study reasoning shortcuts; for a detailed analysis, see~\citet{marconato2023not}.

\subsubsection{\MNISTHALF}

\MNISTHALF, originally introduced in~\citep{marconato2024bears}, is a biased variant of \MNISTAdd in which only digits from $0$ to $4$ are considered. Moreover, only a subset of digit combinations is observed during training, given by:

\[ \label{eq:mnist-half-combs}
    \begin{cases}
        \MZero + \MZero &= 0\\ 
        \MZero + \MOne &= 1\\ 
        \MTwo + \MThree &= 5\\ 
        \MTwo + \MFour &= 6
    \end{cases}
\]

This restricted supervision induces ambiguity in the digit-to-value mapping and therefore introduces reasoning shortcuts. In contrast to \MNISTEO, digits $0$ and $1$ remain uniquely identifiable from the observed equations, while digits $2$, $3$, and $4$ admit multiple consistent assignments.

The dataset comprises $2,940$ fully annotated training samples, $840$ validation samples, and $420$ in-distribution test samples.

\textbf{Reasoning Shortcuts}. Similarly to \MNISTEO, \MNISTHALF, was introduced to study reasoning shortcuts; for a detailed analysis, see~\citet{marconato2024bears}.

\subsubsection{\MNISTSumXor}

\MNISTSumXor, originally introduced in \citep{bortolotti2025shortcuts}, is a variant of \MNISTAdd that has a different logical rule. As in \MNISTAdd, the input consists of two \MNIST digit images corresponding to digits $(\vx_1, \vx_2)$. However, instead of predicting their arithmetic sum, the target label is defined as the \emph{parity} of the sum. For this reason, it has the same amount of samples as \MNISTAdd per split.

Formally, the task is:
\[
    \vy = \left( \vx_1 + \vx_2 \right) \bmod 2,
\]
where $\vy \in \{0,1\}$, with $0$ denoting an even sum and $1$ an odd sum.
For example:
\begin{align}
    &(\MTwo + \MFour) \mod 2 = 0 \\
    &(\MOne + \MTwo) \mod 2 = 1
\end{align}

\textbf{Reasoning Shortcuts}. Similarly to \MNISTEO and \MNISTHALF, \MNISTSumXor was introduced to study shortcuts; for a detailed analysis, see~\citet{bortolotti2025shortcuts}.

\section{Architecture Details \& Model Selection}
\label{sec:model-selection-super}

\subsection{Model Selection}
\label{sec:model-selection}

All experiments use either the \texttt{Adam}~\citep{kingma2014adam} or \texttt{SGD}~\citep{robbins1951stochastic} optimizer, selected as part of the hyperparameter search. Hyperparameters were selected through Bayesian optimization via \texttt{Optuna}~\cite{akiba2019optuna}, using a \texttt{BoTorch} sampler~\cite{akiba2019optuna} with a \texttt{TPE} sampler~\cite{watanabe2023tree} for the initial startup trials, maximizing the macro F1 score of the final label on a validation set. Each study ran for \texttt{<M>} (at least $20$) trials, with $10$ warm-up trials before the Bayesian model is fitted.
All experiments were run for \texttt{<N>} epochs employing early stopping, saving the checkpoint which achieves the highest F1 score on the validation set.
The learning rate $\gamma$ was selected from $\{10^{-5}, 10^{-4}, 10^{-3}, 10^{-2}, 10^{-1}\}$, and the momentum $\iota$ from $\{10^{-5}, 10^{-4}, 10^{-3}, 10^{-2}, 0.1, 0.5, 0.9, 0.99\}$. The batch size $\nu$ was chosen from $\{32, 64, 128, 256\}$.

For \LTN, a consistent logic family was selected jointly for all operators, choosing among G{\"o}del, Product, and \L{}ukasiewicz. Under Product logic, the implication operator defaults to Goguen implication for consistency. The quantifier parameter $p$ was searched over integer values in the range $[1, 9]$. 
For \LTN, since the training objective can be hard to optimize since the $p$ parameter governs the sharpness of the existential and universal quantifiers. We therefore linearly anneal $p$ from $1$ to the target value $p^*$ across all training epochs:
\begin{equation}
  p_t = \mathrm{round}\!\left(1 + (p^* - 1)\,\frac{t}{T - 1}\right), \quad t = 0, \ldots, T-1,
\end{equation}
where $T$ is the total number of epochs. This keeps the logic soft early in training and progressively makes it crisper as concept representations stabilize.
Similarly, when concept supervision is available, we suppress concept supervision for the first 25\% of training and then linearly ramp the concept-loss weight from $0$ to $1$.

All models were trained for $200$ epochs with early stopping, except for \CEBAB, where models were trained for $500$ epochs. Early stopping selects the checkpoint achieving the highest F1 score on the validation set with respect to the final label. All experiments have been tested with the same $10$ seeds for consistency, specifically $\{1011, 1213, 1415, 1617, 1819, 2021, 2223, 2425, 2627, 2829\}$.

Below, you find all the hyperparameters which performed the best on our datasets.

\textbf{Hyperparameters for \MNISTAdd}.
\begin{itemize}
    \item \DPL, $\gamma = 10^{-1}$, $\nu = 32$, $\iota = 0.1$, optimizer = \texttt{SGD};
    \item \LTN, $\gamma = 10^{-1}$, $\nu = 64$, $\iota = 10^{-5}$, optimizer = \texttt{SGD}, $p = 8$, {\sc AND/OR/IMPLICATION} set to Product.
\end{itemize}

\textbf{Hyperparameters for \CHX}.
\begin{itemize}
    \item \DPL (with and without concept supervision), $\gamma = 10^{-4}$, $\nu = 128$, $\iota = 10^{-2}$, optimizer = \texttt{Adam};
    \item \LTN (with and without concept supervision), $\gamma = 10^{-4}$, $\nu = 256$, $\iota = 0.9$, optimizer = \texttt{Adam}, $p = 4$, {\sc AND/OR/IMPLICATION} set to Product.
\end{itemize}

\textbf{Hyperparameters for \RIVAL}:
\begin{itemize}
    \item \DPL (with concept supervision), $\gamma = 10^{-3}$, $\nu = 128$, $\iota = 0.99$, optimizer = \texttt{SGD};
    \item \DPL (without concept supervision), $\gamma = 10^{-4}$, $\nu = 128$, $\iota = 0.99$, optimizer = \texttt{SGD};
    \item \LTN (with and without concept supervision), $\gamma = 10^{-4}$, $\nu = 256$, $\iota = 0.9$, optimizer = \texttt{Adam}, $p = 4$, {\sc AND/OR/IMPLICATION} set to Product.
\end{itemize}

\textbf{Hyperparameters for \CIFAR}:
\begin{itemize}
    \item \DPL (with concept supervision), $\gamma = 10^{-3}$, $\nu = 128$, $\iota = 0.99$, optimizer = \texttt{SGD};
    \item \DPL (without concept supervision), $\gamma = 10^{-4}$, $\nu = 128$, $\iota = 10^{-2}$, optimizer = \texttt{Adam};
    \item \LTN (with and without concept supervision), $\gamma = 10^{-4}$, $\nu = 256$, $\iota = 0.9$, optimizer = \texttt{Adam}, $p = 4$, {\sc AND/OR/IMPLICATION} set to Product.
\end{itemize}

\textbf{Hyperparameters for \CEBAB}:
\begin{itemize}
    \item \DPL (with concept supervision), $\gamma = 10^{-2}$, $\nu = 64$, $\iota = 1e-05$, optimizer = \texttt{Adam};
    \item \DPL (without concept supervision), $\gamma = 10^{-4}$, $\nu = 128$, $\iota = 10^{-4}$, optimizer = \texttt{Adam};
    \item \LTN (with and without concept supervision), $\gamma = 10^{-4}$, $\nu = 256$, $\iota = 0.9$, optimizer = \texttt{Adam}, $p = 4$, {\sc AND/OR/IMPLICATION} set to Product.
\end{itemize}

\textbf{Hyperparameters for other \MNIST-based datasets}:
\begin{itemize}
    \item \DPL (\MNISTHALF, \MNISTSumXor, \MNISTEO), $\gamma = 10^{-1}$, $\nu = 32$, $\iota = 10^{-1}$, optimizer = \texttt{SGD};
    \item \LTN (\MNISTHALF, \MNISTSumXor), $\gamma = 10^{-4}$, $\nu = 256$, $\iota = 0.9$, optimizer = \texttt{Adam}, $p = 4$, {\sc AND/OR/IMPLICATION} set to Product.
    \item \LTN (\MNISTEO), $\gamma = 10^{-2}$, $\nu = 64$, $\iota = 0.99$, optimizer = \texttt{Adam}, $p = 3$, {\sc AND/OR/IMPLICATION} set to G\"odel.
\end{itemize}

\subsection{Model Architectures}
\label{sec:architecture-details}

We describe below the three neural architectures used as concept extractors $f$ across our experiments.

\textbf{LeNet5}.
LeNet5~\citep{lecun2002gradient} is used as backbone for \MNIST~\citep{lecun1998mnist} images (\ie \MNISTAdd and its variants).
The architecture takes as input a single-channel $28 \times 28$ image and produces a $\mathtt{num\_concepts}$-dimensional output.
\cref{tab:lenet-arch} details the layer configuration.

\begin{table}[h]
    \centering
    \caption{LeNet5 architecture used for \MNISTAdd and its variants.}
    \label{tab:lenet-arch}
    \scalebox{0.9}{
    \begin{tabular}{llll}
        \toprule
        \textsc{Input} & \textsc{Layer Type} & \textsc{Parameter} & \textsc{Activation} \\
        \midrule
        $(1, 28, 28)$   & Conv2d     & depth$=6$, kernel$=5$, stride$=1$, padding$=2$  & Tanh \\
        $(6, 28, 28)$   & AvgPool2d  & kernel$=2$, stride$=2$                          & \\
        $(6, 14, 14)$   & Conv2d     & depth$=16$, kernel$=5$, stride$=1$              & Tanh \\
        $(16, 10, 10)$  & AvgPool2d  & kernel$=2$, stride$=2$                          & \\
        $(16, 5, 5)$    & Flatten    & dim$=400$                                        & \\
        $(400)$         & Linear     & $400 \to 120$                                   & Tanh \\
        $(120)$         & Linear     & $120 \to 84$                                    & Tanh \\
        $(84)$          & Linear     & $84 \to \mathtt{num\_concepts}$                  & \\
        \bottomrule
    \end{tabular}
    }
\end{table}

\textbf{ResNet18}.
ResNet18~\citep{he2016deep} is used as backbone for higher-resolution image datasets (\ie \CHX, \CIFAR, and \RIVAL).
We use weights pretrained on ImageNet~\citep{deng2009imagenet} for both \CIFAR and \CHX, but not for \RIVAL, as its images are sourced from the same \texttt{ImageNet}~\citep{deng2009imagenet} dataset on which ResNet-18 was pretrained.
For all datasets, we resize the images to $224 \times 224$ to make them compatible with the ResNet18 architecture.
The final fully connected layer is replaced with a linear layer mapping to $\mathtt{num\_concepts}$ outputs.
\cref{tab:resnet-arch} details the layer configuration.

\begin{table}[h]
    \centering
    \caption{ResNet18 architecture used for \CHX, \CIFAR, and \RIVAL.}
    \label{tab:resnet-arch}
    \scalebox{0.8}{
    \begin{tabular}{llll}
        \toprule
        \textsc{Input} & \textsc{Layer Type} & \textsc{Parameter} & \textsc{Activation} \\
        \midrule
        $(3, 224, 224)$     & Conv2d              & depth$=64$, kernel$=7$, stride$=2$, padding$=3$                    & \\
        $(64, 112, 112)$    & BatchNorm2d         & dim$=64$                                                           & ReLU \\
        $(64, 112, 112)$    & MaxPool2d           & kernel$=3$, stride$=2$, padding$=1$                                & \\
        $(64, 56, 56)$      & 2$\times$BasicBlock & depth$=64$, kernel$=(1,1)$, stride$=(1,1)$, padding$=(1,1)$        & \\
        $(128, 28, 28)$     & 2$\times$BasicBlock & depth$=128$, kernel$=(1,1)$, stride$=(2,2)$, padding$=(1,1)$       & \\
        $(256, 14, 14)$     & 2$\times$BasicBlock & depth$=256$, kernel$=(1,1)$, stride$=(2,2)$, padding$=(1,1)$       & \\
        $(512, 7, 7)$       & 2$\times$BasicBlock & depth$=512$, kernel$=(1,1)$, stride$=(2,2)$, padding$=(1,1)$       & \\
        $(512, 7, 7)$       & AvgPool2d           & dim$=(1,1)$                                                        & \\
        $(512, 1, 1)$       & Flatten             & dim$=512$                                                          & \\
        $(512)$             & Linear              & $512 \to \mathtt{num\_concepts}$                                    & \\
        \bottomrule
    \end{tabular}
    }
\end{table}

\textbf{BERT}.
\texttt{bert-base-uncased}~\citep{devlin2019bert} is used as backbone for the text dataset \CEBAB.
We adopt the standard pretrained \texttt{bert-base-uncased} encoder, freezing its weights, and append a trainable MLP classification head producing $\mathtt{num\_concepts} \times \mathtt{num\_concept\_dim}$ logits, which are reshaped into per-concept predictions of dimension $\mathtt{num\_concept\_dim}$.
\cref{tab:bert-arch} details the layer configuration.
\cref{tab:bert-arch} details the layer configuration.

\begin{table}[h]
    \centering
    \caption{BERT architecture used for \CEBAB.}
    \label{tab:bert-arch}
    \scalebox{0.7}{
    % [inline block 0: 28 envs, 47265 chars in 28 pieces, piece 1 here, a bare % at each other -> data_tex | \begin{tabular}{llll}         \toprule...]

    }
\end{table}

\section{Additional Experiments}
\label{sec:additional-experiments}

In this section, we report experiments on additional datasets and under different settings (with or without concept supervision), as well as configurations involving e-values. All tables, by default, use concept supervision unless otherwise specified. We also provide two ablation studies on the proposed method for encoding conformal predictions in NeSy-CBMs, along with a runtime analysis.

\begin{table*}[!h]
    \centering
    \scriptsize
%

    \caption{Results for \DPL (without concept supervision) on \MNISTAdd.}
    \label{tab:dpl-mnist-add-no-cs}
\end{table*}

\begin{table*}[!h]
    \centering
    \scriptsize
%

    \caption{Results for \LTN (without concept supervision) on \MNISTAdd.}
    \label{tab:ltn-mnist-add-no-cs}
\end{table*}

\begin{table*}[!h]
    \centering
    \scriptsize
%

    \caption{Results for \DPL (without concept supervision) on \CEBAB.}
    \label{tab:dpl-cebab-no-cs}
\end{table*}

\begin{table*}[!h]
    \centering
    \scriptsize
%

    \caption{Results for \DPL (with concept supervision) on \CEBAB.}
    \label{tab:dpl-cebab}
\end{table*}

\begin{table*}[!h]
    \centering
    \scriptsize
%

    \caption{Results for \LTN (without concept supervision) on \CEBAB.}
    \label{tab:ltn-cebab-no-cs}
\end{table*}

\begin{table*}[!h]
    \centering
    \scriptsize
%

    \caption{Results for \LTN (with concept supervision) on \CEBAB.}
    \label{tab:ltn-cebab}
\end{table*}

\begin{table*}[!h]
    \centering
    \scriptsize
%

    \caption{Results for \DPL (without concept supervision) on \CHX.}
    \label{tab:dpl-chx-no-cs}
\end{table*}

\begin{table*}[!h]
    \centering
    \scriptsize
%

    \caption{Results for \DPL (with concept supervision) on \CHX.}
    \label{tab:dpl-chx}
\end{table*}

\begin{table*}[!h]
    \centering
    \scriptsize
%

    \caption{Results for \LTN (without concept supervision) on \CHX.}
    \label{tab:ltn-chx-no-cs}
\end{table*}

\begin{table*}[!h]
    \centering
    \scriptsize
%

    \caption{Results for \LTN (with concept supervision) on \CHX.}
    \label{tab:ltn-chx}
\end{table*}

\begin{table*}[!h]
    \centering
    \scriptsize
%

    \caption{Results for \DPL (without concept supervision) on \CIFAR.}
    \label{tab:dpl-cifar-no-cs}
\end{table*}

\begin{table*}[!h]
    \centering
    \scriptsize
%

    \caption{Results for \DPL (with concept supervision) on \CIFAR.}
    \label{tab:dpl-cifar}
\end{table*}

\begin{table*}[!h]
    \centering
    \scriptsize
%

    \caption{Results for \LTN (without concept supervision) on \CIFAR.}
    \label{tab:ltn-cifar-no-cs}
\end{table*}

\begin{table*}[!h]
    \centering
    \scriptsize
%

    \caption{Results for \LTN (with concept supervision) on \CIFAR.}
    \label{tab:ltn-cifar}
\end{table*}

\begin{table*}[!h]
    \centering
    \scriptsize
%

    \caption{Results for \DPL (without concept supervision) on \RIVAL.}
    \label{tab:dpl-rival-no-cs}
\end{table*}

\begin{table*}[!h]
    \centering
    \scriptsize
%

    \caption{Results for \DPL (with concept supervision) on \RIVAL.}
    \label{tab:dpl-rival}
\end{table*}

\begin{table*}[!h]
    \centering
    \scriptsize
%

    \caption{Results for \LTN (without concept supervision) on \RIVAL.}
    \label{tab:ltn-rival-no-cs}
\end{table*}

\begin{table*}[!h]
    \centering
    \scriptsize
%

    \caption{Results for \LTN (with concept supervision) on \RIVAL.}
    \label{tab:ltn-rival}
\end{table*}

\begin{table*}[!h]
    \centering
    \scriptsize
%

    \caption{Results for \DPL (without concept supervision) on \MNISTEO.}
    \label{tab:dpl-mnisteo-no-cs}
\end{table*}

\begin{table*}[!h]
    \centering
    \scriptsize
%

    \caption{Results for \LTN (without concept supervision) on \MNISTEO.}
    \label{tab:ltn-mnisteo-no-cs}
\end{table*}

\begin{table*}[!h]
    \centering
    \scriptsize
%

    \caption{Results for \DPL (without concept supervision) on \MNISTHALF.}
    \label{tab:dpl-mnisthalf-no-cs}
\end{table*}

\begin{table*}[!h]
    \centering
    \scriptsize
%

    \caption{Results for \LTN (without concept supervision) on \MNISTHALF.}
    \label{tab:ltn-mnisthalf-no-cs}
\end{table*}

\begin{table*}[!h]
    \centering
    \scriptsize
%

    \caption{Results for \DPL (without concept supervision) on \MNISTSumXor.}
    \label{tab:dpl-mnistsump-no-cs}
\end{table*}

\begin{table*}[!h]
    \centering
    \scriptsize
%

    \caption{Results for \LTN (without concept supervision) on \MNISTSumXor.}
    \label{tab:ltn-mnistsump-no-cs}
\end{table*}

\begin{table*}[!h]
    \centering
    \centering
\scriptsize
\setlength{\tabcolsep}{4pt}
\scalebox{0.9}{
%
}

    \caption{E-prediction theoretical coverage target ({\sc $1 - \mathbb{E}[\alpha]$, $1 - \mathbb{E}[\beta]$}, respectively) results on \CHX, averaged over $10$ seeds ($\text{mean}\pm\text{std}$). {\sc Const.} is the imposed set-size constraint (--\,if unconstrained); {\sc Size} is the average prediction-set size; {\sc Cov.} is the empirical coverage.}
    \label{tab:econformal}
\end{table*}

\begin{table*}[!h]
    \centering
    \scriptsize
\scalebox{0.8}{
%
}

    \caption{Full results over the $8$ datasets for \DPL. $\square$ indicates no concept supervision.}
    \label{tab:global-dpl}
\end{table*}

\begin{table*}[!h]
    \centering
    \scriptsize
\scalebox{0.8}{
%
}

    \caption{Full results over the $8$ datasets for \LTN. $\square$ indicates no concept supervision.}
    \label{tab:global-ltn}
\end{table*}

\begin{figure*}[!h]
    \centering
    \begin{subfigure}[b]{0.48\linewidth}
        \centering
        \includegraphics[width=\linewidth]{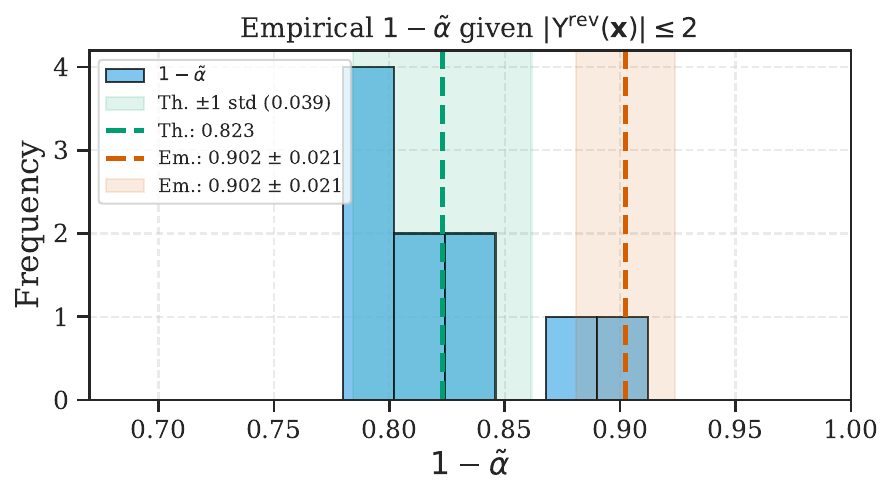}
        \caption{Label-fixed: $|\Upsilon^{\sf rev}(\mathbf{x})| \leq 2$}
        \label{fig:app-label-fixed}
    \end{subfigure}
    \hfill
    \begin{subfigure}[b]{0.48\linewidth}
        \centering
        \includegraphics[width=\linewidth]{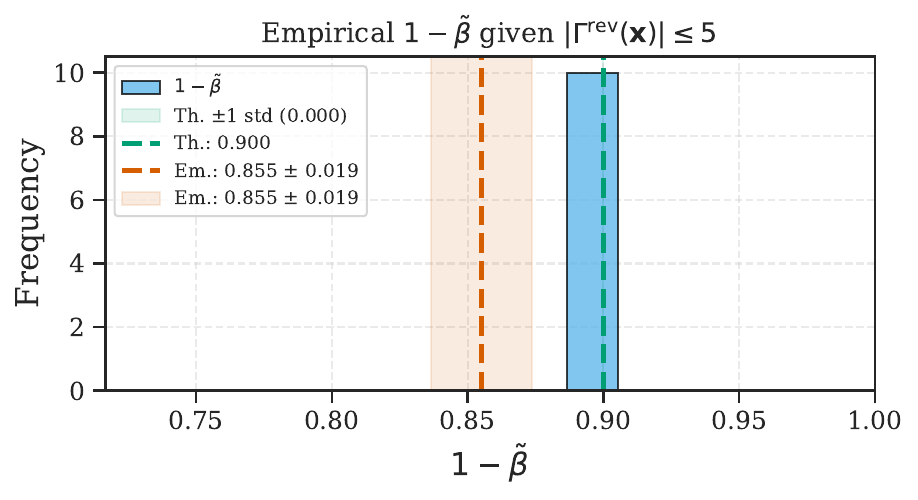}
        \caption{Concept-fixed: $|\Gamma^{\sf rev}(\mathbf{x})| \leq 5$}
        \label{fig:app-concept-fixed}
    \end{subfigure}
    \caption{E-conformal coverage under single-sided constraints on \CHX averaged over $10$ seeds. Empirical coverage closely tracks the
    theoretical guarantee in both regimes.}
    \label{fig:app-single-sided}
\end{figure*}

\begin{figure*}[!h]
    \label{fig:joint-failure}
    \centering
    \includegraphics[width=0.9\linewidth]{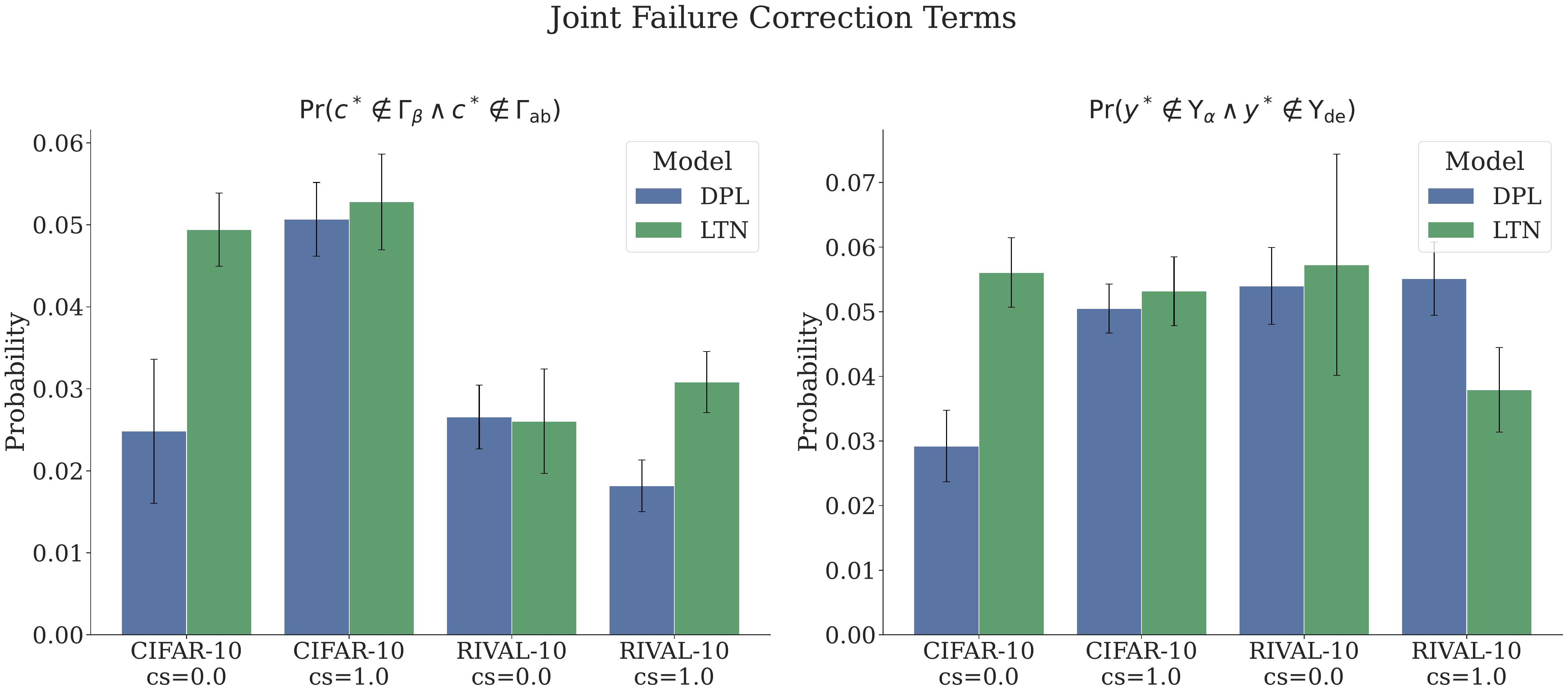}
    \caption{Joint failure probabilities on \CIFAR and \RIVAL. Even if low, this estimate contributes to tightening the lower bound in~\cref{prop:joint-coverage} and~\cref{prop:evalues-joint-coverage}.}
\end{figure*}

\begin{figure*}[!h]
    \centering
    \includegraphics[width=0.8\linewidth]{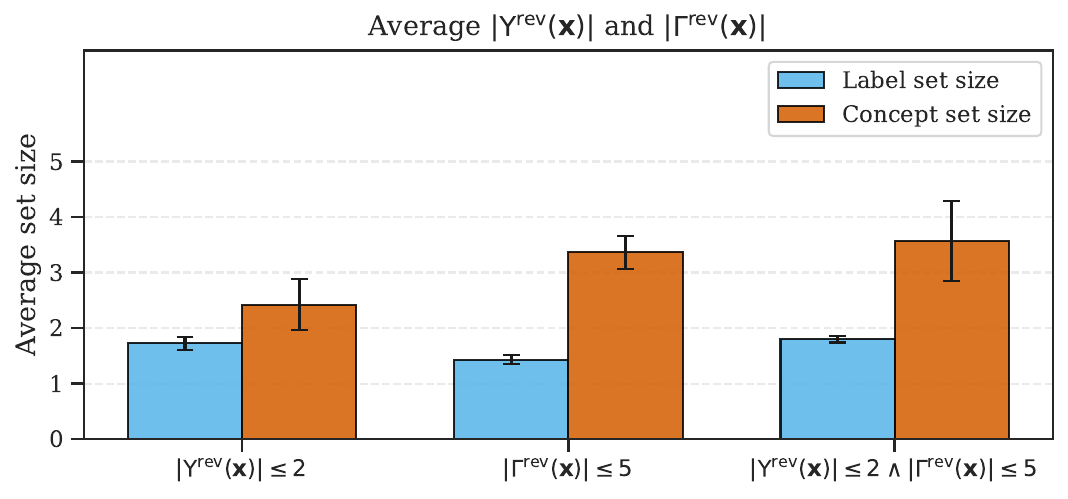}
    \caption{Average prediction-set sizes across constraint regimes; error bars show $\pm 1$ std across $10$ seeds.}
    \label{fig:app-set-sizes}
\end{figure*}

\begin{figure*}[!h]
    \centering
    \includegraphics[width=\linewidth]{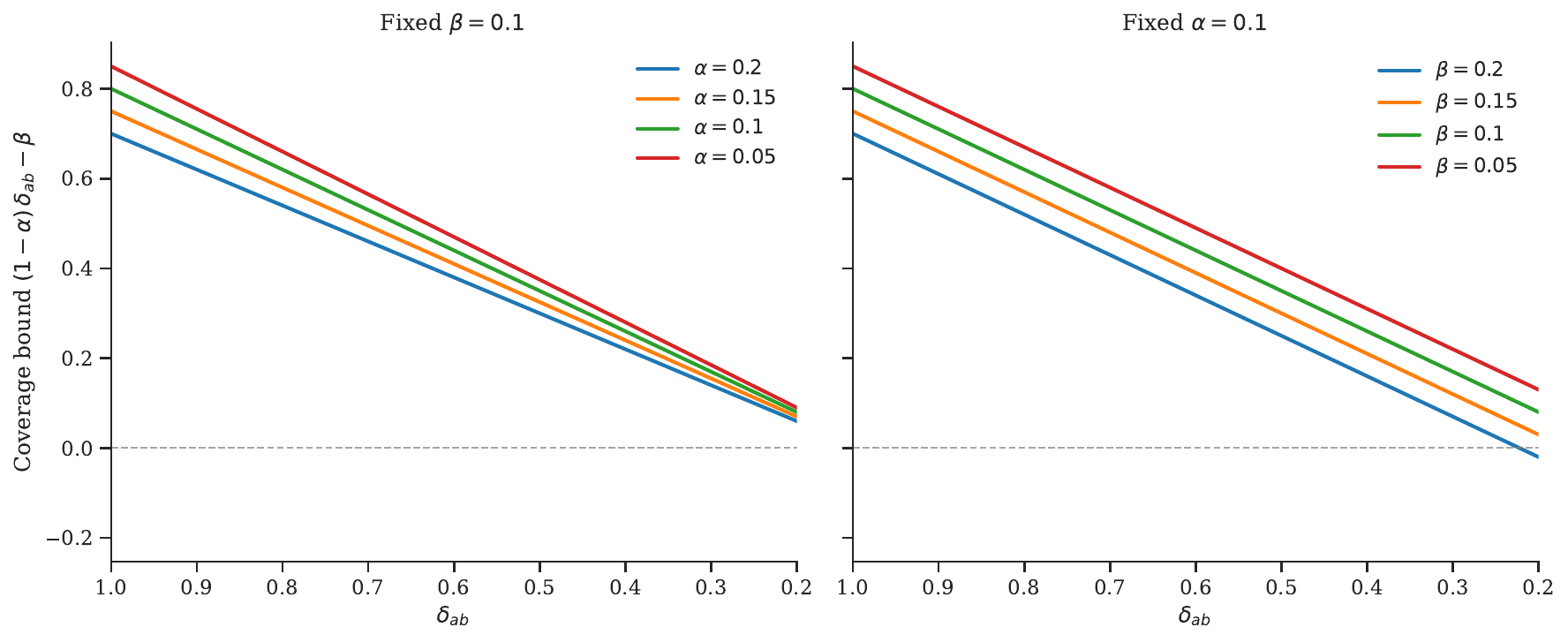}
    \caption{Synthetic experiment showing $(1-\alpha)\delta_{ab} - \beta$ for varying $\alpha$ (left) and $\beta$ (right) starting at nominal guarantee of $1 - \alpha = 1 - \beta = 0.9$. The bound decrease linearly depending on the quality of the inference layer $g$.}
    \label{fig:bound-analysis}
\end{figure*}

\clearpage

\subsection{Estimating $\delta_{\mathrm{ab}}$ and $\delta_{\mathrm{de}}$}
\label{sec:delta-estimation}

We estimate the soundness gaps empirically as
\begin{align}
\hat{\delta}_{\mathrm{de}}
  &= \frac{1}{|\mathcal{D}_{\mathrm{test}}|}
     \sum_{(\vx, \vc^*, \vy^*) \in \mathcal{D}_{\mathrm{test}}}
     \Ind{g^\dagger(\vc^*) = \vy^*}, \\
\hat{\delta}_{\mathrm{ab}}
  &= \frac{1}{|\mathcal{D}_{\mathrm{test}}|}
     \sum_{(\vx, \vc^*, \vy^*) \in \mathcal{D}_{\mathrm{test}}}
     \Ind{\vc^* \in g^{-\dagger}(\vy^*)}.
\end{align}
where $\mathcal{D}_{\mathrm{test}}$ is the test set. \RIVAL and \CIFAR are constructed such that $4$ of $10$ classes share concepts with another class, so $\delta_{\mathrm{de}} = 0.8$ is a structural property of the dataset by design.

\begin{table}[!h]
\scriptsize
\centering
\label{tab:delta-estimates}
\begin{tabular}{lcc}
\toprule
    \textsc{Dataset} & $\hat{\delta}_{\mathrm{ab}}$ & $\hat{\delta}_{\mathrm{de}}$ \\
\midrule
    \MNISTAdd  & $1.00 \pm 0.00$ & $1.00 \pm 0.00$ \\
    \MNISTHALF & $1.00 \pm 0.00$ & $1.00 \pm 0.00$ \\
    \MNISTSumXor & $1.00 \pm 0.00$ & $1.00 \pm 0.00$ \\
    \MNISTEO   & $1.00 \pm 0.00$ & $1.00 \pm 0.00$ \\
    \CHX     & $1.00 \pm 0.00$ & $1.00 \pm 0.00$ \\
    \CEBAB     & $1.00 \pm 0.00$ & $1.00 \pm 0.00$ \\
    \CIFAR   & $1.00 \pm 0.00$ & $0.79 \pm 0.01$ \\
    \RIVAL   & $1.00 \pm 0.00$ & $0.79 \pm 0.01$ \\
\bottomrule\\
\end{tabular}
\caption{Empirical estimates of $\delta_{\mathrm{ab}}$ and $\delta_{\mathrm{de}}$ on the test set, averaged over $10$ seeds (mean $\pm$ std).}

\end{table}

\subsection{Product vs. Average e-value Aggregation: when Assumptions do not hold}
\label{sec:product-vs-average}

\begin{figure}[h]
    \centering
    \begin{subfigure}[b]{0.48\linewidth}
        \centering
        \includegraphics[width=\linewidth]{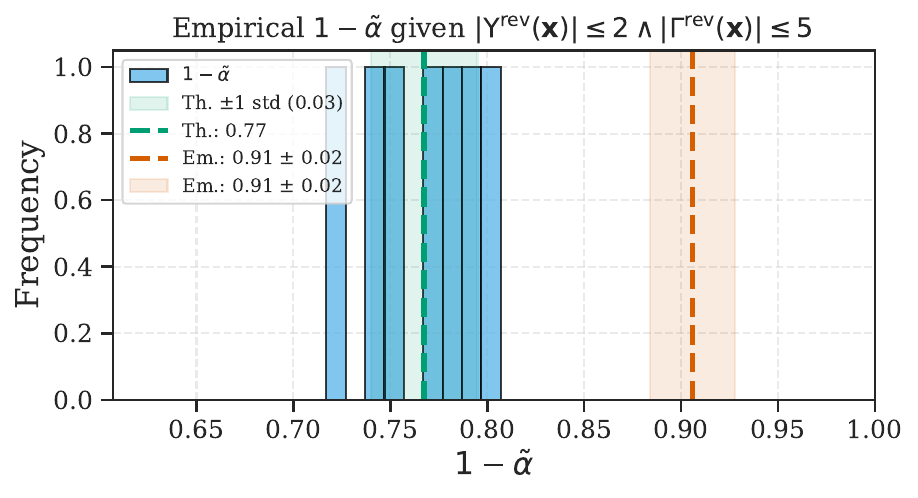}
        \caption{Label coverage}
    \end{subfigure}
    \hfill
    \begin{subfigure}[b]{0.48\linewidth}
        \centering
        \includegraphics[width=\linewidth]{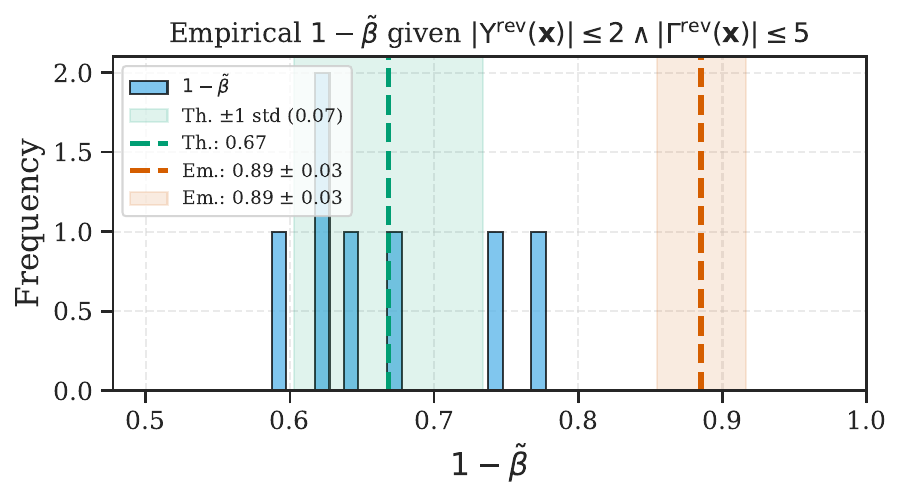}
        \caption{Concept coverage}
    \end{subfigure}
    \caption{\EMethod on \CHX with \DPL under fixed regime $|\Yrev| \leq 2 \land |\Crev| \leq 5$, bootstrapped over $100$ calibration resamples and averaged across $10$ seeds. Empirical coverage (\textcolor{orange}{orange}) exceeds the theoretical target (\textcolor{ForestGreen}{green}), while sizes remain \emph{strictly} below the imposed budgets. Results using \textbf{product} as e-value aggregator.}
    \label{fig:e-values-ablation}
\end{figure}

\begin{figure}[h!]
    \centering
    \includegraphics[width=0.8\linewidth]{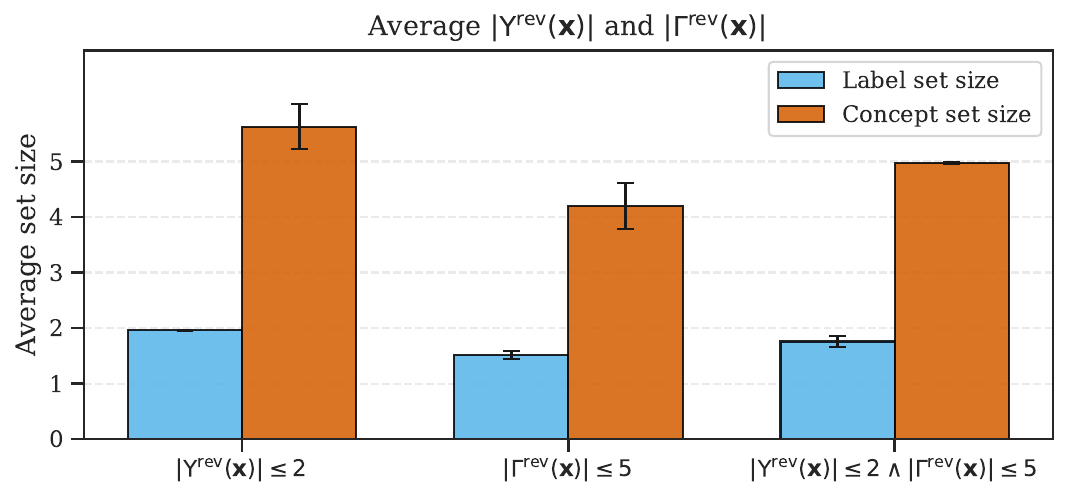}
    \caption{Average set-sizes using \EMethod and \textbf{product} as e-value aggregator.}
    \label{fig:set-sizes-ablation}
\end{figure}

As an additional ablation, we examine what happens when there is data leakage in the calibration process, as well as when we employ the \emph{product} as an e-value aggregator.
Specifically, we used the entire test set for evaluation. In this case, leakage occurs because the model is selected via early stopping on the validation set and is therefore influenced by it, which violates the \emph{exchangeability} assumption.

We evaluate \EMethod on \CHX, fixing the label and concept budgets to $2$ and $5$, respectively, and using \textbf{product} as the e-value aggregator. This approach is theoretically justified only under the assumption of \emph{conditional independence} of the per-concept e-values given $(\vx, \calD_{\textrm{cal}})$ -- an assumption that does not hold in practice.
For this analysis, we performed an extensive search over the space of $\alpha$ and $\beta$ -- the label and concept miscoverage rates, respectively -- and ran $100$ bootstrap iterations. Specifically, we analyze the following combinations: $(\{0.10, 0.11, \ldots, 0.30\}, \{0.10, 0.30, 0.31, \dots, 0.60\})$.
Surprisingly, despite these violations, \EMethod exhibit stable behavior on the test set. As shown in~\cref{fig:set-sizes-ablation}, the average set sizes are $1.74$ for labels and $4.97$ for concepts, both strictly below their respective budgets. The empirical coverages are $0.91$ and $0.89$, substantially exceeding the theoretical targets $1 - \mathbb{E}[\tilde{\alpha}] = 0.77$ and $1 - \mathbb{E}[\tilde{\beta}] = 0.67$ (see~\cref{fig:e-values-ablation}).

\subsection{The Full Spectrum of NeSy Conformal Prediction}
\label{sec:ablations}

In the main text we compared \method only against \bastani, the strongest competitor. Here we report the full ablation against the four natural conformalization strategies introduced in \cref{sec:other-ablations}: \TaskOnly, \TaskPlusAbduction, \ConceptsOnly, and \ConceptsPlusDeduction. The goal is threefold: (i) confirm that the two-sided joint revision of \method is genuinely necessary and that no simpler one-sided strategy achieves the same trade-off across \textbf{D1}--\textbf{D3}; (ii) quantify the failure modes of each baseline across the eight data sets; and (iii) ground the design choices of \method empirically. The full numerical results are \cref{tab:dpl-mnist-add-no-cs,tab:ltn-mnist-add-no-cs,tab:dpl-cebab-no-cs,tab:dpl-cebab,tab:ltn-cebab-no-cs,tab:ltn-cebab,tab:dpl-chx-no-cs,tab:dpl-chx,tab:ltn-chx-no-cs,tab:ltn-chx,tab:dpl-cifar-no-cs,tab:dpl-cifar,tab:ltn-cifar-no-cs,tab:ltn-cifar,tab:dpl-rival-no-cs,tab:dpl-rival,tab:ltn-rival-no-cs,tab:ltn-rival,tab:dpl-mnisteo-no-cs,tab:ltn-mnisteo-no-cs,tab:dpl-mnisthalf-no-cs,tab:ltn-mnisthalf-no-cs,tab:dpl-mnistsump-no-cs}; we summarise the qualitative results below. Throughout, the nominal coverage targets are $1 - \alpha = 1 - \beta = 0.9$.

\textbf{\TaskOnly offers conformalized labels but uncalibrated, inconsistent concepts}. \TaskOnly applies CP to the label distribution $p_\theta(\vY \mid \vx, \BK)$ directly. As predicted by \cref{eq:task-only-guarantee}, label coverage hits the nominal $1-\alpha=0.9$ target on every data set (\eg $0.90$ on \MNISTAdd, $0.89$ on \CHX with \DPL, $0.90$ on \RIVAL, \CEBAB, and \CIFAR). On the concept side, however, \TaskOnly returns the single $\argmax$ concept (size $1$), so concept coverage collapses to the base concept accuracy of the underlying NeSy-CBM: as low as $0.18$ on \CEBAB (\DPL, supervised), $0.00$ on \CEBAB unsupervised (\DPL and \LTN), $0.00$ and $0.13$ on \MNISTEO (\LTN and \DPL, respectively) and $0.03$ on \MNISTSumXor (\DPL and \LTN). Worse, label consistency drops well below $1$ whenever the conformal label set is non-trivial: \eg $0.20$ on \CHX (\LTN), $0.20$-$0.42$ on \CEBAB, and $0.62$ on \CHX (\DPL, supervised). \TaskOnly thus violates both \textbf{D1} (consistency) and \textbf{D2} (concept-side coverage).

\textbf{\ConceptsOnly achieves conformalized concepts but uncalibrated labels}. \ConceptsOnly applies CP at the concept level using the product construction of \cref{eq:product-concept-set} with Bonferroni's correction (\cref{cor:bonferroni-coverage}). Concept coverage matches the target on most data sets ($0.87$-$1.00$). However, \ConceptsOnly returns the single $\argmax$ label, so the label side inherits the base label accuracy. Label consistency drops: $0.08$-$0.40$ on \MNISTEO, \MNISTHALF, \CEBAB, and \CHX. Bonferroni's correction~(\cref{cor:bonferroni-coverage}) inflates the concept set as the number of concepts grows: $75.06$ on \CEBAB (\DPL, supervised), $57.60$ on \CIFAR (\DPL, unsupervised), $100.00$, the entire product space, on \MNISTSumXor (\LTN) and \MNISTEO (\LTN and \DPL). \ConceptsOnly thus partially satisfies \textbf{D2} on the concept side but fails both \textbf{D1} and \textbf{D3}.

\textbf{\TaskPlusAbduction achieves consistency and coverage but not conciseness}. \TaskPlusAbduction constructs a concept set via abduction, $\Cab = g^{-\dagger}(\Yset)$. This recovers both coverage and consistency at the concept level by construction: on every data set, \TaskPlusAbduction attains high concept coverage $\geq 0.89$ and $1.00$ consistency by construction. The caveat is concept-size. Because abduction indiscriminately includes every concept vector consistent with some label in $\Yset$, the resulting set is an order of magnitude larger than any competitor: $504.38$ and $742.55$ on \CEBAB for \DPL, supervised and unsupervised setting respectively, $1024$ for \LTN instead (the entire concept space), $171.66$ and $288.11$ on \CIFAR for \DPL and \LTN unsupervised, $2.20$-$50.00$ on the \MNIST variants.  \TaskPlusAbduction therefore satisfies \textbf{D1}--\textbf{D2} on the concept side but fails \textbf{D3} dramatically.

\textbf{\ConceptsPlusDeduction achieves consistency but inflated label sets}. \ConceptsPlusDeduction  constructs a label set by deducing $g^{\dagger}(\Cset)$ from the concept set. It achieves both label and concept coverage near the nominal level and $1.00$ consistency by construction (\cref{prop:nesy-forward-coverage}). The failure mode is on the label side: the deduced label set can be substantially larger than \TaskOnly. On \CEBAB (\DPL, supervised), as an example, \ConceptsPlusDeduction produces label sets of size $3.71$ vs.\ $2.41$ of \TaskOnly and on \MNISTHALF (\DPL) $9.00$ vs.\ $0.89$. And the same trend could be observed in every tested datasets. The issue is that every element of $\Cprod$ -- even low-probability ones -- contributes a label to $\Yfwset$. \ConceptsPlusDeduction is therefore a reasonable baseline when label size is not a binding constraint, but it inflates the label set failing \textbf{D3} on the labels.

\textbf{\bastani closes label-side consistency but leaves the concept set inconsistent}.  \bastani~\citep{ramalingam2024uncertainty} revises the label set as $\Yrev = \Yset \cap g^{\dagger}(\Cprod)$, but leaves $\Cprod$ untouched. It therefore produces \emph{identical} label sets to \method by construction, but underperforms on the concept side: concept consistency is typically $0.50$-$0.94$ across the eight data sets (vs.\ $1.00$ for \method), and the concept set is at least as large as \method's in every case where the two differ. Representative gaps include \CEBAB (\DPL, supervised), $0.78\to1.00$ consistency, $75.06\to53.49$ size, \CHX (DPL, supervised) $0.55 \to 1.00$, $5.30 \to 2.90$, \MNISTEO (\DPL) $0.12 \to 1.00$, $100.00 \to	12.13$, \MNISTEO (\LTN) $0.08\to1.00$, $100.00 \to 8.05$ and \MNISTHALF (\DPL) $0.09 \to 1.00$, $25.00 \to 2.20$. The takeaway is that revising only the label side leaves the concept side inconsistent with the deduced labels; the symmetric revision in \method is what closes this gap and delivers \textbf{D1} on \emph{both} sides while preserving \textbf{D2}.

\text{Summary.} Each one-sided baseline achieves one desideratum at the cost of another: \TaskOnly ensures label coverage but sacrifices concept coverage \emph{and} consistency; \ConceptsOnly ensures concept coverage but sacrifices consistency and concept conciseness; \TaskPlusAbduction ensures concept coverage and consistency but sacrifices concept conciseness; \ConceptsPlusDeduction ensures consistency but sacrifices label conciseness; \bastani ensures label consistency but leaves the concept set inflated and inconsistent. \method is the only method to revise both sides symmetrically in a single deduction--abduction step (\cref{prop:joint-fixed-point}) and to attain \textbf{D1}, \textbf{D2}, and \textbf{D3} simultaneously across all eight data sets and both NeSy backbones.

\subsection{Time Overhead Analysis}
\label{sec:runtime}

A key point of \method is that the joint revision operator reaches a fixed point in one step (\cref{prop:one-step}), making the additional cost a single computation of the abductive and deductive images, \ie $\Cab = g^{-\dagger}(\Yset)$ and $\Yfwset = g^{\dagger}(\Cset)$.

In practice, as shown by \cref{fig:time-aggregated,fig:time-per-dataset}, \method is comparable in runtime to all other ablations. While it is naturally slower than \TaskOnly and \bastani since it requires additional revision steps, \cref{prop:joint-fixed-point} ensures it remains tractable thanks to the \emph{\textbf{fixed-point}} theorem~(\cref{prop:joint-fixed-point}).

Looking at the aggregate results in \cref{fig:time-aggregated}, \method exhibits an overhead of approximately $160\%$ relative to \DPL and \LTN when considered individually in the unsupervised setting ($100\%$ horizontal line), and around $120\%$ in the supervised setting. This is comparable to \bastani ($140\%$ and $110\%$, respectively) and to \ConceptsPlusDeduction ($130-150\%$ unsupervised), and only moderately higher than \TaskOnly, which is the closest to the baseline.

Looking at individual datasets in \cref{fig:time-per-dataset}, the overhead of \method is dataset-dependent, as it depends on the number of concepts and labels produced by the conformal method, but remains consistently bounded, except for \CEBAB, where the number of concept combinations is $4\times$ higher than the baseline due to the large number of generated concepts and labels.
Notably, in \CHX -- our operational dataset -- and \RIVAL, the overhead is almost non-existent, demonstrating that in practical applications the overhead is negligible. In fact, \method is well-suited to settings where a human is in the loop and the number of concepts and produced labels is close to one, precisely the scenario in which \method excels.

Furthermore, when employing \method with e-values (\EMethod), as well as in other ablations, the computational overhead is approximately $200-250\%$ in the unsupervised setting and close to $150\%$ in the supervised setting. This is worse than \method even with the speed-up arising from avoiding the Bonferroni correction (\cref{prop:e-world-coverage}), as it eliminates the need to enumerate the full product set (\cref{eq:product-concept-set}).
\EMethod, used in a p-value-like fashion, produces consistently larger concept and label sets, except in \CHX and \RIVAL. Nevertheless, unlike all other approaches, \EMethod can be fixed in size and thus in computational overhead. In fact, with appropriate choices of $\hat{\beta}$ and $\hat{\alpha}$ -- namely those that keep the set size below a given threshold (as shown in~\cref{sec:experiments}) -- we can cap the computational cost, since we know in advance the size of the instances on which the inference layer $g$ will be applied.

\begin{figure*}[h!]
    \centering
    \includegraphics[width=\linewidth]{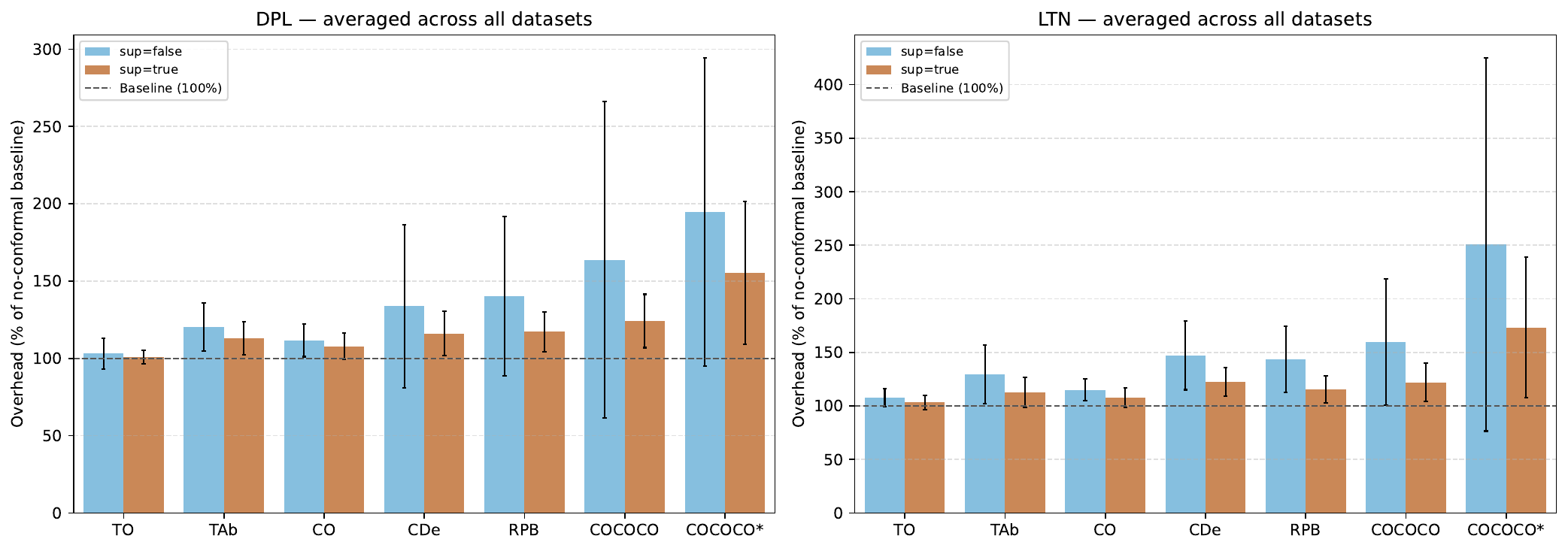}
    \label{fig:time-aggregated}
    \caption{Time overhead (\%) relative to \DPL (left) and \LTN (right) baseline (dashed line, $100\%$) for all methods (\TaskOnly, \TaskPlusAbduction, \ConceptsOnly, \ConceptsPlusDeduction, \bastani, \method and $\method^*$) averaged across all $10$ seeds and all datasets. Blue shows the case where concept supervision was absent, and orange shows the case where it was present. Error bars indicate standard deviation.}
\end{figure*}

\begin{figure*}[h!]
    \centering
    \begin{subfigure}{\linewidth}
        \centering
        \includegraphics[width=0.85\linewidth]{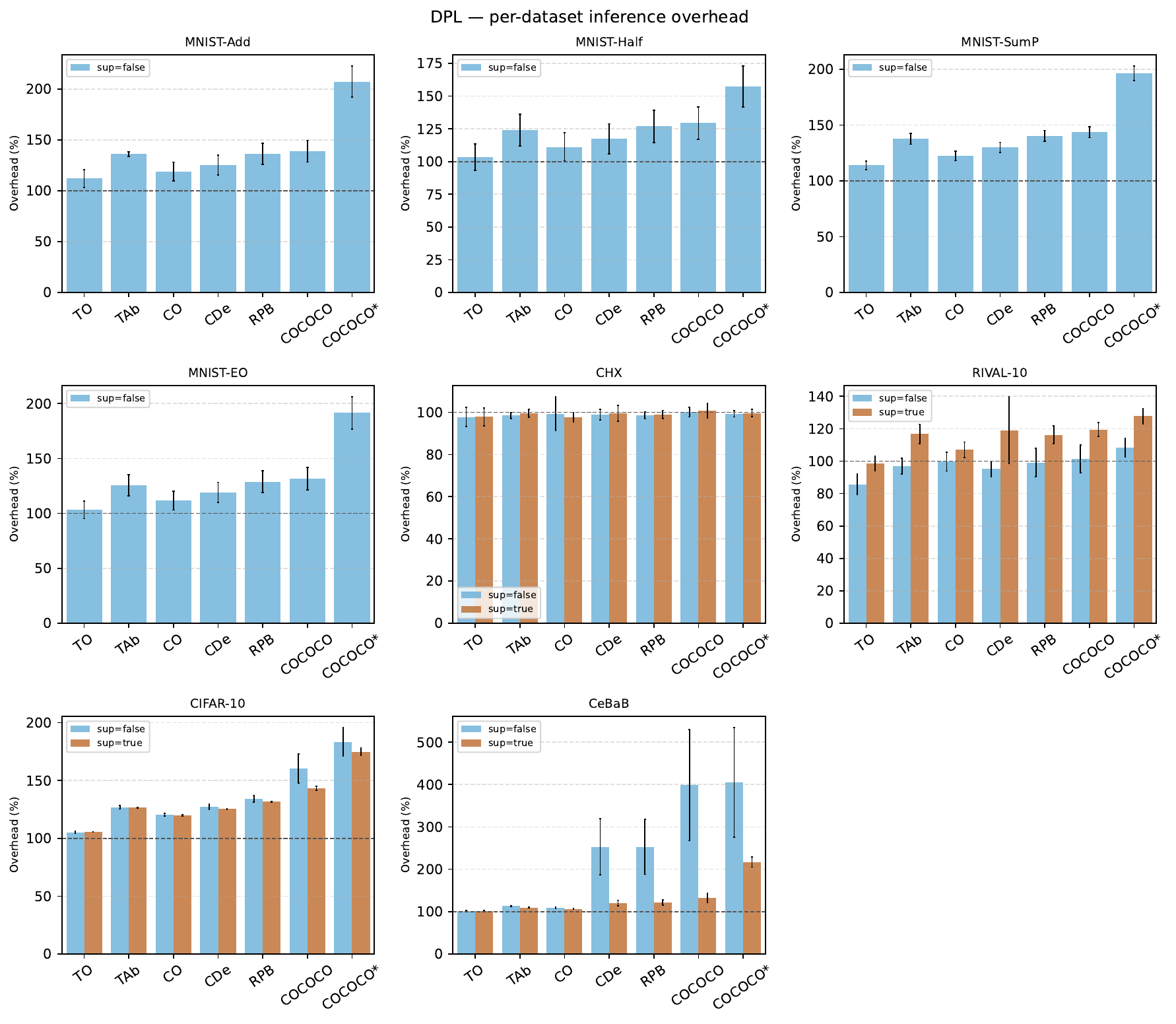}
    \end{subfigure}
    \vspace{0.5em}
    \begin{subfigure}{\linewidth}
        \centering
        \includegraphics[width=0.85\linewidth]{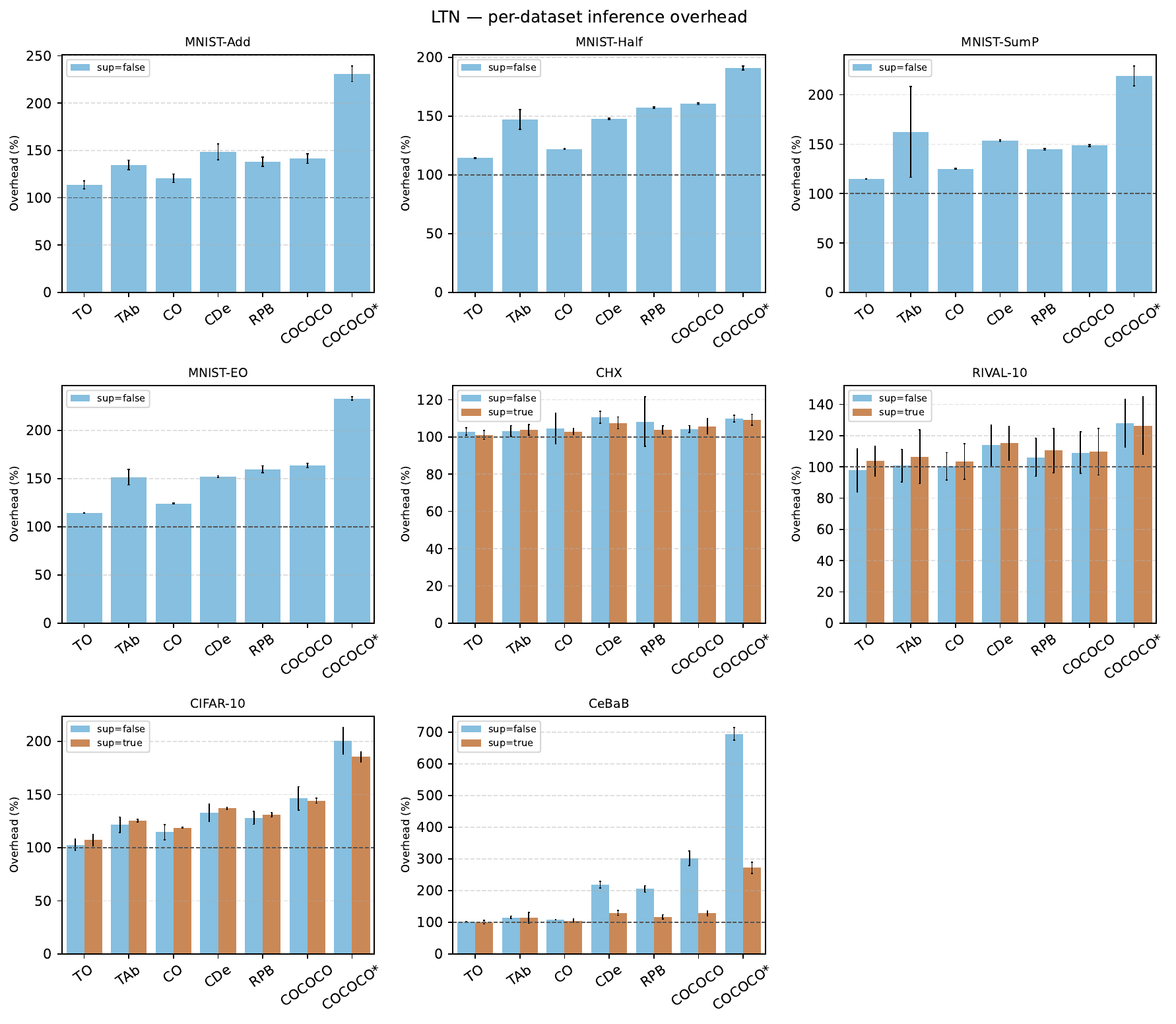}
    \end{subfigure}
    
    \caption{Time overhead (\%) relative to the \DPL and \LTN baseline (dashed line, $100\%$) for all methods (\TaskOnly, \TaskPlusAbduction, \ConceptsOnly, \ConceptsPlusDeduction, \bastani, \method, and $\method^*$), across all $8$ datasets, averaged over $10$ seeds. The top panel reports results with the \DPL baseline, and the bottom panel with the \LTN baseline. Blue indicates absence of concept supervision, while orange indicates its presence. Error bars denote standard deviation.}
    \label{fig:time-per-dataset}
\end{figure*}

\end{document}